\algnewcommand\algorithmicforeach{\textbf{for each}} 
\DeclareMathOperator*{\argmin}{\arg\!\min}
\newcommand{\E}{\mathbb{E}}
\newcommand{\learnmodel}{\operatorname{\mathsf{LearnModel}}}
\newcommand{\LM}{\mathsf{LM}}
\newcommand{\estimatePR}{\operatorname{\mathsf{Estimate}\PR}} 
\newcommand{\minimizePR}{\operatorname{\mathsf{Minimize}\PR}} 
\newcommand{\fhat}{\hat{f}} 
\newcommand{\ER}{\mathsf{ER}} 
\newcommand{\PR}{\mathsf{PR}} 
\newcommand{\PRdagger}{\PR^\dagger} 
\newcommand{\PRtilde}{\widetilde{\PR}} 
\newcommand{\PRhat}{\widehat{\PR}} 
\newcommand{\calA}{\mathcal{A}} 
\newcommand{\calD}{\mathcal{D}} 
\newcommand{\calR}{\mathcal{R}} 
\newcommand{\R}{\mathbb{R}} 
\newcommand{\Z}{\mathbb{Z}} 
\renewcommand{\S}{\mathbb{S}} 
\newcommand{\KL}{\mathsf{KL}}
\newcommand{\TV}{\mathsf{TV}}
\newcommand{\estimatekl}{\operatorname{\mathsf{EstimateKL}}} 
\newtheorem{theorem}{Theorem}
\newtheorem*{theorem*}{Theorem}
\newtheorem*{lemma*}{Lemma}
\newtheorem{definition}{Definition}
\newtheorem*{definition*}{Definition}
\newtheorem{proposition}{Proposition}
\newtheorem*{proposition*}{Proposition}
\newtheorem{claim}{Claim}
\newtheorem*{claim*}{Claim}
\newtheorem{example}{Example}
\newtheorem*{example*}{Example}
\newtheorem*{corollary*}{Corollary}
\newtheorem{assumption}{Assumption}
\newtheorem*{assumption*}{Assumption}
\crefname{claim}{claim}{claims}
\crefname{assumption}{assumption}{assumptions}
\newcommand{\wt}[1]{{\color{blue}(WT: #1)}}
\newcommand{\wt}[1]{}
\newcommand{\OPT}{\texttt{OPT}}
\newcommand{\modelDimen}{d_\Theta}
\newcommand{\modelDiameter}{D_\Theta}
\newcommand{\distributionDimen}{d_\Phi}
\newcommand{\distributionDiameter}{D_\Phi}
\newcommand{\totaltime}{T_{\textsf{total}}}
\newcommand{\distParaMap}{\varphi}
\newcommand{\squishlist}{
\begin{list}{{{\small{$\bullet$}}}}
{\setlength{\itemsep}{3pt}      \setlength{\parsep}{1pt}
\setlength{\topsep}{1pt}       \setlength{\partopsep}{0pt}
\setlength{\leftmargin}{1em} \setlength{\labelwidth}{1em}
\setlength{\labelsep}{0.5em} } }
\newcommand{\squishend}{  \end{list}}
\newcommand{\cc}[1]{\ensuremath{\mathsf{#1}}} 
\newcommand{\xhdr}[1]{\vspace{-5pt}\paragraph*{
\bf {#1.}}}
\theoremstyle{remark}
\newtheorem{remark}[theorem]{Remark}
\icmltitlerunning{Performative Prediction with Bandit Feedback:
Learning through Reparameterization}
\begin{document}

\twocolumn[
\icmltitle{Performative Prediction with Bandit Feedback:\\
Learning through Reparameterization}



\icmlsetsymbol{equal}{*}

\begin{icmlauthorlist}
\icmlauthor{Yatong Chen}{ucsc}
\icmlauthor{Wei Tang}{columbia,cuhk}
\icmlauthor{Chien-Ju Ho}{wustl}
\icmlauthor{Yang Liu}{ucsc}
\end{icmlauthorlist}

\icmlaffiliation{ucsc}{Department of Computer Science and Engineering, University of California, Santa Cruz, California, United States.}
\icmlaffiliation{columbia}{Data Science Institute, Columbia University}
\icmlaffiliation{cuhk}{Department of Decisions, Operations, and Technology, the Chinese University of Hong Kong}
\icmlaffiliation{wustl}{Department of Computer Science and Engineering, Washington University in St. Louis}

\icmlcorrespondingauthor{Yang Liu}{yangliu@ucsc.edu}

\icmlkeywords{performative prediction, zeroth-order optimization, reparametrization, non-convex}

\vskip 0.3in
]

\printAffiliationsAndNotice{} 

\begin{abstract}
Performative prediction, as introduced by \citeauthor{perdomo2020performative}, is a framework for studying social prediction in which the data distribution itself changes in response to the deployment of a model. 
Existing work in this field usually hinges on three assumptions that are easily violated in practice: that the performative risk is convex over the deployed model, that the mapping from the model to the data distribution is known to the model designer in advance, and the first-order information of the performative risk is available. 
In this paper, we initiate the study of performative prediction problems that do not require these assumptions.
Specifically, we develop a {\em reparameterization} framework that reparametrizes the performative prediction objective as a function of the induced data distribution. We then develop a two-level zeroth-order optimization procedure, where the first level performs iterative optimization on the distribution parameter space, and the second level learns the model that induces a particular target distribution 
at each iteration. 
Under mild conditions, this reparameterization allows us to transform the non-convex objective into a convex one and achieve provable regret guarantees. 
In particular, we provide a regret bound that is sublinear in the total number of performative samples taken and is only polynomial in the dimension of the model parameter.

\end{abstract}

\section{Introduction}
\label{sec:introduction}
\emph{Performative prediction}, as introduced by \citeauthor{perdomo2020performative}, provides a framework for studying prediction and risk minimization when the data distribution itself changes in response to the deployment of a model. Such phenomena, usually referred to as "performativity," are prevalent in various social prediction contexts, including education, recommendation systems, and criminal prediction, among others \cite{perdomo2020performative, chen2023model, hardt2016strategic, dong2018strategic, kleinberg20induce}. For instance, consider a college admission process that places significant importance on standardized test scores. This process can incentivize students to invest more effort in test preparation, ultimately leading to a pool of applicants with much higher test scores than initially expected. This phenomenon is also prevalent in real-world applications, particularly in large-scale online recommendation systems, where the high frequency of updates to the recommendation algorithm can reshape users' future behavior. For example, video platforms such as TikTok, Netflix, and YouTube provide personalized recommendations that can influence users' future preferences and lead to shifts in the user-advertiser interaction patterns, thereby creating a dynamic and evolving data distribution.

More formally, consider the standard empirical risk minimization (ERM) problem defined by a loss function $\ell$, a model parameter space $\Theta \subset \R^{d_\Theta}$ where $d_\Theta\in\Z_{>0}$, an instance space $Z = X \times Y$, and a \emph{fixed} data distribution $\calD$ over $Z$. The task is to find a model that minimizes the empirical risk defined as: $\ER(\theta, \calD):= \E_{z \sim \calD} [\ell(z;\theta)].$ Performative prediction extends this learning task by positing that the data distribution $\calD$ is \emph{not} fixed but is instead a function of the model parameter $\theta \in \Theta$. Here, we refer to $\calD(\cdot)$ as a \emph{distribution map}, and $\calD(\theta)$ as the data distribution \emph{induced} by the model $\theta$. The objective is then to minimize the \emph{performative risk}, defined as
\begin{align*}
\PR(\theta, \calD(\theta))
:= \E_{z \sim \calD(\theta)} [\ell(z;\theta)]~.
\end{align*}
Intuitively, the performative prediction risk evaluates the performance of the model $\theta$ on the resulting distribution $\calD(\theta)$ via the loss function $\ell$. When it is clear from the context, we also use $\PR(\theta)$ to shorthand the performative risk.


Optimizing the performative risk is generally challenging. In standard ERM, a convex loss function $\ell$ implies a convex empirical risk. However, as \citet{perdomo2020performative} observed, the performative risk $\PR(\cdot)$ may be non-convex even when the loss $\ell$ itself is convex. For this reason, earlier works \cite{perdomo2020performative, mendler2020stochastic, drusvyatskiy2020stochastic, brown2020performative} then focus on computing a \emph{performative stable} solution instead, which is easier to achieve using standard optimization tools like repeated risk minimization. A performative stable model is loss-minimizing \emph{on the data distribution it induces}, though other models may incur smaller losses on their respective induced distributions. However, as recent works \cite{miller2021outside, izzo2021learn} point out, such stable solutions may be highly suboptimal and, worse yet, may not exist in certain settings.

One major challenge in performative risk minimization is the unknown distribution map between the model parameter $\theta$ and the distribution $\calD(\theta)$ without making any structural assumption. 
For example, one can hardly anticipate the click-through rate of an ad without putting out the ad first. In the language of performative prediction, only by deploying a model $\theta$ can the learner observe data samples that are i.i.d realized from the induced data distribution $\calD(\theta)$. Due to this inherent uncertainty about $\calD(\theta)$, it is impossible to compute the gradient of $\PR(\theta)$ w.r.t $\theta$, not to mention finding a model with the lowest performative risk offline. Instead, the learner must interact with the environment and deploy models $\theta$ to explore the induced distributions $\calD(\theta)$, which involves deploying ``imperfect'' models on decision subjects.
    
In this paper, we propose to measure the loss incurred by deploying a sequence of models $\theta_1, \ldots, \theta_{\totaltime}$ by evaluating the following regret measured with respect to the total number of samples deployed during the process:
\begin{align}
\label{eqn:regret-def}
    \calR_N(\calA, \PR)
    = \sum_{\tau=1}^{\totaltime}  \sum_{i=1}^{n_{\tau}} \ell(z_{\tau}^{(i)}; \theta_{\tau})- N \cdot \PR(\theta_{\OPT})
\end{align}
where $N:=\sum_{\tau=1}^{\totaltime} n_{\tau}$ denotes the total number of observed data samples throughout the process, $\calA$ corresponds to the particular algorithm, and $\PR$ represents the objective function. This regret measures the suboptimality of the deployed sequence of models relative to a performative optimum $\theta_{\OPT} \in \operatorname{argmin}_\theta \mathrm{PR}(\theta)$ in terms of how much loss they incur on the population with $N$ decision subjects.

In contrast to earlier studies that primarily assess the final model's performance based on optimality rather than the cumulative loss incurred throughout the process, we argue that this constitutes a more practical evaluation metric in predictive scenarios involving multiple rounds of human feedback. In particular, since the process of finding the optimal performative model involves deploying sub-optimal models on human agents in the process, it is more appropriate to define regret on the total number of agents that are subjected to the ``imperfect’’ algorithmic system rather than only caring about whether the final model is optimal. 
We believe this provides a unique evaluation metric suitable for performative prediction.   

Later in \Cref{sec:putting-things-together}, we compare our proposed regret definition with the standard regret measured in $\totaltime$ in more detail and show that our algorithm is, in fact, also sublinear in the total deployment steps $\totaltime$. This, combined with the fact that sublinear regret implies model convergence (\Cref{proposition:sublinear-regret-implies-convergence}), also means that our algorithm can guarantee to output a model arbitrarily close to the performative optimal model $\theta_{\OPT}$. 
\vspace{-0.1in}
\subsection{Our Contributions}
Our main contributions are a two-level zeroth order optimization algorithm that achieves a sublinear regret bound measured using the total number of samples and a novel reparametrization framework attempting to tackle a particular non-convex performative prediction problem.
\vspace{-0.1in}
\xhdr{Reparametrization Framework}
Departing from previous work, we allow $\PR(\theta)$ to be non-convex in the model parameter $\theta$, but suppose it is convex in the \emph{data distribution} parameter $\phi \equiv \varphi(\theta)$. Informally, under mild conditions, we show that 
non-convex $\PR(\theta)$ can be reparameterized 
as a new (convex) function $\PRdagger(\phi)$ over the induced data distribution parameter $\phi$. 
We discuss detailed parametrization procedure in \Cref{subsec: overview}. 
\xhdr{Zeroth-Order Optimization Algorithm with Performativity}
Given the parametrization framework proposed above, we propose a two-level zeroth-order optimization procedure, which, to our knowledge, is novel in performative prediction.
We believe our method 
enjoys the following benefits:
\squishlist
    \item \textbf{No Requirement for Gradient Information} Unlike the traditional gradient-based optimization procedure, our method does not require the explicit calculation of \emph{gradients} that may be complex or unavailable.
    \item \textbf{Black-Box Models} Our method can still be effective when dealing with models or systems that are treated as black boxes, where the internal mechanisms are not well understood (such as complicated economic systems) since it doesn't require knowledge of the underlying model structure.
    \item \textbf{Robustness to Noise} In many real-world applications, objective function evaluations may be noisy or subject to uncertainty, such as modeling consumer behavior. Our method can handle noisy evaluations and make decisions that are robust to noise.
\squishend

Our main results can be summarized as follows:
\begin{restatable}[Informal]{thm}{}
There exists an algorithm that, under appropriate conditions, incurs regret
$\widetilde{O}((\modelDimen + \distributionDimen) \cdot N_{\KL}^{1/6} \cdot N^{5/6})$\footnote{$\widetilde{O}(\cdot)$ suppresses polylogarithmic factors in $N$ and the failure probability $1/p$.}
after $N$ performative samples\footnote{Samples that the learner deploys along the way of finding the performative optimal model.} with probability at least $1-p$, where $N_{\KL}$ depends on the sample efficiency of an off-the-shelf estimator for KL divergence, and $\modelDimen$ and $\distributionDimen$ denote the dimension of the model and distribution parameter space, respectively.
\end{restatable}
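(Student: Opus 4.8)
The plan is to combine the reparameterization framework with a careful accounting of the three distinct sources of error in the two-level procedure: (i) the zeroth-order optimization error in the distribution-parameter space $\Phi$, (ii) the error in the inner loop $\learnmodel$ that realizes a target distribution parameter $\phi$ via some model $\theta$ with $\varphi(\theta) \approx \phi$, and (iii) the statistical error from estimating the objective $\PRdagger(\phi)$ (and quantities such as KL divergences) from finitely many performative samples. The skeleton is: first invoke the reparameterization result to replace the non-convex $\PR(\theta)$ by the convex $\PRdagger(\phi)$ over $\Phi \subset \R^{\distributionDimen}$, so that the outer loop becomes a bandit convex optimization problem; then bound the regret of a zeroth-order convex optimization routine (e.g.\ one-point or two-point gradient estimates) on $\PRdagger$, which contributes a term polynomial in $\distributionDimen$ and sublinear in the number of outer iterations; then bound the additional regret incurred because each outer query is answered only approximately — both because $\learnmodel$ returns a model inducing a slightly wrong distribution and because the value is estimated from $n_\tau$ fresh samples whose empirical mean deviates from $\PRdagger$; finally sum these contributions and optimize the per-round sample counts $n_\tau$ (and the number of rounds $\totaltime$) to balance the exploration cost against the statistical precision, yielding the stated $\widetilde O((\modelDimen + \distributionDimen)\cdot N_{\KL}^{1/6}\cdot N^{5/6})$ bound with the claimed high-probability guarantee.

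More concretely, I would proceed in the following order. Step 1: State the standing assumptions (smoothness/Lipschitzness of $\PRdagger$, boundedness of $\ell$, compactness of $\Theta$ and $\Phi$, Lipschitzness of $\varphi$ and of its inverse on the relevant region, and the sample complexity $N_{\KL}$ of the KL-estimation subroutine) and record the convexity of $\PRdagger$ from the reparameterization section. Step 2: Analyze $\learnmodel$: show that with a controlled number of samples it outputs $\theta_\tau$ with $\norm{\varphi(\theta_\tau) - \phi_\tau}$ small with high probability, and translate this into an additive bias in the reported function value using Lipschitzness of $\PRdagger$ and of $\distParaMap$. Step 3: Analyze $\estimatePR$ (or the analogous estimator): a concentration bound (Hoeffding/Bernstein) shows the empirical loss over $n_\tau$ samples is within $\widetilde O(1/\sqrt{n_\tau})$ of $\PRdagger(\phi_\tau)$, with the KL-estimation error feeding in an extra $N_{\KL}$-dependent term. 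Step 4: Feed this ``noisy zeroth-order oracle'' into the regret analysis of the outer bandit-convex-optimization loop; the exploration radius $\delta$, smoothing, and number of rounds $\totaltime$ are free parameters. Step 5: Write total regret as (exploration regret) $+$ (bias from $\learnmodel$) $+$ (estimation error) $+$ (optimization gap of the convex routine), and tune $\delta$, $n_\tau$, and $\totaltime$ subject to $N = \sum_\tau n_\tau$ to equalize the dominant terms, which is what produces the $N^{5/6}$ and $N_{\KL}^{1/6}$ exponents and the linear dependence on $\modelDimen + \distributionDimen$. Step 6: Apply a union bound over all rounds to get the overall failure probability $p$, absorbing the $\log(1/p)$ and $\log N$ factors into $\widetilde O(\cdot)$.

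I expect the main obstacle to be the interaction between the two levels: the outer zeroth-order method needs accurate function evaluations at \emph{prescribed} query points $\phi_\tau$, but the inner loop can only \emph{approximately} realize those points, and the cost of improving that approximation is itself measured in performative samples that enter the regret. Making this rigorous requires showing that the bias introduced by $\learnmodel$ does not destroy the convex-optimization guarantee — essentially treating it as adversarial but bounded noise on the gradient estimate — and simultaneously that the number of inner-loop samples can be kept small enough not to dominate $N$. A secondary difficulty is ensuring the reparameterization map $\distParaMap$ and its (local) inverse behave well enough along the entire optimization trajectory, not just near $\theta_\OPT$; if $\learnmodel$ ever steers toward regions where $\distParaMap$ is poorly conditioned, the Lipschitz constants blow up. The rest — concentration inequalities, the union bound, and the final parameter tuning — I expect to be routine once the noisy-oracle abstraction is set up cleanly.
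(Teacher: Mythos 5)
Your plan follows essentially the same route as the paper: reparameterize to the convex $\PRdagger$ over $\Phi$, run a two-point zeroth-order method in the outer loop, treat the inner loop's inaccuracy as bounded bias on the gradient estimates, apply Hoeffding-type concentration to the sample-based evaluations, balance the error terms against the total sample budget $N$, and union-bound the failure probabilities. The only substantive detail you leave abstract — how $\learnmodel$ certifies $\|\varphi(\theta)-\phi\|$ small — is handled in the paper by running a second zeroth-order convex optimization on the KL objective and converting its regret to parameter distance via Pinsker's inequality, and the square root in that conversion is precisely what produces the $N_{\KL}^{1/6}N^{5/6}$ exponents you would need to recover in your Step 5.
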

The $N_\KL$ term in our regret depends on the sample efficiency of the estimator for KL divergence. The detailed discussion is provided in \Cref{sec:learnmodel}. 

\subsection{Related Work}
Our work most closely relates to performative prediction and zeroth-order optimization. Due to page limit, we include additional related work in \Cref{sec:additional-related-work}, including detailed comparisons of our work to three closely related jobs \cite{jagadeesan2022regret, miller2021outside, maheshwari2022zeroth}, and more recent developments of performative prediction.

\xhdr{Performative Prediction} 
Performative Prediction, first explored in \citet{perdomo2020performative}, has recently received 
many follow-up works, including but not limited to \citet{miller2021outside, izzo2021learn,drusvyatskiy2020stochastic, mendler2020stochastic,brown2020performative, jagadeesan2022regret,dong2021approximate,cutler2021stochastic} and \citet{piliouras2022multi}.
These works mostly focus on the performative stability and the performative optimality, 
including developing an algorithmic procedure that converges 
to performatively stable or optimal points. 
Similar to this line of research \cite{dong2021approximate,jagadeesan2022regret,izzo2021learn,miller2021outside},
our work also focuses on performative optimality.

\xhdr{Zeroth-Order Optimization} 
Our algorithms and techniques are based on the line of work on zeroth-order optimization (also known as bandit optimization) initiated by \citet{flaxman2005online}, which studies how to optimize an unknown convex function $f$ using only function value query access to $f$. 
\citet{agarwal2010optimal} and \citet{shamir2017optimal}  later
extend the technique that allows multiple points query and show that
two points suffice to guarantee that the regret
bounds that closely resemble the regret bounds for the full information case. 
The reparameterization approach proposed in our paper mirrors the intuition behind the algorithms proposed for 
learning from revealed feedback or preferences (see, e.g., \citet{roth2016watch,zadimoghaddam2012efficiently, dong2018strategic}), 
which consider a \emph{Stackelberg game} involving a utility-maximizing learner and strategic agent.
Our work, focusing on performative prediction with an environment response exogenously characterized by a distribution map $\calD(\cdot)$, differs from theirs in problem consideration.
\subsection{Key Notations}

Let $\modelDimen\in \Z_{>0}$ denote the dimension of the model parameter $\theta$, 
and let $\modelDiameter := \sup\{\|\theta-\theta'\|, \forall \theta,\theta' \in \Theta\}$
denote the diameter of the model parameter space $\Theta$. 
The data distribution $\calD(\theta)$ has a parametric 
continuously differentiable density $p(z; \varphi(\theta))$ where $\varphi(\theta)$ denote the distribution parameter for $\calD(\theta)$. We use $\varphi(\cdot)$ to denote the distribution parameter mapping while $\phi$ to denote a given distribution parameter. 
Let $\distributionDimen\in \Z_{>0}$ denote the dimension of the model parameter $\phi$, 
and let $\distributionDiameter := \sup\{\|\phi-\phi'\| ~|, \forall \phi,\phi' \in \Phi\}$
denote the diameter of the model parameter space $\Phi$. When it is clear from the content, we use $\varphi(\theta)$ to represent $\calD(\theta)$ the distribution $\theta$ induces. Let $\vartheta^*(\phi)$ denote the \emph{optimal} model parameter that induces a specific target distribution parameter $\phi$ -- in case of having multiple model parameters that potentially induce the same distribution parameter $\phi$, $\vartheta^*(\phi)$ is the one that achieves the minimum performative prediction risk. 
\vspace{-0.1in}
\subsection{Structure of the Paper} 
The rest of the paper is organized as follows: 
In \Cref{sec:preliminaries}, we introduce the problem formulation and provide a warm-up setting when $\PR(\theta)$ is convex over the model parameter $\theta$.
Using this simple setting, we introduce the zeroth-order optimization technique we use, which will serve as the building block to solve for a more complicated setting (i.e., when $\PR(\theta)$ is 
{\em not} convex over $\theta$). 
We also present a fundamental fact in convex optimization that sublinear regret implies model convergence  (\Cref{proposition:sublinear-regret-implies-convergence}), which unifies the goal of regret minimization and model optimality in our setting. 
In \Cref{subsec: overview}, we provide an overview of our proposed solution.
In \Cref{sec:outer-algorithm}, we describe the outer algorithm, and 
\Cref{sec:learnmodel} describes the inner algorithm called $\learnmodel$, which is used to solve a subroutine problem using black-box oracle. \Cref{sec:putting-things-together} contains the overall regret analysis. Lastly, in \Cref{sec:limitation}, we discuss the limitations and potential future work. 
\textit{All omitted proofs can be found in the Appendix.}
\section{Preliminaries}
\label{sec:preliminaries}
We begin by formally defining our problem. 
\subsection{Problem Formulation}

The objective of performative prediction 
is to minimize the \emph{performative risk} defined as $\PR(\theta):= \E_{z \sim \calD(\theta)} [\ell(z;\theta)]~.$
A model $\theta_{\OPT} \in \Theta$ is said to be \emph{performatively optimal} if $\PR(\theta_{\OPT}) = \min_{\theta \in \Theta} \PR(\theta)$.
To find the performatively optimal model, one usually needs to have the full knowledge of the underlying distribution map $\calD: \Theta \rightarrow \Phi$. 
In this work, we consider a more practical scenario where the distribution map $\calD$ is not known in advance, and to learn the performatively optimal model, the learner has to adaptively deploy models to gradually learn the underlying distribution map. 

Formally, we consider the following repeated interaction between the learner and the environment consisting of decision subjects where we can only query through samples. 
The interaction proceeds for $\totaltime$ steps, 
at each time step $\tau = 1, \ldots, \totaltime$:
(1) the learner deploys a model  $\theta_{\tau}\in\Theta$;
(2) the learner observes $n_\tau$ data samples $\{z_{\tau}^{(i)}\}_{i\in[n_\tau]}$
where each  $z_{\tau}^{(i)}\overset{\text{iid}}{\sim} \calD(\theta_{\tau})$;
(3) the learner incurs empirical loss 
$ \ell(z_{\tau}^{(i)}; \theta_{\tau})$ for each sample.

The goal of the learner is to design an online
model deployment policy $\calA$ such that it minimizes her cumulative empirical risk over all observed data samples:
\begin{align}
    \calR_N(\calA, \PR)
    = \sum_{\tau=1}^{\totaltime}  \sum_{i=1}^{n_{\tau}} \ell(z_{\tau}^{(i)}; \theta_{\tau})- N \cdot \PR(\theta_{\OPT})
\end{align}
where $N:=\sum_{\tau=1}^{\totaltime} n_{\tau}$ denotes the total number of observed data samples throughout the process. The reason we introduce $\totaltime$ instead of $N$ directly is that each step ($\tau$) of our algorithm performs different tasks, where we would impose different requirements of samples to be collected. This shall become clear later when we present our algorithm in the following sections. 

\subsection{Warmup Setting: When $\PR(\theta)$ is Convex in $\theta$}
\label{sec: warmup}
In this section, we analyze a simple scenario when we assume that the performative risk
$\PR(\theta)$ is convex over the model parameter $\theta$. 
The technique we use to solve this simple case will be the building block to solve the later more challenging problem where $\PR(\theta)$ is {\em not} convex over the model parameter $\theta$. 

Recall that when the learner deploys a model $\theta$, 
she observes a set of data samples which are i.i.d drawn from the underlying data distribution $\calD(\theta)$. 
This enables us to compute an unbiased estimate $\PRtilde(\theta)$ 
for the performative risk $\PR(\theta)$
of the deployed model $\theta$:
\begin{align*}
   \PRtilde(\theta) = \frac{1}{n_\tau}\sum_{i=1}^{n_\tau} \ell(z_\tau^{(i)}; \theta), ~~ \text{and}~~ \E[\PRtilde(\theta)] = \PR(\theta), \forall \theta \in \Theta 
\end{align*}
where the expectation is over the randomness of the observed samples.
Since $\PR(\theta)$ is convex over the model parameter $\theta$, 
one can use an off-the-shelf zeroth-order convex optimization technique 
\cite{agarwal2010optimal} to solve this problem and get the following regret guarantee:

\begin{restatable}[]{lem}{convexregretbound}
\label{lemma:regret-bound-for-model-convex-performative-risk}
When $\PR(\theta)$ is convex, $L$-Lipschitz 
w.r.t. the deployed model parameter $\theta$,
there exists an algorithm (\Cref{algorithm:minimize-convex-function})
achieving
$\calR_N(\calA_{\ref{algorithm:minimize-convex-function}}, \PR)
= O(\sqrt{\modelDimen N\log \frac{1}{p}})$
 with probability at least $1-p$, where $N$ is the total number of samples deployed during the process.
\end{restatable}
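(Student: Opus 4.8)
The plan is to reduce Lemma~\ref{lemma:regret-bound-for-model-convex-performative-risk} to a standard zeroth-order (bandit) convex optimization guarantee, carefully accounting for the fact that the ``function value'' we observe at a deployed model $\theta$ is not $\PR(\theta)$ exactly but the empirical average $\PRtilde(\theta) = \frac{1}{n_t}\sum_{i=1}^{n_t}\ell(z_t^{(i)};\theta)$, an unbiased but noisy estimate. The two-point (or one-point) gradient estimator of \cite{agarwal2010optimal,flaxman2005online} only needs query access to noisy evaluations of a convex $L$-Lipschitz function on a bounded domain, so the structure is already in place; the work is in bookkeeping the sample budget $N=\sum_\tau n_\tau$ against the number of optimization steps $\totaltime$, and in converting the resulting regret-in-function-value into regret-in-total-loss as defined in \eqref{eqn:regret-def}.

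First I would instantiate \Cref{algorithm:minimize-convex-function} as the projected gradient descent on the $\delta$-smoothed surrogate $\PRhat(\theta) = \E_{v\sim\B^{\modelDimen}}[\PR(\theta+\delta v)]$, using at each round a one-shot gradient estimate $g_t = \frac{\modelDimen}{\delta}\,\PRtilde(\theta_t)\,u_t$ with $u_t\sim\S^{\modelDimen}$ (or the two-point variant, which reduces variance and is what \cite{agarwal2010optimal} analyze); here $\theta_t = \theta + \delta u_t$ is the actually deployed model and the $n_t$ samples drawn at that deployment feed $\PRtilde$. The standard facts I would invoke are: (i) $\PRhat$ is convex and $L$-Lipschitz, and $|\PRhat(\theta)-\PR(\theta)|\le L\delta$ uniformly; (ii) $\E[g_t \mid \theta_t\text{'s center}] = \nabla\PRhat$ on the shrunk domain, because taking the empirical-mean noise in expectation recovers $\PR(\theta_t)$ and the sphere-sampling identity then yields the smoothed gradient; (iii) $\|g_t\|$ is controlled by $\frac{\modelDimen}{\delta}\cdot(\text{range of }\ell)$, assuming $\ell$ is bounded (a mild condition I would state, or derive a bound from $L$ and $\modelDiameter$). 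Feeding these into the regret bound for online gradient descent on a bounded convex domain of diameter $\modelDiameter$ gives, after $T$ optimization rounds, an expected regret against $\theta_{\OPT}$ of order $\modelDiameter\sqrt{T}\cdot\|g\|_{\max} + LT\delta$, and optimizing $\delta$ yields the familiar $\mathrm{poly}(\modelDimen)\cdot T^{3/4}$ expected bound of \cite{flaxman2005online}, or the improved $\sqrt{\modelDimen T}$-type bound with the two-point estimator of \cite{agarwal2010optimal,shamir2017optimal}.

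Next I would translate this into the sample-counted regret $\calR_N$. Since at each of the $T=\totaltime$ rounds we take $n_\tau$ samples and incur $\sum_i \ell(z_\tau^{(i)};\theta_\tau) = n_\tau\PRtilde(\theta_\tau)$ in true expectation $n_\tau\PR(\theta_\tau)$, we have $\E[\calR_N] = \sum_\tau n_\tau(\PR(\theta_\tau)-\PR(\theta_{\OPT}))$. Choosing all $n_\tau$ equal (so $N = \totaltime \cdot n$) and bounding $\PR(\theta_\tau)-\PR(\theta_{\OPT})$ by the per-round optimization regret summed above, we get $\E[\calR_N] \lesssim n\cdot(\text{opt-regret over } \totaltime \text{ rounds})$. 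Balancing this against $N = n\totaltime$ — and noting that with the two-point estimator the opt-regret is $O(\sqrt{\modelDimen\,\totaltime})$ — gives $\E[\calR_N] = O(\sqrt{\modelDimen N})$ when $n = O(1)$ suffices, matching the claim; if instead only the one-point estimator is available one pays a worse rate and would need a short argument for why $n$ constant still works, so I would lean on the two-point version consistent with the ``first-order information unavailable'' framing. Finally, to upgrade from expectation to a high-probability $O(\sqrt{\modelDimen N\log(1/p)})$ bound, I would apply Azuma--Hoeffding to the martingale difference sequence $\sum_\tau n_\tau(\PRtilde(\theta_\tau) - \PR(\theta_\tau))$ (bounded increments since $\ell$ is bounded) and to the gradient-estimator noise inside the OGD analysis, absorbing the $\sqrt{\log(1/p)}$ factor.

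The main obstacle I anticipate is item (ii)/(iii): cleanly handling the interaction between the sampling noise in $\PRtilde$ and the perturbation noise $u_t$ of the zeroth-order estimator, and in particular verifying that the variance/magnitude of $g_t$ stays $O(\modelDimen/\delta)$ rather than blowing up — this is exactly where an explicit boundedness assumption on $\ell$ (or a careful conditioning argument showing the sampling noise is mean-zero given $u_t$ and hence does not bias the gradient) is needed. A secondary subtlety is making sure the projection onto $(1-\delta)\Theta$ and the $L\delta$ bias term are handled so that the comparator is $\PR(\theta_{\OPT})$ on the original domain, not the shrunk one; this is routine given the Lipschitz assumption but must be stated. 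Everything else — the OGD regret bound, the Azuma step, the budget balancing — is standard and I would cite \cite{agarwal2010optimal,flaxman2005online,shamir2017optimal} rather than reprove it.
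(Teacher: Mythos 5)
Your proposal follows essentially the same route as the paper's proof: a $\delta$-smoothed surrogate optimized by projected gradient descent with the two-point gradient estimator of \cite{agarwal2010optimal,shamir2017optimal}, a regret decomposition into sampling-noise terms (handled by Hoeffding/Azuma--Hoeffding), smoothing-bias terms of order $\delta L T$, the OGD regret $D_\Theta^2/\eta + \eta c d_\Theta L^2 T$, and a projection term, with $N=2T$ since two samples are drawn per round. The obstacle you flag (controlling $\|g_t\|$ despite single-sample noisy evaluations) is exactly the point the paper resolves by invoking Lemma~10 of \cite{shamir2017optimal}, so your plan matches the paper's argument in both structure and detail.
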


We defer the proof and the details of \Cref{algorithm:minimize-convex-function} to \Cref{sec:omitted-proofs-for-regret-analysis-algm-1}. In particular, \Cref{algorithm:minimize-convex-function} deploys two models at each time step, in doing so, one can show that the regret bounds closely resemble bounds for the full information case where the learner knows the distribution map $\calD(\cdot)$.
The proof of the above result 
builds on the main result of \citet{agarwal2010optimal}, 
and also incorporates an improved analysis of the gradient estimate due to \citet{shamir2017optimal}.

\subsection{Useful Fact: Sublinear Regret Implies Convergence in Model Optimality}
\label{sec:useful-facts}
A folklore fact in online and zeroth-order optimization is that if a function $f$ is convex and we wish to converge to an approximately optimal point, it suffices to show a query algorithm that achieves $o(n)$ regret after $n$ queries. In particular, we have the following proposition:

\begin{proposition}[Sublinear Regret Implies Convergence]
\label{proposition:sublinear-regret-implies-convergence}
Let $f: X \to \R$ be convex, and let $\calA$ be an algorithm for minimizing $f$ whose regret after $n$ queries is sublinear in $n$, i.e. $\calR_n(\calA,f) = o(n)$. Then we can compute an $\epsilon$-suboptimal point for $f$ in $\calR_n(\calA,f)/\epsilon$ queries of $f$.
\end{proposition}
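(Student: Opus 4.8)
The plan is to exploit the standard averaging trick for online-to-batch conversion for convex functions. Let $\calA$ be run for $n$ queries, producing query points $x_1, \ldots, x_n \in X$, and let $x^\star \in \argmin_{x \in X} f(x)$. By definition of regret, $\sum_{t=1}^n f(x_t) - n f(x^\star) = \calR_n(\calA, f)$. First I would define the averaged iterate $\bar x_n := \frac{1}{n}\sum_{t=1}^n x_t$, which lies in $X$ by convexity of the domain. By Jensen's inequality applied to the convex function $f$, we have $f(\bar x_n) \le \frac{1}{n}\sum_{t=1}^n f(x_t)$, and therefore
\begin{align*}
f(\bar x_n) - f(x^\star) \le \frac{1}{n}\sum_{t=1}^n f(x_t) - f(x^\star) = \frac{\calR_n(\calA, f)}{n}.
\end{align*}

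Next I would turn this into the claimed query count. We want the right-hand side to be at most $\epsilon$, i.e.\ we want $\calR_n(\calA, f)/n \le \epsilon$, equivalently $n \ge \calR_n(\calA, f)/\epsilon$. Since $\calR_n(\calA, f) = o(n)$, the ratio $\calR_n(\calA, f)/n \to 0$, so for all sufficiently large $n$ this holds; concretely, choosing any $n$ with $n \ge \calR_n(\calA, f)/\epsilon$ (such an $n$ exists precisely because the regret is sublinear) yields an $\epsilon$-suboptimal point $\bar x_n$ after $n$ queries. This matches the statement that $\calR_n(\calA, f)/\epsilon$ queries suffice — reading the bound as: run the algorithm until the number of queries reaches the point where accumulated regret divided by query count drops below $\epsilon$, which happens by query $\calR_n(\calA,f)/\epsilon$ in the sublinear regime.

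One subtlety I would be careful to address: if $\calA$ only has access to (possibly noisy, zeroth-order) function values rather than to $f$ itself, then strictly speaking the regret bound may be a high-probability statement, and the averaged iterate inherits the same failure probability; I would note that the proposition is stated for the idealized convex $f$ and the argument is purely about the regret inequality, so no extra care beyond invoking the given regret bound is needed. The main (very mild) obstacle is simply making precise the informal phrase ``in $\calR_n(\calA,f)/\epsilon$ queries'': one should state it as ``there exists $n \le$ (something governed by the sublinear rate) with $\calR_n(\calA,f)/n \le \epsilon$'', and observe $\bar x_n$ is the desired point. Everything else is a one-line Jensen argument, so I expect no real difficulty; the proof is essentially the display above plus the rearrangement.
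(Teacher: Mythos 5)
Your proposal is correct and is essentially identical to the paper's own proof: both apply Jensen's inequality to the averaged iterate $\bar{x} = \frac{1}{n}\sum_i x_i$ to get $f(\bar{x}) - f(x^*) \leq \calR_n(\calA,f)/n$ and then rearrange. Your extra remarks on making the query count precise and on noisy-feedback subtleties are reasonable but not needed beyond what the paper does.
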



This proposition establishes a strong link between achieving sublinear regret and the convergence toward an optimal model. It implies that if our proposed algorithm attains a sublinear regret as defined in Equation \ref{eqn:regret-def}, this automatically suggests that we can obtain an almost optimal model, which is exceptionally close to the truly optimal model, denoted as $\theta_{\OPT}$. This closeness is achieved simply by averaging the models $\theta_1, \ldots, \theta_{\totaltime}$ throughout the deployment process. This helps us unify the goal of regret minimization and finding the optimal model.  
\section{Optimizing $\PR$ via Reparameterization: An Overview of Our Solution}
\label{subsec: overview}
When $\PR(\theta)$ is not convex over the model parameter $\theta$,
the zeroth-order convex optimization technique used in \Cref{sec: warmup}
is not directly applicable. 
Instead, we leverage the structure of $\PR(\theta)$
and {\em reparameterize} it as a function of the \emph{induced} data distribution $\calD(\theta)$. 
In particular, we consider the setting where 
the data distribution $\calD(\theta)$ has a parametric 
continuously differentiable density $p(z; \varphi(\theta))$,
and the functional form $p(z; \phi)$ is known to the learner but 
the distribution parameter $\phi$ remains unknown. 
Under mild conditions, we show that 
the performative risk $\PR(\theta)$ can be reformulated as a function of the \emph{induced} distribution distribution parameter $\phi \equiv  \varphi(\theta)$, namely,
\begin{align}
    \PR(\theta) = \PRdagger(\varphi(\theta)) \equiv\PR(\vartheta^*(\phi))~,
\end{align}
and $\PRdagger(\phi)$ is convex over the distribution parameter $\phi$ 
(See more details in \Cref{sec:outer-algorithm}).

Here we provide two real-life settings to justify our model:

\begin{example} (Biased coin flip). Consider the task of predicting the outcome of a biased coin flip similar to \citet{perdomo2020performative}, where the bias of the coin depends on a feature $X$ and the assigned score $f_\theta(X)$. In particular, define $\mathcal{D}(\theta)$ in the following way: $X$ is a 1-dimensional feature supported on $[0, 1]$ and $Y \sim \operatorname{Bernoulli}(\varphi(\theta))$. Assume that the class of predictors consists of linear models of the form $f_\theta(x)= \theta x$ and that the objective is to minimize the squared loss: $\ell(x, y; \theta)=\left(y-f_\theta(x)\right)^2$. When the probability of the coin landing on heads $\varphi(\theta) = \theta^2$, we can verify that $\PR(\theta)$ is convex in $\varphi(\theta)$, not in $\theta$ (by similar argument provided below in \Cref{example:mixture-dominance-too-strong-condition}).
\end{example}

\begin{example}
(Expected revenue of goods). Let $\theta\in \R^d$ denote a vector of prices for various goods the distributor sets. A vector $z$ denotes a customer's demand for each good. The distributor's goal is to maximize the expected revenue $\PR(\theta) = \mathbb{E}_{z\sim \mathcal{D}(\theta)}\left[\theta^{\top} z\right]$. In other words, the loss function is $\ell(z ; \theta)=-\theta^{\top} z$. When $\mathcal{D}(\theta)= N(\varphi(\theta), \Sigma^2)$ with $\varphi(\theta) = \sqrt{\theta}$ and a fixed $\Sigma^2$, we can verify that $\PR(\theta)$ is not convex in $\theta$ but is convex in $\varphi(\theta) = \sqrt{\theta}$.
\end{example}

With this reparameterization, one can operate on the space of distribution parameters
and hopefully apply the zeroth-order convex optimization technique. 
However, one notable challenge is in zeroth-order convex optimization, the learner is usually assumed to have direct query access to the unknown convex function $f$. 
Namely, when querying point $x$, the learner is able to immediately obtain the information about the (noisy) value $f(x)$. 
In our setting, such direct access is, unfortunately, 
not available since the mapping
$\varphi(\cdot)$ is not known to the learner. 
Indeed, the learner can only deploy  a model $\theta$ to observe 
the empirical performative risk $\PRtilde(\theta)$ which is evaluated
over the observed data samples drawn from the induced data distribution $\calD(\theta)$. 
Hence, to evaluate the value $\PRdagger(\phi)$ on a target data distribution with the parameter $\phi$, we use another algorithm called $\learnmodel$ as a subroutine to find a model  $\bar{\theta}$ such that $\varphi(\bar{\theta}) \approx \phi$ (See \Cref{sec:learnmodel}).

\xhdr{Summary of our proposed procedure} Intuitively, the outer loop optimizes the objective function $\PR$ in the distribution parameter space $\phi\in \Phi$ iteratively and tries to find the optimal data parameter $\phi *$, while the inner loop ($\learnmodel$) tries to find a model parameter to induce the particular data parameter that the outer loop is currently iterating on.\footnote{
One may wonder how to find the optimal $\vartheta^*(\phi)$ when there are two model parameters $\theta$ and $\theta^{\prime}$ that realize the same $\phi$ (i.e., $\varphi(\theta)=\varphi\left(\theta^{\prime}\right)$ and $\PR\left(\theta^{\prime}\right) \geq \PR(\theta))$. Recall that the objective function for $\learnmodel$ is to find \emph{any} model $\theta$ that leads to the particular target data parameter $\phi$ such that $\varphi(\theta)=\phi$. It is quite possible that multiple models can induce the same target data parameter; however, since the goal is to find any one of them, having multiple model parameters won't be an issue -- in fact, it can only help speed up the process} 
A graph illustration of our algorithm procedure in given in \Cref{fig:procedure}. 

\begin{figure}[h!]
    \centering
    \includegraphics[width=\linewidth]{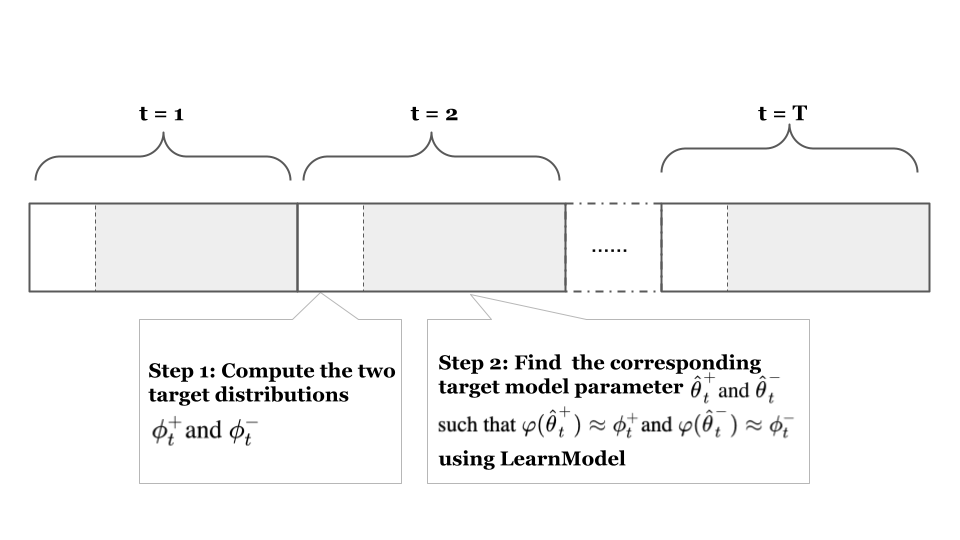}
    \caption{Illustration of our procedure (\Cref{algorithm:minimize-indirectly-convex-function}). Each big block represents one iteration of the outer algorithm, which consists of three sub-steps: Step 1, the learner first computes the two target distribution $\phi^{+}_t$ and $\phi^{-}_t$ (corresponds to the white section), Step 2, the learner uses $\learnmodel$ to learn the corresponding model $\hat{\theta}^{+}_t$ and $\hat{\theta}^{-}_t$ that can best approximately induce $\phi^{+}_t$ and  $\phi^{-}_t$(corresponds to the grey section) correspondingly. Step 3, the learner deploys $\hat{\theta}^{+}_t$ and $\hat{\theta}^{-}_t$ and perform a gradient update and get $\phi_{t+1}$.
    Each deployment of $\learnmodel$ requires a total number of $S$ steps. Thus, the total number of steps involved in the whole procedure is $\totaltime = T\times S$.}
    \label{fig:procedure}
\end{figure}
\vspace{-0.1in}
\subsection{Examples of $\PR$ Being Convex in $\phi$ not in $\theta$}

We first provide three examples in which our condition ($\PR$ loss is convex in the induced distribution parameter $\varphi_\theta:= \phi(\theta)$, but non-convex in the model parameter $\theta$) holds. See more examples and derivation details in \Cref{sec:more-example}.

\begin{example}(Bernoulli distribution)
\label{example:mixture-dominance-too-strong-condition}
Consider the following one-dimension linear model with the squared loss $\ell(\theta;(x,y)) = - (\theta x - y)^2$. 
Assuming a model $\theta\in \Theta = [0,1]$ 
induces a Bernoulli distribution over the labels with the distribution parameter $\varphi(\theta):= \theta^2$, i.e., $y \sim \cc{Bern}(\varphi(\theta))$. Then its $\PR$ loss is convex in its data parameter $\varphi_\theta$ but not convex in its model parameter $\theta$.
\end{example}

Notice that the example provided can be generalized to any distribution map $\phi(\theta)$ that satisfies $\varphi(\theta) = \theta^\alpha$ for any $\alpha> 1$, and any $\ell_\beta$ loss for even $\beta$ value. In addition, Example 1 can also be any generalized monotone polynomial function $\varphi(\theta)$.

\begin{example}[Gaussian distribution]
\label{example:gaussian}
    For a random variable $x$ following a one-dimensional Gaussian distribution with fixed variance, i.e., $\calD(\theta) = N(\varphi(\theta), \sigma^2)$, and let the loss as $\ell(x;\theta) = (\theta x)^2$, then we have $\PR(\theta) = \theta^2(\sigma^2 + \varphi(\theta)^2)$. With $\varphi(\theta) = \sqrt{\theta}$, we verify that $\PR(\theta)$ is not convex in the model parameter $\theta$ but is convex in the distribution parameter $\varphi(\theta) = \sqrt{\theta}$. 
\end{example}

\begin{example}[Uniform distribution]
\label{example:uniform}
  Fix a random variable $x$. Let $y$ follow a uniform distribution with parameter $\varphi(\theta)$, e.g., $y \sim \text{Uniform}[0, \varphi(\theta)]$, and with the loss being $\ell(\theta; x, y) = -(\theta x - y)^2$, we have $\PR(\theta) = \theta^2 x^2 - \theta x \varphi(\theta) + \frac{1}{3} \varphi(\theta)^2$. Setting $\varphi(\theta) = \theta^2$, we verify that it’s convex in the distribution parameter $\varphi(\theta)$ but not in the model parameter $\theta$.  
\end{example}

\begin{remark}
 Earlier work \cite{miller2021outside} posits the ``mixture dominance assumption'', under which the performative prediction risk turns out to be convex in the model parameter $\theta$. In particular, the assumption requires that for any triple $\theta, \theta', \theta_0\in\Theta$, the following condition holds: $\E_{z \sim \calD(\alpha\theta + (1-\alpha)\theta')} [\ell(\theta_0;z)] \le \E_{z \sim \alpha \calD(\theta) + (1-\alpha) \calD(\theta')} [\ell(\theta_0; z)]$.
 The primary distinction between our condition and theirs is that our condition only needs to be valid for each individual data parameter. This is in contrast to \citeauthor{miller2021outside}'s condition, which must be met for any combination of $\theta, \theta'$, and $\theta_0\in \Theta$. We believe our approach has greater versatility and is more likely to be fulfilled in various scenarios \footnote{We can also verify that \Cref{example:mixture-dominance-too-strong-condition} does not satisfy the mixture dominance assumption.}. 
\end{remark}

\section{Our Algorithm and its Performance Guarantee}
In this section, we provide the details of 
our proposed algorithm, and also the associated 
performance analysis.
\subsection{The Outer Algorithm: A Reparameterization Approach}
\label{sec:outer-algorithm}
As we mentioned, in this work, we study the scenario where $\PR(\theta)$ 
is not convex over the model parameter. 
The high-level idea is that we can {\em reparameterize} the performative risk $\PR(\theta)$ as a function $\PRdagger(\phi)$ over the data distribution parameter $\phi$.
In particular, we first reformulate the learner's loss function so that it can be expressed as a function \emph{only} in the induced data distribution. For each data distribution
$\phi \in \Phi$, assume the set of learner's actions (deployed model parameters) that induce $\phi$ is $ \Theta^*(\phi) = \{\theta \in \Theta | \varphi(\theta) = \phi  \}$
Among all of the learner's actions that induce $\phi$, the optimal one that achieves the minimal $\PR$ loss across the whole population is:
\vspace{-0.1in}
\begin{align*}
    \vartheta^*(\phi) = \argmin_{\theta\in \Theta^*(\phi )} \PR(\theta )
\end{align*}
where ties are broken arbitrarily. 
Now we can rewrite 
learner's objective function as a function
of $\phi$
\begin{align}
\label{eq:reformulate-objective-function}
    \PRdagger(\phi) = \PR(\vartheta^*(\phi ))
\end{align}

To make the problem tractable, we consider following generic
class of $\PRdagger(\cdot)$ that is convex and Lipchitz continuous. 
\begin{assumption} 
\label{ass:PRdagger-convex-lipschitz}
$\PRdagger(\phi)$ is convex and $L^\dagger$-Lipschitz over
the data distribution parameter $\phi\in \Phi$. 
\end{assumption}


With reparameterizing $\PR(\theta)$ as a function $\PRdagger(\phi)$ 
over the induced data distribution parameter $\phi$, 
we now wish to minimize a bounded, $L^\dagger$-Lipschitz function
$\PRdagger(\cdot): \Phi \to \R $, where $\Phi \subset \R^{\distributionDimen}$ has bounded diameter $\distributionDiameter$,
by operating on the distribution parameter space $\Phi$.

Instead of having immediate query access in 
zeroth-order convex optimization algorithm, 
in our setting, we cannot directly evaluate the (noisy) value 
$\PRdagger(\phi)$ for a particular data distribution parameter, but may query the following oracles:
\squishlist
    \item A noisy \emph{function oracle} $\estimatePR$, which takes  $\theta \in \Theta$ as input and returns an unbiased estimate $\PRtilde$  such that $\E[\PRtilde(\theta)] = \PR(\theta)$. This noisy oracle can be implemented by simply making a prediction and observing the loss 
    as defined in \Cref{sec: warmup} \footnote{The sample required for each round of estimate EstimatePR is $\mathcal{O}(1)$; this is because EstimatePR itself is an unbiased estimator, so even with one sample, in expectation, the estimation will be unbiased.}.
    \item A noisy \emph{reparameterization oracle} $\learnmodel(\phi, \epsilon_\LM, p_\LM)$, which takes $\phi \in \Phi$, $\epsilon_\LM, p_\LM > 0$ as input and returns $\theta \in \Theta$ such that $\Pr(\|\varphi(\theta) - \phi\| \geq \epsilon_\LM) \leq p_\LM$. We will specify $\learnmodel$ in \Cref{sec:learnmodel}.
\squishend

\Cref{algorithm:minimize-indirectly-convex-function} achieves this task. Specifically, it returns both $\bar{\theta} \in \Theta$ and $\bar{\phi} \in \Phi$ such that with probability at least $1-p$, $|\PR(\bar{\theta}) - \PR(\theta_\OPT)| \leq \epsilon$ and $|\PRdagger(\bar{\phi}) - \PR^\dagger(\varphi(\theta_\OPT))| \leq \epsilon$.

\begin{algorithm}[h!]
\small
\begin{algorithmic}
\caption{Bandit algorithm for minimizing an indirectly convex function with noisy oracles}
\label{algorithm:minimize-indirectly-convex-function}
\Function{$\estimatePR$}{$\theta$} \Comment{\textcolor{blue}{Unbiased estimate of $\PR(\theta)$}}
        \State Deploy $\theta$, observe sample $z \sim \calD(\theta)$
        \State \Return $\ell(z;\theta)$
\EndFunction

\Function{$\minimizePR$}{$\learnmodel: \Phi \to \Theta$; $\epsilon, p, \epsilon_\LM, p_\LM > 0$}
        \State $T \gets \frac{\distributionDimen}{(\epsilon - \sqrt{\epsilon_\LM \distributionDimen})^2}$, $\delta \gets \sqrt{\epsilon_\LM \distributionDimen}$, $\eta \gets 1/\sqrt{\distributionDimen T}$
        \State $y_1 \gets \mathbf{0}$
        \For{$t \gets 1,\ldots,T$}
            \State $u_t \gets$ sample from $\mathrm{Unif}(\S)$
            \State $\phi_t^+ \gets \phi_t + \delta u_t$, $\phi_t^- \gets \phi_t - \delta u_t$
            \State $\hat{\theta}_t^+ \gets \learnmodel(\phi_t^+, \epsilon_\LM, p_\LM)$
            \State $\hat{\theta}_t^- \gets \learnmodel(\phi_t^-, \epsilon_\LM, p_\LM)$
                \Comment{\textcolor{blue}{$\hat{\theta}_t^+$ such that $\PR(\hat{\theta}_t^+) \approx \PRdagger(\phi_t^+)$, similarly, $\hat{\theta}_t^-$ such that $\PR(\hat{\theta}_t^-) \approx \PRdagger(\phi_t^-)$}}
            \State $\PRtilde(\hat{\theta}_t^+) \gets \estimatePR(\hat{\theta}_t^+)$
            \State    $\PRtilde(\hat{\theta}_t^-) \gets \estimatePR(\hat{\theta}_t^-)$
                \Comment{\textcolor{blue}{Approximations of $\PR(\hat{\theta}_t^+)$, $\PR(\hat{\theta}_t^-)$}}
            \State $\tilde{g}_t \gets \frac{\distributionDimen}{2\delta} \left(
                    \PRtilde(\hat{\theta}_t^+) - \PRtilde(\hat{\theta}_t^-)
                \right) \cdot u_t$
                \Comment{\textcolor{blue}{Approximation of $\nabla_\phi \PRdagger(\phi_t)$}}
            \State $\phi_{t+1} \gets \Pi_{(1-\delta)\Phi}(\phi_t - \eta \tilde{g}_t)$
                \Comment{\textcolor{blue}{Take gradient step and project}}
        \EndFor
        \State $\bar{\phi} \gets \frac{1}{T} \sum_{t=1}^T \phi_t$
        \State $\bar{\theta} \gets \learnmodel(\bar{\phi},\epsilon_\LM,p_\LM)$
        \State \Return $\bar{\theta}$, $\bar{\phi}$
    \EndFunction
\end{algorithmic}
\end{algorithm}

For analysis purpose, we also define regret in $T$, the total number of steps $\minimizePR$ has to go through in order to get an $\epsilon$-suboptimal model parameter w.r.t the $\PR$ objective function:
\begin{align*}
    &\calR_T(\minimizePR, \PR)\\
    =& \sum_{t=1}^T \left[
            \estimatePR(\hat{\theta}_t^+)
            + \estimatePR(\hat{\theta}_t^-)
            - 2 \PR(\theta_\OPT)
        \right]
\end{align*}

We demonstrate the following regret bound for this algorithm:



\begin{restatable}[High-probability regret bound for \Cref{algorithm:minimize-indirectly-convex-function} in $T$]{thm}{outeralgorithmregret}
\label{theorem:regret-bound-for-indirectly-convex-functions}
When \Cref{algorithm:minimize-indirectly-convex-function} is called with arguments $\epsilon_\LM$ and $p_\LM$, we have for every $p > 0$ that
{\small
\begin{align*}
 \calR_T(\minimizePR,\PR) 
    =  ~ O\left(
        \sqrt{\distributionDimen T}
        + \sqrt{\epsilon_\LM \distributionDimen} \cdot T
        + \sqrt{T \log\frac{1}{p}}
    \right)
\end{align*}
}
with probability at least $1 - p - 2Tp_\LM$.
\end{restatable}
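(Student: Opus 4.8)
# Proof Proposal

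The plan is to analyze \Cref{algorithm:minimize-indirectly-convex-function} as a perturbation of the standard two-point zeroth-order convex optimization scheme (as used in \Cref{lemma:regret-bound-for-model-convex-performative-risk}), tracking how the two sources of error — the function-oracle noise from $\estimatePR$ and the reparameterization error from $\learnmodel$ — propagate into the regret. First I would set up the decomposition $\PR(\hat\theta_t^\pm) - \PR(\theta_\OPT) = \bigl[\PRdagger(\varphi(\hat\theta_t^\pm)) - \PRdagger(\phi_t^\pm)\bigr] + \bigl[\PRdagger(\phi_t^\pm) - \PRdagger(\phi_\OPT)\bigr]$, where $\phi_\OPT = \varphi(\theta_\OPT)$. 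By the definition of $\learnmodel(\phi_t^\pm,\epsilon_\LM,p_\LM)$, with probability at least $1-p_\LM$ each call returns $\hat\theta_t^\pm$ with $\|\varphi(\hat\theta_t^\pm) - \phi_t^\pm\| \le \epsilon_\LM$, so by \Cref{ass:PRdagger-convex-lipschitz} the first bracket is at most $L^\dagger \epsilon_\LM$ in absolute value; a union bound over the $2T$ calls (plus the final call to produce $\bar\theta$) accounts for the $2Tp_\LM$ loss in the success probability. This reduces the problem to bounding $\sum_t \bigl[\PRdagger(\phi_t^+) + \PRdagger(\phi_t^-) - 2\PRdagger(\phi_\OPT)\bigr]$ along the iterates $\phi_t$ generated by projected gradient descent on $\Phi$ with the estimated gradients $\tilde g_t$.

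Next I would invoke the core guarantee for $\delta$-smoothed two-point bandit gradient descent: with $\phi_t^\pm = \phi_t \pm \delta u_t$ and $u_t \sim \mathrm{Unif}(\S)$, the vector $\tilde g_t = \frac{\distributionDimen}{2\delta}(\PRtilde(\hat\theta_t^+) - \PRtilde(\hat\theta_t^-))u_t$ is — up to the $\learnmodel$ and $\estimatePR$ errors — an unbiased estimator of the gradient of the smoothed function $\widehat{\PRdagger}_\delta(\phi_t)$, and $\|\tilde g_t\|$ is bounded because $\PRtilde$ is bounded (the per-call estimates differ by $O(\distributionDimen/\delta)$). Running the standard online-gradient-descent regret bound with step size $\eta = 1/\sqrt{\distributionDimen T}$ over the set $(1-\delta)\Phi$ gives a $\distributionDimen$-free-in-the-exponent term of order $\sqrt{\distributionDimen T}$ for the regret of the smoothed function against the best point in $(1-\delta)\Phi$; then: (i) $\delta$-closeness of $\widehat{\PRdagger}_\delta$ to $\PRdagger$ by $L^\dagger$-Lipschitzness contributes $O(L^\dagger\delta\, T)$; (ii) projecting $\phi_\OPT$ onto $(1-\delta)\Phi$ costs another $O(L^\dagger \distributionDiameter\, \delta\, T)$; and (iii) evaluating at $\phi_t^\pm$ rather than $\phi_t$ costs $O(L^\dagger\delta\,T)$. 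Substituting $\delta = \sqrt{\epsilon_\LM \distributionDimen}$ collapses all of (i)–(iii) and the $\learnmodel$ error into the single term $\sqrt{\epsilon_\LM\distributionDimen}\cdot T$, and with $T = \distributionDimen/(\epsilon - \sqrt{\epsilon_\LM\distributionDimen})^2$ one checks the bound is consistent with an $\epsilon$-suboptimal average iterate (via \Cref{proposition:sublinear-regret-implies-convergence}), though for the theorem statement only the three-term $O(\sqrt{\distributionDimen T} + \sqrt{\epsilon_\LM\distributionDimen}\,T + \sqrt{T\log(1/p)})$ form is needed.

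For the high-probability part I would handle the martingale structure: the regret expressed with the \emph{true} $\PR$ values equals the regret with the \emph{estimated} $\estimatePR$ values plus a sum of bounded martingale differences $\estimatePR(\hat\theta_t^\pm) - \PR(\hat\theta_t^\pm)$, each bounded since $\ell$ is bounded; Azuma–Hoeffding over the $2T$ increments yields the $\sqrt{T\log(1/p)}$ term with failure probability $p$. Similarly, going from the deterministic OGD regret bound on the smoothed function to a high-probability bound requires controlling the gap between $\langle \tilde g_t, \phi_t - \phi_\OPT\rangle$ and its conditional expectation, again a bounded martingale — this can be absorbed into the $\sqrt{T\log(1/p)}$ term (or folded into the constants, depending on how tightly one wants to track it). The main obstacle I anticipate is bookkeeping the interaction between the two error sources cleanly: the $\learnmodel$ error enters both through the argument at which we evaluate $\PRtilde$ (affecting the bias of $\tilde g_t$ as a gradient estimate) and through the final regret decomposition, and one must make sure the choice $\delta = \sqrt{\epsilon_\LM\distributionDimen}$ genuinely balances the bias-from-smoothing (which wants $\delta$ small) against the gradient-estimate variance $\sim \distributionDimen^2/\delta^2$ and the $\learnmodel$-induced bias $\sim \distributionDimen\epsilon_\LM/\delta$ (which want $\delta$ large) — verifying this is exactly the algebra behind the stated parameter choices, and getting the conditioning right so that the union bound over $\learnmodel$ failures, the Azuma bound on $\estimatePR$ noise, and the OGD martingale bound compose into the claimed $1 - p - 2Tp_\LM$ is the delicate part.
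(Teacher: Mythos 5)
Your proposal is correct and follows essentially the same route as the paper: the same decomposition into (a) the $\learnmodel$ reparameterization error controlled via $L^\dagger$-Lipschitzness of $\PRdagger$ with a union bound over the $2T$ calls, (b) the smoothed two-point OGD analysis on $(1-\delta)\Phi$ with the extra $d_\Phi\epsilon_\LM/\delta$ bias term from evaluating $\PRtilde$ at $\varphi(\hat\theta_t^\pm)$ rather than $\phi_t^\pm$, and (c) Hoeffding/Azuma for the $\estimatePR$ noise and the smoothed-function martingale, with $\delta=\sqrt{\epsilon_\LM d_\Phi}$ collapsing the bias terms exactly as in the paper's term-by-term bound. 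The subtlety you flag about the $\learnmodel$ error entering both the gradient estimate and the final decomposition is precisely what the paper's auxiliary claims handle.
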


The above \Cref{theorem:regret-bound-for-indirectly-convex-functions} requires that the output of 
$\learnmodel$ is $\epsilon_\LM$-close 
to the target distribution parameter $\phi$
with probability at least $1 - p_\LM$. 
Later in \Cref{sec:learnmodel}, 
we show how we achieve this by developing an 
zeroth-order convex optimization algorithm with the 
objective of minimizing the $\KL$ divergence of two distributions.

\subsection{Inner Algorithm: Inducing a Target Distribution Using $\learnmodel$}
\label{sec:learnmodel}
In this section, we show how to solve the sub-problem $\learnmodel$ mentioned in \Cref{algorithm:minimize-indirectly-convex-function}: given a target distribution with the parameter $\phi\in \Phi$, find a model $\theta\in \Theta$ whose corresponding distribution parameter $\varphi(\theta)$ is close to $\phi$.

\xhdr{Objective function for $\learnmodel$} 
To this end, we consider minimizing 
the $\KL$ divergence between $\phi$ and $\varphi(\theta)$:
\footnote{For notation simplicity, here, we use $\KL(\phi_1|| \phi_2)$ 
to represent $\KL(\calD_1|| \calD_2)$ where the data distribution $\calD_1$
and $\calD_2$ has the parameter $\phi_1$ and $\phi_2$, respectively.}
\begin{align}
\label{eqn:kl-divergence}
    \KL(\phi|| \varphi(\theta)) :=  \int_z p(z; \phi) \log \frac{p(z; \phi)}{p(z;\varphi(\theta))} dz
\end{align}
where $p(z;\phi)$ denotes the pdf for the target distribution $\phi$, and $p(z;\varphi(\theta))$ denotes the pdf for the distribution induced by deploying $\theta$.

In general, ${\KL}(\phi|| \varphi(\theta))$ measures how much a distribution with the parameter $\varphi(\theta)$ is away from the  target distribution with the parameter $\phi$: if the two distributions $\phi_1,\phi_2 \in \Phi$ satisfy $\phi_1 = \phi_2$, 
then ${\KL} (\phi_1||\phi_2) = 0$, otherwise ${\KL} (\phi_1||\phi_2) >0$. 
Intuitively, the lower the value ${\KL} (\phi_1||\phi_2)$ is, the better we have matched the target distribution with our approximate 
distribution induced by the chosen model.
However, $\KL(\phi|| \cdot )$ is generally not convex nor Lipschitz.
Hence, to make the problem tractable, we will make several assumptions. 
We view these assumptions as comparatively mild, and provide 
examples shortly after stating the assumptions we need. 

\begin{assumption}
\label{ass:kl-convex-lipschitz}
The function $\KL(\phi|| \varphi(\cdot))$, the data distribution $\calD(\theta)$,
and its parameter mapping $\distParaMap(\cdot)$ satisfies the following properties.
\begin{enumerate}
    \item[2a.]  $\KL(\phi|| \varphi(\cdot))$ is convex in the model parameter $\theta\in \Theta$;
    \item[2b.]  The data distribution $\calD(\theta)$ with the parameter $\distParaMap(\theta)$ is $(\ell_2, K)$-Lipschitz continuous in the model parameter $\theta \in \Theta$ with constant $K(z), \forall z\in Z$
    \footnote{A distribution $\calD(\theta)$ with the density function $p(\cdot | \varphi(\theta))$ parameterized by $\theta \in \Theta$ is called $(\ell_2, K)$-Lipschitz continuous \cite{Honorio2012lipschitz} if for all $z$ in the sample space, the log-likelihood $f(\theta) = \log p(z|\varphi(\theta))$ is Lipschitz continuous with respect to the $\ell_2$ norm of $\theta$ with constant $K(z)$. 
    };
    \item[2c.]   
    \label{ass:phi-difference-bound-by-dtv}
    Let $\calD_1, \calD_2$ be two data distributions 
    with the parameter $\phi_1, \phi_2 \in \Phi$, 
    and $d_\TV (\calD_1, \calD_2)$ be the total variation distance.
    Then $\|\phi_1 - \phi_2\|  \leq L_\TV\cdot d_\TV (\calD_1, \calD_2) $
    for some constant $L_\TV>0$.
\end{enumerate}
\end{assumption}



Here, we provide examples to demonstrate that the above assumptions
are comparatively mild.
The following is an example showing the convexity of 
$\KL(\phi|| \varphi(\cdot))$.


\begin{restatable}[]{exam}{klconvexexample}
\label{example:convex-kl}
Consider the density function $p(z; \varphi(\theta))$ 
of the data distribution $\calD(\theta)$ 
satisfying $p(z; \varphi(\theta)) = \mathrm{Unif}(\exp(c\varphi(\theta)))$ 
for some constant $c > 0$
and for any convex function $\varphi(\theta)$, 
then $\KL(\phi|| \varphi(\cdot))$ is convex over $\theta$.
\end{restatable}

In the above Assumption 2b, we assume a family of distribution called the $(\ell_2, K)$-Lipschitz continuous. 
This Lipschitz continuity over the 
parametrization of probability distributions allows us to have 
the following Lipschitz condition of the function $\KL(\phi||\varphi(\cdot))$
over the model parameter $\theta$: 


\begin{restatable}[Lipschitzness of $\KL(\phi||\varphi(\theta))$ in $\theta$]{lem}{kllipschitzcondition}
\label{lemma:lip-phi-in-KL}
Given two $\left(\ell_{2}, K\right)$-Lipschitz continuous distributions $\calD_{1}=p\left(\cdot \mid \varphi(\theta_{1})\right)$ and $\calD_{2}=p\left(\cdot \mid \varphi(\theta_{2})\right)$, and a target distribution parameter $\phi\in \Phi$, we have
$\left|\KL \left(\phi||\varphi(\theta_{1})\right) - \KL\left(\phi|| \varphi(\theta_{2})\right) \right|\leq {L_\KL}\left\|\theta_{1}-{\theta}_{2}\right\|$
with a constant $L_\KL>0$.
\end{restatable}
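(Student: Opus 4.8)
# Proof Proposal for Lemma (Lipschitzness of $\KL(\phi \| \varphi(\theta))$ in $\theta$)

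\textbf{Proof proposal.}
The plan is to write the difference of the two KL divergences as a single integral in which the $\phi$-dependent entropy term cancels, and then control the remaining integrand pointwise using the $(\ell_2,K)$-Lipschitz continuity assumed in Assumption~2b. Concretely, starting from the definition in \eqref{eqn:kl-divergence} and assuming (as is natural, since $\learnmodel$ seeks to drive this quantity to zero) that $\KL(\phi\|\varphi(\theta_1))$ and $\KL(\phi\|\varphi(\theta_2))$ are finite, I would write
\begin{align*}
    \KL\!\left(\phi\|\varphi(\theta_1)\right) - \KL\!\left(\phi\|\varphi(\theta_2)\right)
    = \int_z p(z;\phi)\,\log\frac{p(z;\varphi(\theta_2))}{p(z;\varphi(\theta_1))}\,dz,
\end{align*}
so that only the log-likelihood terms $f_z(\theta):=\log p(z\mid\varphi(\theta))$ remain.

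Next I would invoke Assumption~2b: for each fixed $z$, the map $\theta\mapsto f_z(\theta)=\log p(z\mid\varphi(\theta))$ is Lipschitz in the $\ell_2$ norm of $\theta$ with constant $K(z)$, hence
$\bigl|\log p(z;\varphi(\theta_2)) - \log p(z;\varphi(\theta_1))\bigr| \le K(z)\,\|\theta_1-\theta_2\|$.
Substituting this bound into the integral and pulling the (constant in $z$) factor $\|\theta_1-\theta_2\|$ out gives
\begin{align*}
    \bigl|\KL(\phi\|\varphi(\theta_1)) - \KL(\phi\|\varphi(\theta_2))\bigr|
    \le \|\theta_1-\theta_2\| \int_z p(z;\phi)\,K(z)\,dz
    = \Bigl(\E_{z\sim\calD(\phi)}[K(z)]\Bigr)\cdot\|\theta_1-\theta_2\|,
\end{align*}
which is the claimed inequality with $L_\KL := \sup_{\phi\in\Phi}\E_{z\sim\calD(\phi)}[K(z)]$ (the supremum over $\Phi$ making the constant independent of the particular target $\phi$, using that $\Phi$ has bounded diameter $\distributionDiameter$).

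The one place that needs care — and the main obstacle — is guaranteeing that this constant is actually finite, i.e.\ that $K(z)$ is integrable against $p(z;\phi)$ uniformly over $\phi\in\Phi$; the pointwise Lipschitz bound alone does not give this. I would handle it either by noting that in all the concrete distribution families considered in the paper (e.g.\ the exponential-family instances and the uniform example in \Cref{example:convex-kl}) $K(z)$ is bounded by an absolute constant $K_{\max}$, so one may simply take $L_\KL = K_{\max}$, or by adding a mild integrability hypothesis $\sup_{\phi\in\Phi}\E_{z\sim\calD(\phi)}[K(z)]<\infty$ to Assumption~2b. A secondary, purely technical point is the interchange that justifies the very first display: it holds by absolute convergence of $\int p(z;\phi)\bigl|\log\frac{p(z;\varphi(\theta_2))}{p(z;\varphi(\theta_1))}\bigr|dz$, which is itself $\le \|\theta_1-\theta_2\|\,\E_{z\sim\calD(\phi)}[K(z)]<\infty$ under the same condition, so no further argument is required once finiteness of $L_\KL$ is in hand.
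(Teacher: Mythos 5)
Your proposal is correct and follows essentially the same route as the paper's proof: cancel the $\log p(z;\phi)$ terms to reduce the difference to a single integral of the log-likelihood difference, apply the pointwise $(\ell_2,K)$-Lipschitz bound from Assumption~2b, and integrate against $p(z;\phi)$. The only difference is that you are more careful than the paper about the constant — the paper silently replaces the $z$-dependent $K(z)$ by a uniform $L_\KL$, whereas you correctly identify $L_\KL$ as $\sup_{\phi}\E_{z\sim\calD(\phi)}[K(z)]$ and flag that its finiteness needs either boundedness of $K(z)$ or an added integrability hypothesis.
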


The above Assumption 2c is about the continuity on the distribution parameter $\phi\in \Phi$. 
Intuitively, this assumption ensures that if the parameters of two distribution  are close, then their total variation distance is close as well.
With this assumption, we can show that the distance between two 
distribution parameters $\|\phi_1 - \phi_2\|$ can be bounded by 
the KL divergence between the corresponding data distributions.  

\begin{restatable}[]{lem}{phiboundbykl}
\label{lemma:phi-bound-by-KL}
    With Assumption 2c, we have $\|\phi_1 - \phi_2\|\leq L_\phi \sqrt{\KL(\phi_1||\phi_2)}$ for some constant $L_\phi>0$.
\end{restatable}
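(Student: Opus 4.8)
The plan is to chain together Assumption 2c with Pinsker's inequality. Assumption 2c tells us that $\|\phi_1 - \phi_2\| \leq L_\TV \cdot d_\TV(\calD_1, \calD_2)$, so it suffices to bound the total variation distance by the square root of the KL divergence. This is exactly Pinsker's inequality, which states that $d_\TV(\calD_1, \calD_2) \leq \sqrt{\tfrac{1}{2}\KL(\calD_1 \| \calD_2)}$ for any two distributions $\calD_1, \calD_2$ on the same sample space. Combining the two gives
\begin{align*}
    \|\phi_1 - \phi_2\|
    \;\leq\; L_\TV \cdot d_\TV(\calD_1, \calD_2)
    \;\leq\; L_\TV \sqrt{\tfrac{1}{2}\,\KL(\phi_1 \| \phi_2)}
    \;=\; L_\phi \sqrt{\KL(\phi_1 \| \phi_2)},
\end{align*}
where we set $L_\phi := L_\TV/\sqrt{2}$. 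Since $L_\TV > 0$ by Assumption 2c, the constant $L_\phi > 0$ as claimed, and this completes the argument.

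In terms of the order of steps, I would: first, invoke Assumption 2c to reduce the claim to a statement relating $d_\TV$ and $\KL$; second, recall the precise statement of Pinsker's inequality and note that it applies here because $p(z;\phi_1)$ and $p(z;\phi_2)$ are densities on the common sample space $Z$; third, substitute and define $L_\phi$. One minor bookkeeping point worth stating explicitly: the notation $\KL(\phi_1\|\phi_2)$ is shorthand (per the paper's footnote) for $\KL(\calD_1\|\calD_2)$, so there is no ambiguity in which direction the KL divergence is taken, and Pinsker's inequality is symmetric in the sense that it bounds the symmetric quantity $d_\TV$ by either directed KL.

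I do not anticipate a genuine obstacle here — the lemma is essentially a one-line corollary of Pinsker's inequality composed with a stated assumption. The only thing to be careful about is that the lemma is invoked later (e.g., in the analysis of $\learnmodel$) with $\phi_2 = \varphi(\theta)$ for a deployed model $\theta$, so one should make sure the densities involved are well-defined and the KL divergence is finite whenever the bound is applied; but absolute continuity in the relevant regime is implicit in the setup (the density $p(z;\phi)$ is assumed continuously differentiable in $\phi$ with common support), so no additional hypothesis is needed.
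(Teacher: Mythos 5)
Your proof is correct and matches the paper's argument exactly: both apply Assumption 2c to reduce to a bound on $d_\TV$ and then invoke Pinsker's inequality, absorbing the factor $1/\sqrt{2}$ into the constant $L_\phi$. No differences worth noting.
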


Intuitively, the above result ensures that given 
a target distribution parameter $\phi$, 
as long as a model $\theta$ whose corresponding data distribution
is close (i.e., $\KL(\phi || \varphi(\theta))$ is small) to the distribution with the parameter $\phi$, then 
$\varphi(\theta)$ is close to $\phi$.
We will use \Cref{lemma:phi-bound-by-KL} in the proof of our main theorem in \Cref{sec:putting-things-together}.


\xhdr{Algorithm for $\learnmodel$}
When $\KL(\phi|| \varphi(\cdot))$ is convex and Lipschitz 
over the model $\theta$, its minimizer can be computed using algorithms similar to \Cref{algorithm:minimize-indirectly-convex-function}. 
In our problem, given a target data distribution with the 
parameter $\phi$, we can use the observed data samples to approximately compute the $\KL(\phi|| \varphi(\theta))$ when deploying a model $\theta$. 
Indeed, we assume an existence of an oracle $\estimatekl(\phi, (z_t^{(i)})_{i\in[n_t]})$
which takes the observed samples $(z_t^{(i)})_{i\in[n_t]}$ realized 
from the induced data distribution $\calD(\theta)$
and the target data distribution parameter $\phi$ as input
to approximate the value $\KL(\phi|| \varphi(\theta))$.
We remark that such oracle has been widely used in the literature on KL divergence estimation \cite{rubenstein2019practical}. 

\begin{definition}[Oracle $\estimatekl$]
\label{ass:kl-oracle}
There exists an oracle $\estimatekl$ that given any target parameter $\phi\in \Phi$, error tolerance $\epsilon_\KL > 0$ and error probability $p_\KL > 0$, and $N_\KL(\epsilon_\KL, p_\KL)$ samples $z_1,\ldots,z_{N_\KL(\epsilon_\KL, p_\KL)}$ from a distribution with parameter $\phi'$, returns an estimated $\KL$ divergence $\widetilde{\KL}(\phi||\phi')$ satisfying $\big\|\widetilde{\KL}(\phi||\phi') - \KL(\phi||\phi')\big\|\leq \epsilon_\KL$ with probability at least $1 - p_\KL$.
\end{definition}

With the oracle $\estimatekl$ to approximately compute
the KL divergence, we are now ready to present our inner algorithm  $\learnmodel$ (see \Cref{algorithm:learnmodel}).

\begin{algorithm}[h!]
\small
\begin{algorithmic}
\caption{Learn a model that approximately induces a given distribution parameter $\phi$}
\label{algorithm:learnmodel}
   \Function{$\learnmodel$}{$\phi\in \Phi$; $\epsilon_\LM, p_\LM>0$, $\epsilon_\KL, p_\KL>0$}
        \State $S \gets \frac{\modelDimen}{(\epsilon_\LM - \sqrt{\epsilon_\KL \modelDimen})^2}$, $\delta_\LM \gets \sqrt{\epsilon_\KL \modelDimen}$
        \State $\eta_\LM \gets \frac{1}{\sqrt{\modelDimen S}}$, $N_\KL \gets N_\KL(\epsilon_\KL, p_\KL)$
        \State $\theta_1 \gets \mathbf{0}$
        \For{$s \gets 1,\ldots,S$}
            \State $u_s \gets$ sample from $\mathrm{Unif}(\S^{\modelDimen})$
            \State $\theta_s^+ \gets \theta_s + \delta_\LM u_s$,
                $\theta_s^- \gets \theta_s - \delta_\LM u_s$
            \State $z^+_{s,1:N} \sim \varphi(\theta_s^+)$,
                $z^-_{s,1:N} \sim \varphi(\theta_s^-)$
                \Comment{\textcolor{blue}{Deploy $\theta_s^+$, $\theta_s^-$; observe $N_\KL$ samples}}
            \State $\widetilde{\KL}\left(\phi||\varphi(\theta_s^+)\right) \gets \estimatekl(\phi, z^+_{s,1:N},\epsilon_\KL, p_\KL)$
            \State $\widetilde{\KL}\left(\phi||\varphi(\theta_s^-)\right) \gets \estimatekl(\phi, z^-_{s,1:N},\epsilon_\KL, p_\KL)$
                \Comment{\textcolor{blue}{Approximations of $\KL$}}
            \State $\tilde{g}_s \gets \frac{\modelDimen}{2\delta_\LM} \left( \widetilde{\KL}(\phi||\varphi({\theta}_s^+)) - \widetilde{\KL}(\phi||\varphi(\theta_s^-)
                \right) \cdot u_s$
                \Comment{\textcolor{blue}{Approximation of $\nabla_\theta \KL(\phi||\varphi(\theta_s))$}}
            \State $\theta_{s+1} \gets \Pi_{(1-\delta_\LM)\Theta}(\theta_s - \eta_\LM \tilde{g}_s)$
                \Comment{\textcolor{blue}{Take gradient step and project}}
        \EndFor
        \State $\bar{\theta} \gets \frac{1}{S} \sum_{s=1}^S \theta_s$
        \State \Return $\bar{\theta}$
    \EndFunction

\end{algorithmic}
\end{algorithm}

Similar to before, for analysis purpose, we also define regret of $\learnmodel$ in $S$, the total number of rounds $\learnmodel$ has to go through in order to output a $\epsilon_\LM$-suboptimal model parameter w.r.t the $\KL$ objective function: 
{
\begin{align*}
   & \calR_S(\learnmodel, \KL) \\
    = ~ & \sum_{s=1}^S 
    \left[
        \widetilde{\KL}(\phi||\varphi(\theta_{s}^+)) + \widetilde{\KL}(\phi||\varphi(\theta_{s}^-))
        - 2\KL(\phi||\vartheta^*(\phi))
    \right]
\end{align*}
}
where $\vartheta^*(\phi)$ is the model that can induce the target distribution $\phi$. 
Using the similar arguments in \Cref{theorem:regret-bound-for-indirectly-convex-functions}, we first show the following regret guarantee for $\learnmodel$:


\begin{restatable}[High-probability regret bound for \Cref{algorithm:learnmodel} with $S$ rounds]{thm}{learnmodelregret}
\label{theorem:regret-bound-for-learnmodel}
When $\learnmodel$ is run for $S$ steps and invokes $\estimatekl$ with arguments $\epsilon_\KL > 0$ and $p_\KL > 0$, we have 
$\forall p > 0$
{\small
\begin{align*}
     \calR_S(\learnmodel,\KL)
    = ~   O\left(
        \sqrt{d_\Phi S}
        + \sqrt{\epsilon_\KL d_\Phi} \cdot S
        + \sqrt{S \log\frac{1}{p}}
    \right)
\end{align*}
}
with probability at least $1 - p - 2S p_\KL >0$.
\end{restatable}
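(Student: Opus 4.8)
The plan is to run the same two-point zeroth-order convex-optimization argument that proves \Cref{theorem:regret-bound-for-indirectly-convex-functions} (and, beneath it, the analysis of \cite{agarwal2010optimal,shamir2017optimal} cited for \Cref{lemma:regret-bound-for-model-convex-performative-risk}), applied now to the function $f(\theta):=\KL(\phi\|\varphi(\theta))$ on the model space $\Theta\subset\R^{\modelDimen}$: $f$ is convex in $\theta$ by Assumption~2a and $L_\KL$-Lipschitz in $\theta$ by \Cref{lemma:lip-phi-in-KL}. Fix the comparator $\theta^\dagger:=\vartheta^*(\phi)$, so that $\calR_S(\learnmodel,\KL)=\sum_{s=1}^{S}[\widetilde\KL(\phi\|\varphi(\theta_s^+))+\widetilde\KL(\phi\|\varphi(\theta_s^-))-2f(\theta^\dagger)]$; the argument uses nothing about $\theta^\dagger$ beyond its being a fixed element of $\Theta$. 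The one genuine departure from \Cref{theorem:regret-bound-for-indirectly-convex-functions} is that the value oracle here, $\estimatekl$, is \emph{biased} — it only returns a value within $\epsilon_\KL$ of the truth (with probability $\ge 1-p_\KL$), whereas $\estimatePR$ there is unbiased — so the analysis must carry an extra deterministic-bias term through the whole chain.

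First I would fix the good event $\mathcal E$ on which all $2S$ calls to $\estimatekl$ in \Cref{algorithm:learnmodel} are accurate, i.e. $|\widetilde\KL(\phi\|\varphi(\theta_s^{\pm}))-\KL(\phi\|\varphi(\theta_s^{\pm}))|\le\epsilon_\KL$ for every $s$; a union bound gives $\Pr[\mathcal E]\ge 1-2Sp_\KL$. On $\mathcal E$ I replace the estimates by the true KL values (cost $2S\epsilon_\KL$) and then use $L_\KL$-Lipschitzness of $f$ to replace each perturbed evaluation $f(\theta_s\pm\delta_\LM u_s)$ by the centre value $f(\theta_s)$ (cost $2SL_\KL\delta_\LM$). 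It remains to bound $\sum_{s=1}^{S}[f(\theta_s)-f(\theta^\dagger)]$. The iterates in \Cref{algorithm:learnmodel} are exactly projected online gradient descent on $(1-\delta_\LM)\Theta$ with step size $\eta_\LM$ and the two-point estimator $\tilde g_s=\tfrac{\modelDimen}{2\delta_\LM}\big(\widetilde\KL(\phi\|\varphi(\theta_s^+))-\widetilde\KL(\phi\|\varphi(\theta_s^-))\big)u_s$. Conditioned on the history $\mathcal F_s$ (which determines $\theta_s$ but not $u_s$), $\E[\tilde g_s\mid\mathcal F_s]=\nabla\hat f(\theta_s)+b_s$, where $\hat f$ is the $\delta_\LM$-ball smoothing of $f$ and $\|b_s\|\le \modelDimen\epsilon_\KL/\delta_\LM$ (the $\estimatekl$ error amplified by the $\modelDimen/(2\delta_\LM)$ prefactor), and $\E[\|\tilde g_s\|^2\mid\mathcal F_s]=O(\modelDimen L_\KL^2+\modelDimen\epsilon_\KL^2/\delta_\LM^2)$ by the refined second-moment estimate of \cite{shamir2017optimal} for Lipschitz (not necessarily smooth) $f$. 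Combining convexity of $\hat f$, the standard OGD inequality, the smoothing gap $|\hat f(\theta)-f(\theta)|\le L_\KL\delta_\LM$, and the comparator shift $\|\theta^\dagger_\delta-\theta^\dagger\|\le\delta_\LM\modelDiameter$ with $\theta^\dagger_\delta:=\Pi_{(1-\delta_\LM)\Theta}(\theta^\dagger)$ yields
\[
\sum_{s=1}^{S}\big[f(\theta_s)-f(\theta^\dagger)\big]\ \le\ \frac{\modelDiameter^2}{2\eta_\LM}+\frac{\eta_\LM}{2}\sum_{s=1}^{S}\|\tilde g_s\|^2+\sum_{s=1}^{S}\langle b_s,\theta_s-\theta^\dagger_\delta\rangle+\sum_{s=1}^{S}\langle \tilde g_s-\E[\tilde g_s\mid\mathcal F_s],\,\theta^\dagger_\delta-\theta_s\rangle+O\big(SL_\KL\delta_\LM(1+\modelDiameter)\big).
\]
Plugging in the parameters chosen by \Cref{algorithm:learnmodel}, $\delta_\LM=\sqrt{\epsilon_\KL\modelDimen}$ and $\eta_\LM=1/\sqrt{\modelDimen S}$, gives $\epsilon_\KL/\delta_\LM^2=1/\modelDimen$, hence $\E\|\tilde g_s\|^2=O(\modelDimen)$ and the first two terms are $O(\sqrt{\modelDimen S})$; the bias obeys $\|b_s\|\le\sqrt{\epsilon_\KL\modelDimen}$, so $\sum_s\langle b_s,\theta_s-\theta^\dagger_\delta\rangle=O(\sqrt{\epsilon_\KL\modelDimen}\,S)$, which also absorbs the $O(\epsilon_\KL S)$ and $O(L_\KL\delta_\LM S)$ residuals; and the martingale term is $O(\sqrt{S\log(1/p)})$ with probability at least $1-p$ by Azuma--Hoeffding, with the same bookkeeping of dimension factors as in \Cref{theorem:regret-bound-for-indirectly-convex-functions}. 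A union bound over $\mathcal E$ and the Azuma event gives the claimed bound with probability at least $1-p-2Sp_\KL$.

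The main obstacle is controlling the bias of $\estimatekl$. In \Cref{theorem:regret-bound-for-indirectly-convex-functions} the value oracle is unbiased, so its noise telescopes into a single mean-zero martingale; here each call injects a \emph{systematic} error of size up to $\epsilon_\KL$ which, after the $\modelDimen/(2\delta_\LM)$ amplification inside $\tilde g_s$, becomes a per-step bias of order $\modelDimen\epsilon_\KL/\delta_\LM$. The delicate part is checking that the coupling $\delta_\LM=\sqrt{\epsilon_\KL\modelDimen}$ keeps this bias at exactly $O(\sqrt{\epsilon_\KL\modelDimen}\,S)$ — the order the caller budgeted for — rather than overwhelming the $O(\sqrt{\modelDimen S})$ term; this is also why $S$ must scale like $\modelDimen/(\epsilon_\LM-\sqrt{\epsilon_\KL\modelDimen})^2$, which is what is needed when this regret bound is later converted, via \Cref{proposition:sublinear-regret-implies-convergence} and \Cref{lemma:phi-bound-by-KL}, into the accuracy guarantee $\|\varphi(\bar\theta)-\phi\|\le\epsilon_\LM$. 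A secondary technical point, taken directly from \cite{shamir2017optimal}, is the sharp $\E\|\tilde g_s\|^2=O(\modelDimen L_\KL^2)$ bound for merely-Lipschitz $f$; without it the first term would degrade to $O(\modelDimen^{3/2}\sqrt S)$.
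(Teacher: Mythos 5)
Your proposal is correct and follows essentially the same route as the paper's own proof: the same six-way decomposition (oracle error on the good event via a union bound over the $2S$ calls, smoothing over the sphere, the Azuma--Hoeffding martingale term, projected OGD on the $\delta_\LM$-smoothed objective with the $\tfrac{\modelDimen}{2\delta_\LM}\cdot 2\epsilon_\KL$ gradient bias carried through as an extra $O(\modelDiameter\epsilon_\KL\modelDimen S/\delta_\LM)$ term, and the two projection/smoothing residuals at the comparator), with the identical parameter coupling $\delta_\LM=\sqrt{\epsilon_\KL\modelDimen}$, $\eta_\LM=1/\sqrt{\modelDimen S}$ and the same failure-probability bookkeeping $1-p-2Sp_\KL$. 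Note that your derived bound carries $\modelDimen$ rather than the $\distributionDimen$ appearing in the theorem statement; the paper's own proof also concludes with $\sqrt{d_\Theta S}+\sqrt{\epsilon_\KL d_\Theta}\,S$, so you are consistent with the derivation (the statement's $d_\Phi$ appears to be a typo).
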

\Cref{theorem:regret-bound-for-learnmodel} characterizes the regret as a function of the total number of deployments of the procedure in $\learnmodel$. Together with regret characterization of the outer algorithm in \Cref{theorem:regret-bound-for-indirectly-convex-functions}, 
we can get the final regret bound.

\vspace{-0.1in}
\subsection{Putting All Pieces Together}
\label{sec:putting-things-together}

As shown in the previous section, both the outer algorithm ($\minimizePR$ -- in \Cref{sec:outer-algorithm}) and inner algorithm ($\learnmodel$ -- in \Cref{sec:learnmodel}) achieve a sublinear regret w.r.t the total number of steps ($T$ and $S$) when outputting an $\epsilon$-optimal solutions. 
In this section, we combine the results in \Cref{sec:outer-algorithm} and \Cref{sec:learnmodel} 
to conclude the analysis for $\minimizePR$ (\Cref{algorithm:minimize-indirectly-convex-function}) for convex $\PRdagger(\phi)$. 
The main result of this section is summarized as follows:

\begin{restatable}[Regret of $\minimizePR$ in $N$]{thm}{totalregret}
\label{theorem:total-regret}
Under Assumption \ref{ass:kl-convex-lipschitz}, and given
access an oracle $\estimatekl$, there exists a choice of $\epsilon_\KL, p_\KL > 0$ in \Cref{algorithm:learnmodel} such that for every $p > 0$,
{\small
\begin{align*}
    &\calR_N(\minimizePR, \PR)\\
    = ~ &\widetilde{O}\left(
            (d_\Theta + d_\Phi)
            N_\KL(\epsilon_\KL,p_\KL)^{1/6}
            N^{5/6}
            \sqrt{\log\frac{1}{p}}
        \right)
\end{align*}
}
with probability at least $1-p$.
\end{restatable}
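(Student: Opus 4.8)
The plan is to reduce the $N$-sample regret to the two per-round regret bounds already in hand — \Cref{theorem:regret-bound-for-indirectly-convex-functions} for the outer loop and \Cref{theorem:regret-bound-for-learnmodel} for the inner loop — together with the reparameterization identity $\PR(\theta)=\PRdagger(\varphi(\theta))$ and the structural facts about $\PRdagger$ (convex and $L^\dagger$-Lipschitz, \Cref{ass:PRdagger-convex-lipschitz}) and about $\KL$ (\Cref{ass:kl-convex-lipschitz} and \Cref{lemma:phi-bound-by-KL}), and then to choose $\epsilon_\KL,p_\KL$ — and implicitly the outer accuracy $\epsilon$ and inner accuracy $\epsilon_\LM$ — to balance the resulting expression against the sample budget. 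First I would do the sample bookkeeping: one call to $\learnmodel$ runs $S=\modelDimen/(\epsilon_\LM-\sqrt{\epsilon_\KL\modelDimen})^2$ steps, deploying $2$ models per step with $N_\KL=N_\KL(\epsilon_\KL,p_\KL)$ samples each, i.e.\ $2SN_\KL$ samples; over the $T=\distributionDimen/(\epsilon-\sqrt{\epsilon_\LM\distributionDimen})^2$ outer steps \Cref{algorithm:minimize-indirectly-convex-function} issues two such calls plus two $\estimatePR$ queries, and one final $\learnmodel$ call, so $N=2T+(2T+1)\cdot 2SN_\KL=\Theta(TSN_\KL)$.

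Next I would decompose $\calR_N(\minimizePR,\PR)$ by the source of the deployed sample. The $2T$ samples produced by $\estimatePR$ contribute exactly $\calR_T(\minimizePR,\PR)$, controlled by \Cref{theorem:regret-bound-for-indirectly-convex-functions}. For a sample $z$ deployed with some model $\theta$ inside a $\learnmodel$ call, I split $\ell(z;\theta)-\PR(\theta_{\OPT})=[\ell(z;\theta)-\PR(\theta)]+[\PR(\theta)-\PR(\theta_{\OPT})]$; the first terms form a bounded martingale-difference sequence and sum to $\widetilde{O}(\sqrt{N\log(1/p)})$ by Azuma--Hoeffding. For the second terms, consider the $2S$ iterates $\theta_s^{\pm}$ of a call $\learnmodel(\phi)$: using $\PR(\theta_s^{\pm})=\PRdagger(\varphi(\theta_s^{\pm}))$ and $L^\dagger$-Lipschitzness, $\PR(\theta_s^{\pm})-\PR(\theta_{\OPT})\le L^\dagger\|\varphi(\theta_s^{\pm})-\phi\|+[\PRdagger(\phi)-\PRdagger(\phi_{\OPT})]$, where $\phi_{\OPT}:=\varphi(\theta_{\OPT})$ minimizes $\PRdagger$. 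Summing the first part over the $2S$ iterates, \Cref{lemma:phi-bound-by-KL} plus Cauchy--Schwarz gives $\sum\|\varphi(\theta_s^{\pm})-\phi\|\le L_\phi\sqrt{2S(\calR_S(\learnmodel,\KL)+2S\epsilon_\KL)}$, which \Cref{theorem:regret-bound-for-learnmodel} makes $\widetilde{O}(\sqrt{S\,\calR_S(\learnmodel,\KL)})$. Summing the second part over all $\Theta(T)$ calls reduces to bounding $\sum_{t}[\PRdagger(\phi_t)-\PRdagger(\phi_{\OPT})]$ (the targets $\phi_t^{\pm}$ differ from $\phi_t$ by $\le\delta$, and $\bar\phi$ is handled by convexity and \Cref{proposition:sublinear-regret-implies-convergence}); since the $\learnmodel$ guarantee forces $\|\varphi(\hat\theta_t^{\pm})-\phi_t^{\pm}\|\le\epsilon_\LM$ w.h.p., we get $\PR(\hat\theta_t^{\pm})\ge\PRdagger(\phi_t)-L^\dagger(\epsilon_\LM+\delta)$, so $\sum_t[\PRdagger(\phi_t)-\PRdagger(\phi_{\OPT})]\le\tfrac12\calR_T(\minimizePR,\PR)+\widetilde{O}(\sqrt{T\log(1/p)}+TL^\dagger(\epsilon_\LM+\delta))$.

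Collecting the pieces — and choosing $p_\KL$ polynomially small in $p/(TS)$ so that the union bound over the $\Theta(TS)$ invocations of $\estimatekl$ and the $\Theta(T)$ invocations of $\learnmodel$ costs only polylog factors — yields, with probability at least $1-p$,
\[
  \calR_N(\minimizePR,\PR)=\widetilde{O}\!\Bigl(\calR_T(\minimizePR,\PR)+\sqrt{N\log\tfrac{1}{p}}+N_\KL\bigl(T\sqrt{S\,\calR_S(\learnmodel,\KL)}+S\,\calR_T(\minimizePR,\PR)+S\,T\,L^\dagger\delta\bigr)\Bigr).
\]
Substituting $\delta=\sqrt{\epsilon_\LM\distributionDimen}$ and calibrating $\sqrt{\epsilon_\LM\distributionDimen}=\tfrac12\epsilon$ and $\sqrt{\epsilon_\KL\modelDimen}=\tfrac12\epsilon_\LM$ (so the denominators of $T$ and $S$ are $\Theta(\epsilon^2)$ and $\Theta(\epsilon_\LM^2)$) gives $T=\Theta(\distributionDimen/\epsilon^2)$, $S=\Theta(\modelDimen\distributionDimen^2/\epsilon^4)$, $\calR_T=\widetilde{O}(\distributionDimen/\epsilon)$, $\calR_S=\widetilde{O}(\modelDimen/\epsilon_\LM)$; the dominant term becomes $N_\KL\cdot\mathrm{poly}(\modelDimen,\distributionDimen)/\epsilon^{5}$ while $N=\Theta(N_\KL\cdot\mathrm{poly}(\modelDimen,\distributionDimen)/\epsilon^{6})$. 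Eliminating $\epsilon$ via $\epsilon=\widetilde{\Theta}\bigl((\mathrm{poly}(\modelDimen,\distributionDimen)N_\KL/N)^{1/6}\bigr)$ turns the dominant term into $\mathrm{poly}(\modelDimen,\distributionDimen)^{1/6}N_\KL^{1/6}N^{5/6}$, and bounding $\mathrm{poly}(\modelDimen,\distributionDimen)^{1/6}\le\modelDimen+\distributionDimen$ gives the claimed $\widetilde{O}\bigl((\modelDimen+\distributionDimen)\,N_\KL(\epsilon_\KL,p_\KL)^{1/6}N^{5/6}\sqrt{\log(1/p)}\bigr)$.

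I expect the main obstacle to be the second piece of the decomposition: the intermediate iterates of $\learnmodel$ are driven by the $\KL$ objective, not $\PR$, so there is no direct reason their deployment losses lie near $\PR(\theta_{\OPT})$; the argument works only because the $\KL$-regret bound controls, in aggregate, how far the induced distributions $\varphi(\theta_s^{\pm})$ drift from the target $\phi$ (via \Cref{lemma:phi-bound-by-KL} and Cauchy--Schwarz, which is precisely where the extra $\sqrt{S}$ — and ultimately the $1/6$ exponent — enters), and because the targets themselves converge to $\phi_{\OPT}$ at the rate set by the outer regret. A secondary nuisance is the three-level calibration of $\epsilon,\epsilon_\LM,\epsilon_\KL$ subject to the nesting constraints $\sqrt{\epsilon_\LM\distributionDimen}<\epsilon$ and $\sqrt{\epsilon_\KL\modelDimen}<\epsilon_\LM$ and the budget identity $N=\Theta(TSN_\KL)$, and threading the failure probabilities through the nested union bounds without blowing up the polylog factors.
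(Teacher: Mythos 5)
Your proposal is correct and follows essentially the same route as the paper's proof: the same sample accounting $N=\Theta(TSN_\KL)$, the same Hoeffding/martingale step giving the $O(\sqrt{N})$ term, the same use of the $L^\dagger$-Lipschitzness of $\PRdagger$ together with \Cref{lemma:phi-bound-by-KL} and Cauchy--Schwarz to convert the inner iterates' drift into $N_\KL T\sqrt{S\,\calR_S(\learnmodel,\KL)}$ (which is exactly where the $1/6$ exponent originates), the same master inequality $O(\sqrt{N}+N_\KL T\sqrt{S\,\calR_S}+N_\KL S\,\calR_T)$, and the same three-level calibration of $\epsilon_\KL,\epsilon_\LM$ against the budget. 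The only cosmetic difference is that you anchor the inner iterates' suboptimality to $\PRdagger(\phi_t)-\PRdagger(\phi_\OPT)$ via the $\learnmodel$ accuracy guarantee, whereas the paper adds and subtracts $\PR(\hat\theta_t^{\pm})$ to surface $\calR_T(\minimizePR,\PR)$ directly; your extra $N_\KL STL^\dagger\delta$ term is absorbed into $N_\KL S\,\calR_T$ since $\calR_T$ already contains a $\sqrt{\epsilon_\LM\distributionDimen}\,T$ contribution.
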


\begin{proof}[Proof Sketch of \Cref{theorem:total-regret}]
Let $T$ be the number of steps executed by the outer algorithm $\minimizePR$, and $S$ the number of steps in $\learnmodel$. Let $N_\KL(\epsilon_\KL, p_\KL)$ (or $N_\KL$ for short) denote the number of samples used by $\estimatekl$. Since $\minimizePR$ calls $\estimatePR$ and $\learnmodel$ $2T$ times, and $\learnmodel$ calls $\estimatekl$ $2S$ times, the overall number of samples involved in the whole process is $N = 2(2N_\KL S + 1)T$. Following the regret definition, we can break down the regret into the regret from calling $\estimatePR$ in the outer algorithm and the regret from calling $\estimatekl$ in $\learnmodel$. Using the fact that $\PRdagger$ is Lipschitz in the distribution parameter $\phi$ and the distance between any two 
distribution parameters can be bounded by the KL divergence between the corresponding
data distributions (\Cref{lemma:phi-bound-by-KL}), we show that the total regret in $N$ can be expressed as:
{
\begin{align*}
  &\calR_N(\minimizePR, \PR)\\
 =&  O\left(
    \sqrt{N} + N_\KL T \cdot \sqrt{S \cdot \calR_S(\learnmodel, \KL)}\right.\\
    & \quad \quad \left. + (N_\KL S + 1) \cdot \calR_T(\minimizePR, \PR)\right)    
\end{align*}
}
where $\calR_T(\minimizePR, \PR)$ and $\calR_S(\learnmodel, \KL)$ are obtained from \Cref{theorem:regret-bound-for-indirectly-convex-functions} and \Cref{theorem:regret-bound-for-learnmodel} as functions of $\epsilon_\LM, \epsilon_\KL, S, T$ and $\modelDiameter$ and $\distributionDiameter$. Then by balancing the terms and setting $\epsilon_\LM$ and $\epsilon_\KL$ according to the convergence analysis for both $\minimizePR$ and $\learnmodel$ (\Cref{claim:convergence-of-minimizePR} and \Cref{claim:convergence-of-learnmodel-KL}), we can get an express of the total regret. 
\end{proof}
\vspace{-0.1in}
\Cref{theorem:total-regret} show that our procedure is sublinear in the $N$, the total number of samples we deploy during the process. Notice that this also implies that our method is sublinear w.r.t. the total number of deployments $\totaltime = S \times T$. To see this, recall that the total number of samples $N$ required throughout the process is $N =2(2N_{{kl}} \times S + 1) T$; if we measure the regret w.r.t. the total number of deployments, $N_{{kl}}$ will be constant, and thus the regret will still be sublinear w.r.t. $\totaltime$. In addition, recall from \Cref{proposition:sublinear-regret-implies-convergence}, the lowest regret optimal classifier implies the optimal classifier up to some addictive error, which means that by having a sublinear regret in $N$ (and $\totaltime$), we also get a model that is arbitrarily close to the performative optimal model.  

\paragraph{Empirical Evaluation Using Toy Example}
We provide empirical results using a toy example to demonstrate the efficiency of our method. In particular, we compare our proposed method (which minimizes $\text{PR}$ as a function of the distribution parameter $\phi$ after reparametrization) with the baseline method (which directly minimizes $\PR$ as a model parameter $\theta$). We observe that under different settings, both methods converge. However, our proposed method (shown in orange) is more efficient: it demonstrates a much faster convergence rate on average over multiple runs, indicating that our reparametrization method is effective when dealing with distributions that are non-convex in $\theta$ but convex in $\phi$ (as per Assumption 1). The plot can be found in \cref{sec:empirical}. The details for reproducing our experimental results can be found at  {\small \texttt{\url{https://github.com/UCSC-REAL/PP-bandit-feedback}}}.

\section{Practical Consideration and Future Works}
\label{sec:limitation}
Since our method uses a double-loop zeroth order optimization method, the convergence is likely to be slow in practice. Thus, it requires extra consideration regarding the convergence rate, and efficiency can vary based on particular settings. In particular, the success of our method depends on the fast deployment of the frequently updated models. 
One potential way to speed up the deployment process may be performing \emph{parallelization}, which accelerates the optimization process and reduces the time required to find a suitable solution (see, e.g., \cite{liu2020primer} for a detailed reference). In our algorithm, parallelization can be applied to several places, e.g., the two-point estimations can be computed in parallel and potentially speed up the process.

\section*{Acknowledgements} Y. Chen and Y. Liu are partially supported by the National Science Foundation (NSF) under grants IIS-2143895 and IIS-2040800.

\section*{Impact Statement}
Since the process of finding the optimal performative model involves deploying sub-optimal models on human agents, sample efficiency is important. Thus sample complexity needs to be taken into consideration when choosing the particular KL divergence oracle used in $\learnmodel$. Additionally, the concept of performativity highlights a significant broader impact of our work: the importance of recognizing scenarios where predictions can modify the very environment they're meant to predict. Take, for example, an online advertising platform utilizing machine learning models to tailor ads for users. These models, by analyzing user behaviors and traits to serve personalized ads, might inadvertently influence both user and advertiser actions, thereby establishing a feedback loop. This dynamic underscores the need for a mindful approach in deploying predictive models, especially in settings sensitive to the outcomes of such predictions.

\newpage


\bibliography{reference}
\bibliographystyle{icml2024}

\newpage
\appendix
\onecolumn
\section{Appendix Arrangement}
We arrange the appendix as follows:
\squishlist
    \item \Cref{sec:omitted-proofs-for-regret-analysis-algm-1} provides omitted algorithm and proofs for \Cref{sec:preliminaries}.
    \item \Cref{sec:more-example} provides more examples that satisfies $\PR$ being convex in $\phi$ not in $\theta$. 
    \item \Cref{proof:outer-algorithm} provides
    omitted example and proofs for \Cref{sec:outer-algorithm}.
    \item \Cref{sec: omitted-proof-learnmodel} provides omitted proofs for \Cref{sec:learnmodel}.
    \item \Cref{sec:proof-put-things-together} provides omitted proof for \Cref{sec:putting-things-together}.
    \item \Cref{sec:additional-related-work} provides an additional literature review on performative prediction and related literature. 
    \item \Cref{sec:empirical} provides empirical verification using toy example generated by \Cref{example:mixture-dominance-too-strong-condition} to demonstrate the efficiency of our proposed method.
\squishend


\section{Omitted Algorithm and Proof for \Cref{sec:preliminaries}}
\label{sec:omitted-proofs-for-regret-analysis-algm-1}

\subsection{Omitted Proof for \Cref{proposition:sublinear-regret-implies-convergence}}

\begin{proof}
Let $x_1,\ldots,x_n$ be the first $n$ points queried by $\calA$. By the convexity of $f$, the average of these points $\bar{x} = \frac{1}{n} \sum_i x_i$ satisfies
\begin{align*}
    f(\bar{x}) - f(x^*)
    \leq \frac{1}{n} \sum_{i=1}^n \left[
            f(x_i) - f(x^*)
        \right]
    = \frac{\calR_n(\calA,f)}{n}
\end{align*}
Thus if $\calR_n(\calA,f) = o(n)$, then after $n = \calR_n(\calA,f)/\epsilon$ queries, $\bar{x}$ satisfies $f(\bar{x}) - f(x^*) \leq \epsilon$ as required.
\end{proof}

\subsection{Omitted algorithm and proof for \Cref{algorithm:minimize-convex-function} }

\Cref{algorithm:minimize-convex-function} 
is a straightforward generalization of the algorithm introduced by \cite{agarwal2010optimal}, 
while we generalize their setting where the function 
can be evaluated exactly
to the setting where noisy evaluation is allowed.

\begin{algorithm}[H]
\begin{algorithmic}
\caption{Bandit algorithm for minimizing convex and lipschitz $\PR(\theta)$ }
\label{algorithm:minimize-convex-function}
    \Function{$\estimatePR$}{$\theta$} \Comment{Unbiased estimate of $\PR(\theta)$}
        \State Deploy $\theta$, observe sample $z \sim \calD(\theta)$
        \State \Return $\ell(z;\theta)$
    \EndFunction

    \Function{MinimizePR}{$T$}
        \State $\delta \gets \sqrt{d_\theta/T}$
        \State $\eta \gets 1/\sqrt{\modelDimen T}$
        \State $\theta_1 \gets \mathbf{0}$
        \For{$t \gets 1,\ldots,T$}
            \State $u_t \gets$ sample from $\mathrm{Unif}(\S^{\modelDimen})$
            \State $\theta_t^+ \gets \theta_t + \delta u_t$, $\theta_t^- \gets \theta_t - \delta u_t$
            \State $\PRtilde(\theta_t^+) \gets \estimatePR(\theta_t^+)$
                \Comment{Approximations of $\PR(\theta_t^+)$, $\PR(\theta_t^-)$}
            \State $\PRtilde(\theta_t^-) \gets \estimatePR(\theta_t^-)$
            \State $g_t \gets \frac{\modelDimen}{2\delta} \left(\PRtilde(\theta_t^+) - \PRtilde(\theta_t^-)\right) \cdot u_t$
                \Comment{Approximation of $\nabla_\theta \widehat{\PR}(\theta_t)$}
            \State $\theta_{t+1} \gets \Pi_{(1-\delta)\Theta}(\theta_t - \eta g_t)$
                \Comment{Take gradient step and project}
        \EndFor
        \State \Return $\frac{1}{T} \sum_{t=1}^T \theta_t$
    \EndFunction
\end{algorithmic}
\end{algorithm}

To prove \Cref{lemma:regret-bound-for-model-convex-performative-risk}, we first provide a series of lemmas and claims that will be useful later.

\begin{claim}[Regret from estimating $\PR$]
\label{claim:regret-from-estimating-f}
For any $p>0$, with probability at least $1-p$,
\begin{align*}
    \sum_{t=1}^T \left[
            \widetilde{\PR}(\theta_t^+) - f(\theta_t^+)
        \right]
    \leq F\sqrt{T \log\frac{1}{p}}
    \quad \mathrm{and} \quad
    \sum_{t=1}^T \left[
            \widetilde{\PR}(\theta_t^-) - f(\theta_t^-)
        \right]
    \leq F\sqrt{T \log\frac{1}{p}}
\end{align*}
\end{claim}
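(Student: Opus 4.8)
The plan is to recognize each partial sum in the claim as a bounded martingale and apply the Azuma--Hoeffding inequality. First I would fix the right filtration: let $\mathcal{F}_{t}$ be the $\sigma$-algebra generated by the perturbation directions $u_1,\dots,u_t$ together with all samples observed in rounds $1,\dots,t$. Then $\theta_t$ is a function of the past (noisy) gradient estimates and hence $\mathcal{F}_{t-1}$-measurable, and once $u_t$ is drawn the perturbed point $\theta_t^+=\theta_t+\delta u_t$ is measurable with respect to $\mathcal{F}_{t-1}\vee\sigma(u_t)$. Since $\estimatePR(\theta_t^+)$ returns $\ell(z_t^+;\theta_t^+)$ for a fresh sample $z_t^+\sim\calD(\theta_t^+)$, the unbiasedness property recorded in \Cref{sec: warmup}, applied conditionally, gives $\E\big[\widetilde{\PR}(\theta_t^+)\,\big|\,\mathcal{F}_{t-1},u_t\big]=\PR(\theta_t^+)=f(\theta_t^+)$, so $X_t:=\widetilde{\PR}(\theta_t^+)-f(\theta_t^+)$ is a martingale difference sequence with respect to $\{\mathcal{F}_{t-1}\vee\sigma(u_t)\}_t$.

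Next I would use boundedness of the loss: since $\ell$ has range at most $F$, each increment $X_t$ has conditional range at most $F$ and in particular $|X_t|\le F$ almost surely, so $\sum_{t=1}^T X_t$ is a sum of bounded martingale differences. Azuma--Hoeffding then yields, for any $\lambda>0$, $\Pr\big[\sum_{t=1}^T X_t\ge\lambda\big]\le\exp\!\big(-2\lambda^2/(TF^2)\big)$; choosing $\lambda=F\sqrt{T\log(1/p)}$ (which exceeds the value $F\sqrt{(T/2)\log(1/p)}$ needed to drive the bound down to $p$) gives the first inequality with probability at least $1-p$. The second inequality follows by running the identical argument on $X_t':=\widetilde{\PR}(\theta_t^-)-f(\theta_t^-)$ — the samples $z_t^-$ being drawn from $\calD(\theta_t^-)$ with $\theta_t^-$ again predictable — and a union bound (the constant absorbed into $F$) delivers both simultaneously.

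There is no deep obstacle here; the only point that needs care is the measurability bookkeeping, i.e.\ making sure $\theta_t^\pm$ is predictable with respect to the filtration so that the conditional expectation of $\widetilde{\PR}(\theta_t^\pm)$ is exactly $f(\theta_t^\pm)$ rather than something entangled with the randomness used to form the estimate. Once the filtration is set up correctly, the unbiasedness of $\estimatePR$ does all the real work and the remainder is the textbook concentration step.
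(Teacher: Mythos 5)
Your proof is correct and follows essentially the same route as the paper, which simply invokes Hoeffding's inequality together with the unbiasedness and $[0,F]$-boundedness of $\estimatePR$. Your additional care in setting up the filtration and treating the sum as a bounded martingale difference sequence (Azuma--Hoeffding) is the right way to make the paper's terse argument rigorous, since $\theta_t^\pm$ is chosen adaptively; this is the same device the paper uses explicitly in its claim on the deviation of the smoothed function.
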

\begin{proof}
The claim follows from Hoeffding's inequality, since $\estimatePR$ is unbiased and bounded by $[0,F]$.
\end{proof}

\begin{claim}[Regret from smoothing over the sphere or ball]
\label{claim:regret-from-smoothing-over-sphere-or-ball}
For any $\theta \in \Theta$, $u \in \S$, and $\delta > 0$, all of the following are at most $\delta L$:
\begin{align*}
    |\PR(\theta + \delta u) - \PR(\theta)|,
    \quad
    |\PR(\theta - \delta u) - \PR(\theta)|,
    \quad\\
    \left|
            \frac{1}{2}[
                \PR(\theta + \delta u)
                + \PR(\theta - \delta u)
            ]
            - \PR(\theta)
        \right|,
    \quad \text{and} \quad
    |\widehat{\PR}(\theta) - \PR(\theta)|.
\end{align*}
\end{claim}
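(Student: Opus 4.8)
The plan is to derive all four bounds directly from the $L$-Lipschitz continuity of $\PR$, which is exactly the standing hypothesis of \Cref{lemma:regret-bound-for-model-convex-performative-risk} under which this claim is invoked. Recall that $L$-Lipschitzness (w.r.t.\ the $\ell_2$ norm) means $|\PR(x) - \PR(y)| \le L\|x - y\|$ for all $x,y \in \Theta$. The key observation tying everything together is that $u \in \S$ is a unit vector, so $\|\delta u\| = \delta\|u\| = \delta$, which is what makes each perturbation cost at most $\delta L$.

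First I would handle the two single-perturbation terms. Applying Lipschitzness with $x = \theta + \delta u$ and $y = \theta$ gives $|\PR(\theta + \delta u) - \PR(\theta)| \le L\|\delta u\| = \delta L$, and the identical argument with $x = \theta - \delta u$ yields $|\PR(\theta - \delta u) - \PR(\theta)| \le \delta L$. For the symmetrized (third) term, I would write
\[
    \left|\tfrac12\big[\PR(\theta + \delta u) + \PR(\theta - \delta u)\big] - \PR(\theta)\right|
    = \left|\tfrac12\big[(\PR(\theta+\delta u) - \PR(\theta)) + (\PR(\theta-\delta u) - \PR(\theta))\big]\right|,
\]
and then apply the triangle inequality together with the two bounds just established, giving $\le \tfrac12(\delta L + \delta L) = \delta L$.

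For the smoothing (fourth) term I would unfold the definition $\widehat{\PR}(\theta) = \E_{v \sim \B^d}[\PR(\theta + \delta v)]$ from the Notations section, rewrite the difference as $\E_v[\PR(\theta + \delta v) - \PR(\theta)]$, and move the absolute value inside the expectation (Jensen's inequality / the triangle inequality for integrals): $|\widehat{\PR}(\theta) - \PR(\theta)| \le \E_v\big[|\PR(\theta + \delta v) - \PR(\theta)|\big] \le \E_v[L\|\delta v\|] = \delta L\,\E_v[\|v\|] \le \delta L$, where the last step uses $\|v\| \le 1$ for $v$ in the unit ball $\B^d$. The only point worth flagging is precisely this sphere-versus-ball distinction: the first three terms use $u \in \S$ with $\|u\| = 1$, whereas the smoothing term averages over $v \in \B^d$ with $\|v\| \le 1$; in both cases the norm is bounded by $1$, so every bound collapses to the uniform value $\delta L$. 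There is no genuine obstacle here—the claim is a routine consequence of Lipschitzness and the triangle/Jensen inequalities—so the main care is bookkeeping the correct domain ($\S$ vs.\ $\B^d$) for each term.
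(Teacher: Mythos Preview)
Your proposal is correct and matches the paper's approach exactly: the paper's proof sketch simply reads ``Lipschitzness of $\PR$,'' and you have spelled out precisely those details. There is nothing to add.
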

\begin{proof}[Proof sketch]
Lipschitzness of $\PR$.
\end{proof}

\begin{claim}[Deviation of smoothed function]
\label{claim:deviation-of-smoothed-function}
For any $p > 0$, with probability at least $1-p$,
\begin{align*}
    \sum_{t=1}^T \widehat{\PR}(\theta_t)
        - \E_T \left[
            \sum_{t=1}^T \widehat{\PR}(\theta_t)
        \right]
    \leq F \sqrt{T \log\frac{1}{p}}
\end{align*}
\end{claim}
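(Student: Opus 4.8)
The plan is to recognize the left-hand side as the deviation of a sum of bounded, history-dependent random variables around its mean (the expectation being over all of the algorithm's internal randomness), and to control it with a martingale concentration inequality of Azuma--Hoeffding type. First I would fix notation for the randomness: writing $\mathcal{F}_t$ for the $\sigma$-algebra generated by all randomness used through the end of round $t$ (the sampled direction $u_t \sim \mathrm{Unif}(\S^{\modelDimen})$ together with the samples drawn when deploying $\theta_t^{\pm}$), the iterate $\theta_t$ produced by \Cref{algorithm:minimize-convex-function} is $\mathcal{F}_{t-1}$-measurable, since $\theta_{t+1} = \Pi_{(1-\delta)\Theta}(\theta_t - \eta g_t)$ is computed from $\mathcal{F}_t$. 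Because $\PR$ takes values in $[0,F]$ and $\widehat{\PR}(\theta) = \E_{v \sim \B}[\PR(\theta + \delta v)]$ is an average of such values, $\widehat{\PR}$ is itself bounded in $[0,F]$.

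With this in hand I would set up the Doob martingale $M_t := \E\!\left[\sum_{s=1}^T \widehat{\PR}(\theta_s) \,\middle|\, \mathcal{F}_t\right]$ for $t = 0,1,\dots,T$. By construction $M_0 = \E\!\left[\sum_{s=1}^T \widehat{\PR}(\theta_s)\right]$, and $M_T = \sum_{s=1}^T \widehat{\PR}(\theta_s)$ because every iterate $\theta_s$ with $s \le T$ is $\mathcal{F}_{T-1} \subseteq \mathcal{F}_T$-measurable. Hence the target quantity $\sum_{t} \widehat{\PR}(\theta_t) - \E[\sum_t \widehat{\PR}(\theta_t)]$ is exactly the telescoping sum $\sum_{t=1}^T (M_t - M_{t-1})$ of the martingale differences $D_t := M_t - M_{t-1}$, which satisfy $\E[D_t \mid \mathcal{F}_{t-1}] = 0$.

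The crux --- and the step I expect to be the main obstacle --- is bounding the martingale-difference magnitude $|D_t|$. The new information revealed at round $t$ is $(u_t,\text{samples}_t)$, and conditioning on it versus not principally pins down the contribution of the next iterate $\widehat{\PR}(\theta_{t+1})$; since $\widehat{\PR}$ ranges in $[0,F]$, the natural bounded-difference constant is $c_t = F$. The subtlety is that round-$t$ randomness also propagates through $\theta_{t+1}$ into all later iterates, so a fully rigorous argument must certify that this propagation does not inflate the oscillation of $M_t$ beyond $O(F)$. This is the one genuinely delicate point, and it is where I would lean on the boundedness of $\widehat{\PR}$ (and, if a finer argument is needed, on the stability of the projected-gradient dynamics induced by the step size $\eta$ and smoothing radius $\delta$ fixed in \Cref{algorithm:minimize-convex-function}) to justify $|D_t| \le F$.

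Finally, with $|D_t| \le F$ for all $t$, I would invoke the Azuma--Hoeffding inequality, $\Pr\!\left[\sum_{t=1}^T D_t \ge \lambda\right] \le \exp\!\left(-\frac{\lambda^2}{2 T F^2}\right)$. Choosing $\lambda = F\sqrt{2T \log(1/p)}$ makes the right-hand side at most $p$, so with probability at least $1-p$ we obtain $\sum_{t=1}^T \widehat{\PR}(\theta_t) - \E\big[\sum_{t=1}^T \widehat{\PR}(\theta_t)\big] \le F\sqrt{2T \log(1/p)} = O\big(F\sqrt{T\log(1/p)}\big)$, which is precisely the one-sided deviation bound asserted in the claim.
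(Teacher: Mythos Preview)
Your high-level plan---express the deviation as a sum of martingale differences and apply Azuma--Hoeffding---matches the paper's one-line proof sketch exactly.

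Where you diverge is in the specific martingale. By taking the Doob martingale $M_t = \E\bigl[\sum_{s=1}^T \widehat{\PR}(\theta_s)\,\big|\,\mathcal{F}_t\bigr]$ you create the very propagation problem you correctly flag but do not actually resolve: the round-$t$ randomness feeds into all of $\theta_{t+1},\ldots,\theta_T$, so a priori $|M_t - M_{t-1}|$ can be as large as $F\cdot(T-t)$, and neither boundedness of $\widehat{\PR}$ nor the projected-gradient stability you allude to yields $|D_t|\le F$ without substantial further work. The simpler construction---and the one consistent with how the claim is actually invoked in the regret decomposition, where term~(III) carries the per-round conditional expectation $\E_t[\cdot]$ rather than a global $\E_T[\cdot]$---is to take the per-round increments $D_t := \widehat{\PR}(\theta_t) - \E\bigl[\widehat{\PR}(\theta_t)\,\big|\,\mathcal{F}_{t-1}\bigr]$ directly. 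Each such $D_t$ is a martingale difference bounded in $[-F,F]$ since $\widehat{\PR}\in[0,F]$, their sum is exactly the left-hand side in question, and Azuma--Hoeffding applies immediately with no propagation issue to address.
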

\begin{proof}[Proof sketch]
The left-hand side is the sum of a martingale difference sequence. The Azuma-Hoeffding inequality yields the result.
\end{proof}

\begin{claim}[Gradient estimate is unbiased and bounded]
\label{claim:gradient-estimate-is-unbiased-and-bounded}
There exists a constant $c>0$ such that for all $t \in [T]$, $\E_t[g_t] = \nabla \widehat{\PR}(\theta_t)$ and $\|g_t\|_2^2 \leq cd_\theta L^2$.
\end{claim}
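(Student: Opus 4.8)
\textbf{Proof plan for \Cref{claim:gradient-estimate-is-unbiased-and-bounded}.}

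The plan is to verify both assertions by a direct computation, relying only on the definition of the two-point gradient estimate $g_t = \frac{\modelDimen}{2\delta}\bigl(\PRtilde(\theta_t^+) - \PRtilde(\theta_t^-)\bigr)u_t$, the fact that $\PRtilde$ is an \emph{unbiased} estimator of $\PR$ (so that $\E_t[\PRtilde(\theta_t^\pm)\mid u_t] = \PR(\theta_t \pm \delta u_t)$, where $\E_t$ also averages over $u_t \sim \mathrm{Unif}(\S^{\modelDimen})$), and the boundedness of the loss $\ell \in [0,F]$.

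\emph{Unbiasedness.} First I would condition on $u_t$ and use that $\estimatePR$ returns an unbiased sample of $\PR$, which gives $\E_t[g_t\mid u_t] = \frac{\modelDimen}{2\delta}\bigl(\PR(\theta_t+\delta u_t)-\PR(\theta_t-\delta u_t)\bigr)u_t$. Then I would take the expectation over $u_t$ uniform on the sphere and invoke the standard spherical-sampling identity (this is exactly the Stokes'-theorem argument underlying the estimator of \cite{flaxman2005online,agarwal2010optimal}): for a differentiable $g$, $\E_{u\sim\mathrm{Unif}(\S^{\modelDimen})}\bigl[\frac{\modelDimen}{2\delta}(g(x+\delta u)-g(x-\delta u))u\bigr] = \nabla \widehat{g}(x)$, where $\widehat{g}(x) = \E_{v\sim\B^{\modelDimen}}[g(x+\delta v)]$ is the ball-smoothed version of $g$ (as defined in the Notations section). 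Applying this with $g = \PR$ and $x = \theta_t$ yields $\E_t[g_t] = \nabla\widehat{\PR}(\theta_t)$.

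\emph{Boundedness.} Since $\|u_t\|_2 = 1$, we have $\|g_t\|_2 = \frac{\modelDimen}{2\delta}\bigl|\PRtilde(\theta_t^+)-\PRtilde(\theta_t^-)\bigr|$. Naively $|\PRtilde(\theta_t^+)-\PRtilde(\theta_t^-)| \le F$ since each empirical loss lies in $[0,F]$; this already gives a bound, but with an undesirable $1/\delta^2$ dependence. The cleaner bound uses the Lipschitzness of $\PR$: I would write $\PRtilde(\theta_t^\pm) = \PR(\theta_t^\pm) + \xi_t^\pm$ and bound $|\PR(\theta_t^+) - \PR(\theta_t^-)| \le L\|\theta_t^+ - \theta_t^-\| = 2\delta L$, so the "signal" part of $g_t$ has norm at most $\modelDimen L$. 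Squaring and using $(a+b)^2 \le 2a^2+2b^2$ to absorb the noise part (bounded by $F$) gives $\|g_t\|_2^2 \le c\,\modelDimen^2 L^2 + (\text{noise term})$; to get the stated $\le c\,\modelDimen L^2$ form I would follow the argument of \cite{shamir2017optimal}, which bounds $\E_t\|g_t\|_2^2 \le c\,\modelDimen L^2$ by exploiting that $u_t$ is uniform on the sphere so that the coordinate-wise fluctuations average out (this is the improved variance bound referenced after \Cref{lemma:regret-bound-for-model-convex-performative-risk}), rather than bounding $\|g_t\|_2$ pointwise.

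\emph{Main obstacle.} The only subtle point is the boundedness claim: a pointwise bound on $\|g_t\|_2^2$ unavoidably carries a $1/\delta^2$ factor, so the statement "$\|g_t\|_2^2 \le c\,\modelDimen L^2$" should really be read (and proved) as a bound on the \emph{conditional second moment} $\E_t\|g_t\|_2^2$, which is what the regret analysis actually needs; establishing that cleanly requires the spherical-symmetry variance argument of \cite{shamir2017optimal} rather than a crude triangle inequality. Everything else is a routine application of the Stokes identity and Lipschitzness.
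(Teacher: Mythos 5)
Your proposal takes essentially the same route as the paper: the paper's entire proof of this claim is a one-line citation to Lemma 10 of \cite{shamir2017optimal}, which is exactly the spherical-symmetry second-moment bound you invoke for the boundedness part, and your unbiasedness argument is the standard ball-smoothing identity underlying that lemma. Your caveat in the ``main obstacle'' paragraph is apt and goes beyond the paper's treatment: since $\PRtilde(\theta_t^+)$ and $\PRtilde(\theta_t^-)$ are obtained from \emph{independent} deployments (unlike the coupled-noise two-point feedback in \cite{shamir2017optimal}), a pointwise bound on $\|g_t\|_2^2$ necessarily carries a $1/\delta^2$ factor, so the claim must indeed be read as a bound on the conditional second moment $\E_t\|g_t\|_2^2$ — which is all the downstream regret analysis in \Cref{lemma:expected-suboptimality-under-smoothing} uses.
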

\begin{proof}
Proved in \citeauthor{shamir2017optimal} (see Lemma 10, noting that the $\ell_2$ norm is its own dual).
\end{proof}


\begin{restatable}[Expected suboptimality under smoothing when $\PR$ is convex]{lem}{}
\label{lemma:expected-suboptimality-under-smoothing}
Let $\theta \in \Theta$, and let $\theta_1,\ldots,\theta_t \in \Theta$ be a sequence of iterates given by the update rule $\theta_{t+1} = \Pi_{(1-\delta)\theta} (\theta_t - \eta g_t) - \theta$ for some sequence of gradient estimates $g_t \in \R^{d_\Theta}$. Then
\begin{align*}
    \E_T \left[\sum_{t=1}^T \widehat{\PR}(\theta_t)\right]
        - \sum_{t=1}^T \widehat{\PR}(\theta)
    \leq \frac{D_\Theta^2}{\eta} + \eta cd_\theta L^2 T
\end{align*}
\end{restatable}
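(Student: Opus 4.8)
The plan is to combine the standard analysis of zeroth-order (two-point) bandit convex optimization with an accounting of the extra error introduced by the smoothing radius $\delta$. Since the iterates $\theta_t$ are produced by projected gradient descent on the gradient estimates $g_t$, and since by \Cref{claim:gradient-estimate-is-unbiased-and-bounded} these satisfy $\E_t[g_t] = \nabla\widehat{\PR}(\theta_t)$ together with the bound $\|g_t\|_2^2 \le c d_\Theta L^2$, the natural route is to invoke the classical regret bound for online projected gradient descent against a \emph{fixed} comparator. First I would write down the one-step inequality: for the update $\theta_{t+1} = \Pi_{(1-\delta)\Theta}(\theta_t - \eta g_t)$, nonexpansiveness of the projection gives, for any fixed point $w$ in the projected domain,
\begin{align*}
    \|\theta_{t+1} - w\|^2
    \le \|\theta_t - w\|^2 - 2\eta \langle g_t, \theta_t - w\rangle + \eta^2 \|g_t\|^2.
\end{align*}
Rearranging and summing over $t = 1,\dots,T$, the telescoping of $\|\theta_t - w\|^2$ and the diameter bound $D_\Theta$ on $\Theta$ yield $\sum_t \langle g_t, \theta_t - w\rangle \le \frac{D_\Theta^2}{2\eta} + \frac{\eta}{2}\sum_t \|g_t\|^2 \le \frac{D_\Theta^2}{2\eta} + \frac{\eta}{2} c d_\Theta L^2 T$.

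Next I would pass from this statement about the linear functionals $\langle g_t,\cdot\rangle$ to one about the smoothed objective $\widehat{\PR}$. Taking the full expectation $\E_T[\cdot]$ over all the randomness of the run and using $\E_t[g_t] = \nabla\widehat{\PR}(\theta_t)$ (tower property, conditioning on the history up to round $t$), the cross term becomes $\E_T\big[\sum_t \langle \nabla\widehat{\PR}(\theta_t), \theta_t - w\rangle\big]$. Here I use that $\widehat{\PR}$ is convex — it is a $\delta$-ball average of the convex function $\PR$, hence convex — so $\langle \nabla\widehat{\PR}(\theta_t), \theta_t - w\rangle \ge \widehat{\PR}(\theta_t) - \widehat{\PR}(w)$ pointwise. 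Choosing the comparator $w$ to be an appropriate point representing $\theta$ inside $(1-\delta)\Theta$ (this is where the "$-\theta$" shift in the update rule statement and the $(1-\delta)$-scaling of the domain enter; one should be a little careful that $\theta$ itself, or its scaled image, lies in the projected domain so that the comparator is legitimate), we obtain $\E_T\big[\sum_t \widehat{\PR}(\theta_t)\big] - \sum_t \widehat{\PR}(\theta) \le \frac{D_\Theta^2}{2\eta} + \frac{\eta}{2} c d_\Theta L^2 T$. Absorbing the factors of $2$ into the constant $c$ (and noting the lemma statement writes $\frac{D_\Theta^2}{\eta} + \eta c d_\Theta L^2 T$, which dominates this) gives exactly the claimed inequality.

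The main obstacle I anticipate is the bookkeeping around the comparator and the projected/shrunk domain: the update rule as stated mixes a projection onto $(1-\delta)\Theta$ with a translation by $-\theta$, and one must verify that the fixed comparator used in the gradient-descent inequality is consistent with the point $\theta$ appearing on the left-hand side of the lemma, including the harmless gap introduced by replacing $\theta$ with its projection into $(1-\delta)\Theta$ (which contributes an $O(\delta L)$ term, controlled via \Cref{claim:regret-from-smoothing-over-sphere-or-ball} and hence negligible relative to the stated bound). Everything else — nonexpansiveness of projection, telescoping, the tower property, and convexity of the smoothed function — is routine once that alignment is pinned down.
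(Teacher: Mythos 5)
Your proposal is correct and follows essentially the same route as the paper's proof: nonexpansiveness of the projection onto $(1-\delta)\Theta$ gives the one-step inequality, telescoping plus the bound $\|g_t\|^2 \le c d_\Theta L^2$ from the gradient-estimate claim controls the sum, and unbiasedness of $g_t$ together with convexity of $\widehat{\PR}$ converts the linearized regret into the claimed bound (the paper also lands on the tighter $\tfrac{D_\Theta^2}{2\eta} + \tfrac{\eta c d_\Theta L^2 T}{2}$, which the stated bound dominates). Your extra care about the comparator lying in the shrunk domain is reasonable but not needed here; the paper simply uses $\theta$ itself as the comparator and defers the $O(\delta L)$ projection gap to a separate claim, exactly as you anticipated.
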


\begin{proof}[Proof of \Cref{lemma:expected-suboptimality-under-smoothing}]
Observe that
\begin{align*}
    \E_T \left[\sum_{t=1}^T \widehat{\PR}(\theta_t)\right]
        - \sum_{t=1}^T \widehat{\PR}(\theta)
    &= \sum_{t=1}^T \E_t \left[
            \widehat{\PR}(\theta_t) - \widehat{\PR}(\theta)
        \right] \\
    &\leq \sum_{t=1}^T \E_t \left[
            \nabla \widehat{\PR}(\theta_t)^\top (\theta_t - \theta)
        \right]
        \tag{convexity of $\widehat{\PR}$} \\
    &= \sum_{t=1}^T \E_t \left[
            g_t^\top (\theta_t - \theta)
        \right]
        \tag{\Cref{claim:gradient-estimate-is-unbiased-and-bounded}}
\end{align*}
To decompose $g_t^\top (\theta_t - \theta)$, note that
\begin{align*}
    \|\theta_{t+1} - \theta\|^2
    &= \|\Pi_{(1-\delta)\theta} (\theta_t - \eta g_t) - x\|^2 \\
    &\leq \|\theta_t - \eta g_t - \theta\|^2 \\
    &= \|\theta_t - \theta\|^2 + \eta^2 \|g_t\|^2 - 2\eta \cdot g_t^\top (\theta_t - \theta)
\end{align*}
Therefore
\begin{align*}
    g_t^\top(\theta_t - x)
    &\leq \frac{\|\theta_t - \theta\|^2 - \|\theta_{t+1} - \theta\|^2 + \eta^2 \|g_t\|^2}{2\eta} \\
    \sum_{t=1}^T \E_t \left[
            g_t^\top (\theta_t - \theta)
        \right]
    &\leq \sum_{t=1}^T \E_t \left[
            \frac{\|\theta_t - \theta\|^2 - \|\theta_{t+1} - \theta\|^2 + \eta^2 \|g_t\|^2}{2\eta}
        \right] \\
    &\leq \frac{1}{2\eta} \E_t \left[
            \|\theta_1 - \theta\|^2 + \eta^2 cd_\Theta L^2T
        \right]
        \tag{Claim \ref{claim:gradient-estimate-is-unbiased-and-bounded}} \\
    &\leq \frac{D_\Theta^2}{2\eta} + \frac{\eta cd_\Theta L^2 T}{2}
        \tag{diameter of $\Theta$}
\end{align*}
as required.
\end{proof}

\begin{claim}[Regret from projection]
\label{claim:regret-from-projection}
For any $\theta \in \Theta$, $\PR(\theta_\delta) - \PR(\theta) \leq \delta D_\Theta L$.
\end{claim}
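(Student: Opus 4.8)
The plan is to reduce the claim to a purely geometric statement about the projection and then control it by the diameter. First I would invoke the $L$-Lipschitzness of $\PR$ (as hypothesized in \Cref{lemma:regret-bound-for-model-convex-performative-risk}), which immediately gives
\[
    \PR(\theta_\delta) - \PR(\theta) \le L\,\|\theta_\delta - \theta\|,
\]
so the entire claim reduces to establishing the displacement bound $\|\theta_\delta - \theta\| \le \delta D_\Theta$.

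Next I would unpack the definition $\theta_\delta = \Pi_{(1-\delta)\Theta}(\theta)$, the $\ell_2$-projection of $\theta$ onto the (closed, convex) set $(1-\delta)\Theta$. Since $\theta\in\Theta$, the scaled point $(1-\delta)\theta$ is a feasible competitor, i.e. $(1-\delta)\theta \in (1-\delta)\Theta$; by the defining optimality of the projection,
\[
    \|\theta_\delta - \theta\| \le \|(1-\delta)\theta - \theta\| = \delta\,\|\theta\|.
\]
Finally, using that the algorithm is initialized at $\mathbf 0$ (so $\mathbf 0 \in \Theta$, as is standard in this line of zeroth-order work), both $\theta$ and $\mathbf 0$ lie in $\Theta$ and hence $\|\theta\| = \|\theta - \mathbf 0\| \le D_\Theta$. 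Chaining the three displays yields $\PR(\theta_\delta) - \PR(\theta) \le \delta D_\Theta L$, as desired.

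The argument is short, and the only real subtlety — the ``hard part,'' such as it is — is the last step $\|\theta\|\le D_\Theta$, which is not literally a consequence of ``$D_\Theta$ is the diameter of $\Theta$'' unless one also knows that $\Theta$ contains the origin. I would either record $\mathbf 0\in\Theta$ as a standing convention, consistent with the initialization $\theta_1 \gets \mathbf 0$ in \Cref{algorithm:minimize-convex-function}, or, if one prefers not to assume it, replace the bound by $\delta\sup_{\theta'\in\Theta}\|\theta'\|$ and absorb that quantity into $D_\Theta$ up to a constant factor, which leaves every downstream regret rate unchanged.
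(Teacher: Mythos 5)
Your proof is correct and follows the same route as the paper's one-line argument (Lipschitzness of $\PR$ plus a bound on the projection displacement); in fact, your use of the feasible competitor $(1-\delta)\theta$ to obtain $\|\theta_\delta-\theta\|\le\|(1-\delta)\theta-\theta\|=\delta\|\theta\|\le\delta D_\Theta$ supplies precisely the detail that the paper's terse justification (``projects from a set of diameter $D_\Theta$ to one of diameter $(1-\delta)D_\Theta$'') glosses over. The caveat you flag --- that $\|\theta\|\le D_\Theta$ needs $\mathbf{0}\in\Theta$, which is consistent with the initialization $\theta_1\gets\mathbf{0}$, or else one absorbs $\sup_{\theta'\in\Theta}\|\theta'\|$ into $D_\Theta$ up to a constant --- is a fair and harmless observation that leaves all downstream regret bounds unchanged.
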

\begin{proof}
Since $\PR$ is $L$-Lipschitz and $\Pi_{(1-\delta)\Theta}$ projects from a set of diameter $D_\Theta$ to a set of diameter $(1-\delta)D_\Theta$, we have $\PR(\theta_\delta) - \PR(\theta) \leq L\|\theta_\delta - \theta\| \leq \delta D_\Theta L$.
\end{proof}

\begin{claim}[Optimality of projected parameters]Since $\PR$ is convex in $\theta$,
$\PR\left(\Pi_{(1-\delta)\Theta}(\theta_{\OPT})\right) = \argmin_{\theta \in (1-\delta)\Theta} \PR(\theta)$.
\end{claim}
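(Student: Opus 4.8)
The plan is to prove the claim by combining the global optimality of $\theta_{\OPT}$ over $\Theta$ with the variational (obtuse-angle) characterization of the Euclidean projection. First I would record the elementary nesting fact $(1-\delta)\Theta \subseteq \Theta$: the iterates of \Cref{algorithm:minimize-convex-function} are initialized at $\theta_1 = \mathbf{0} \in \Theta$, and $\Theta$ is convex, so for every $\theta \in \Theta$ we have $(1-\delta)\theta = (1-\delta)\theta + \delta\cdot\mathbf{0} \in \Theta$. In particular every feasible point of the shrunken problem is feasible for the original one, hence $\min_{\theta\in(1-\delta)\Theta}\PR(\theta) \ge \min_{\theta\in\Theta}\PR(\theta) = \PR(\theta_{\OPT})$, and it remains only to exhibit a feasible point on the shrunken set attaining this value.

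Next I would split into two cases according to whether $\theta_{\OPT}$ already lies in $(1-\delta)\Theta$. If it does, then $\Pi_{(1-\delta)\Theta}(\theta_{\OPT}) = \theta_{\OPT}$, and since $\theta_{\OPT}$ minimizes $\PR$ over the larger set $\Theta$ it a fortiori minimizes $\PR$ over $(1-\delta)\Theta$; the claim is immediate. The substantive case is $\theta_{\OPT}\notin(1-\delta)\Theta$, so that $\tilde\theta := \Pi_{(1-\delta)\Theta}(\theta_{\OPT})$ lies on the boundary of the concentric rescaling $(1-\delta)\Theta$ (along the ray emanating from the origin towards $\theta_{\OPT}$, which is the structure that makes the projection tractable). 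Here I would invoke two first-order conditions and let convexity glue them: (i) the obtuse-angle property of the projection, $\langle \theta_{\OPT}-\tilde\theta,\ \theta-\tilde\theta\rangle \le 0$ for all $\theta\in(1-\delta)\Theta$, which places $\theta_{\OPT}-\tilde\theta$ in the normal cone of $(1-\delta)\Theta$ at $\tilde\theta$; and (ii) the first-order optimality of $\theta_{\OPT}$ on $\Theta$, $\langle\nabla\PR(\theta_{\OPT}),\ \theta'-\theta_{\OPT}\rangle\ge0$ for all $\theta'\in\Theta$, evaluated along the segment from $\tilde\theta$ to $\theta_{\OPT}$. Using convexity of $\PR$ in $\theta$ to control how $\nabla\PR$ varies between $\tilde\theta$ and $\theta_{\OPT}$ relative to this normal direction, one obtains $\langle\nabla\PR(\tilde\theta),\ \theta-\tilde\theta\rangle\ge0$ for all $\theta\in(1-\delta)\Theta$, i.e. the first-order optimality condition for $\tilde\theta$ on the shrunken set; convexity then upgrades this to $\PR(\theta)\ge\PR(\tilde\theta)+\langle\nabla\PR(\tilde\theta),\theta-\tilde\theta\rangle\ge\PR(\tilde\theta)$ for every feasible $\theta$, which is exactly the assertion.

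I expect the main obstacle to be precisely this passage in the second case, from optimality of $\theta_{\OPT}$ on $\Theta$ to optimality of $\tilde\theta$ on $(1-\delta)\Theta$: it is where one must lean on the specific structure in force here (convexity of $\PR$ in $\theta$, and the fact that $(1-\delta)\Theta$ is a concentric scaling of $\Theta$, so that outward normal directions at $\tilde\theta$ and at $\theta_{\OPT}$ line up) rather than on generic convex analysis. As a robust fallback that already suffices for the regret analysis, I would observe that one need not identify the exact minimizer: by $L$-Lipschitzness (\Cref{claim:regret-from-projection}), $\PR(\tilde\theta)\le\PR(\theta_{\OPT})+\delta D_\Theta L$, and together with $\PR(\theta_{\OPT})\le\min_{\theta\in(1-\delta)\Theta}\PR(\theta)\le\PR(\tilde\theta)$ from the nesting fact, this sandwiches $\PR(\tilde\theta)$ to within an additive $\delta D_\Theta L$ of $\min_{\theta\in(1-\delta)\Theta}\PR(\theta)$ — precisely the slack the choice of $\delta$ in \Cref{algorithm:minimize-convex-function} is designed to absorb.
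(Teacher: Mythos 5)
The substantive step of your argument --- passing from first-order optimality of $\theta_{\OPT}$ on $\Theta$ plus the obtuse-angle property of the projection to the first-order condition $\langle\nabla\PR(\tilde\theta),\,\theta-\tilde\theta\rangle\ge 0$ on $(1-\delta)\Theta$ --- does not go through, and no repair is possible because the claim itself is false in general. Convexity only gives monotonicity of the (sub)gradient along the single direction $\theta_{\OPT}-\tilde\theta$; it says nothing about how $-\nabla\PR(\tilde\theta)$ sits relative to the normal cone of $(1-\delta)\Theta$ at $\tilde\theta$. Concretely, take $\Theta$ to be the unit disk in $\R^2$ and $\PR(\theta)=(\theta_1-1)^2+100(\theta_2-\tfrac12)^2$, which is convex and Lipschitz on $\Theta$. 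The constrained minimizer $\theta_{\OPT}$ lies on the unit circle at roughly $(0.866,0.499)$; its projection onto the concentric disk of radius $0.9$ is the radial scaling $(0.779,0.449)$, whereas the true minimizer over that smaller disk is approximately $(0.749,0.498)$, with $\PR$-values about $0.31$ versus $0.06$. The concentric-scaling structure you hoped to lean on does not align the normal cones with the gradient field unless $\PR$ has additional (e.g.\ radial) symmetry.

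That said, your fallback is exactly the right move, and it is all the analysis ever uses: the regret decomposition only invokes the Lipschitz bound $\PR\bigl(\Pi_{(1-\delta)\Theta}(\theta_{\OPT})\bigr)-\PR(\theta_{\OPT})\le\delta D_\Theta L$ (\Cref{claim:regret-from-projection}) together with the nesting $(1-\delta)\Theta\subseteq\Theta$, which sandwiches the projected point to within $\delta D_\Theta L$ of $\min_{\theta\in(1-\delta)\Theta}\PR(\theta)$ --- an $O(\delta T)$ slack already absorbed by the choice $\delta\le\sqrt{d_\Theta/T}$. Note also that the paper states this optimality claim without any proof (and with a type mismatch, equating a scalar with an $\argmin$), so there is no argument of the paper's to compare yours against; the honest fix is to replace the exact-optimality assertion by the approximate statement you derive in your final paragraph.
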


\paragraph{Overall Regret Analysis for \Cref{lemma:regret-bound-for-model-convex-performative-risk}}

We can now complete our regret bound for \Cref{lemma:regret-bound-for-model-convex-performative-risk}. Recall the lemma statement:

\convexregretbound*

\begin{proof}[Proof of \Cref{lemma:regret-bound-for-model-convex-performative-risk}]
We have
\begin{align*}
    \calR_T(\calA_{\ref{algorithm:minimize-convex-function}},f)
    &= \sum_{t=1}^T \left[
            \estimatePR(\theta_t^+)
            + \estimatePR(\theta_t^-)
            - 2 \PR(\theta_{\OPT})
        \right] \\
    &= \underbrace{\sum_{t=1}^T \left[
            \widetilde{\PR}(\theta_t^+)
            + \widetilde{\PR}(\theta_t^-)
            - \PR(\theta_t^+)
            - \PR(\theta_t^-)
        \right]}_{\text{(I)}}
        + \underbrace{\sum_{t=1}^T \left[
                \PR(\theta_t^+)
                + \PR(\theta_t^-)
                - 2\widehat{\PR}(\theta_t)
            \right]}_{\text{(II)}} \\
      &\qquad  + \underbrace{2 \sum_{t=1}^T \left[
                \widehat{\PR}(\theta_t) - \E_t [\widehat{\PR}(\theta_t)]
            \right]}_{\text{(III)}} 
        + \underbrace{2 \sum_{t=1}^T \left[
                \E_t [\widehat{\PR}(\theta_t)] - \fhat(\theta^*_\delta)
            \right]}_{\text{(IV)}}\\
      &\qquad  + \underbrace{2 \sum_{t=1}^T \left[
                \widehat{\PR}(\theta^*_\delta) - f(\theta^*_\delta)
            \right]}_{\text{(V)}}
        + \underbrace{2 \sum_{t=1}^T \left[
                \PR(\theta^*_\delta) - \PR(\theta_\OPT)
            \right]}_{\text{(VI)}} \\
    &\leq \underbrace{
                2F\sqrt{T \log\frac{1}{p_1}}
            }_{\substack{
                \text{(I), w.p. $1-2p_1$} \\
                \text{(\Cref{claim:regret-from-estimating-f})}
            }}
        + \underbrace{
                4\delta LT
            }_{\substack{
                \text{(II), w.p. $1$} \\
                \text{(\Cref{claim:regret-from-smoothing-over-sphere-or-ball})}
            }}
        + \underbrace{
                2F\sqrt{T \log\frac{1}{p_2}}
            }_{\substack{
                \text{(III), w.p. $1-2p_2$} \\
                \text{(\Cref{claim:deviation-of-smoothed-function})}
            }}
        + \underbrace{
                \frac{2 D_\Theta^2}{\eta} + 2\eta cd_\theta L^2 T
            }_{\substack{
                \text{(IV), w.p. $1$} \\
                \text{(\Cref{lemma:expected-suboptimality-under-smoothing})}
            }}
        + \underbrace{
                2\delta LT
            }_{\substack{
                \text{(V), w.p. $1$} \\
                \text{(\Cref{claim:regret-from-smoothing-over-sphere-or-ball})}
            }}
        + \underbrace{
                2\delta D_\Theta LT
            }_{\substack{
                \text{(V), w.p. $1$} \\
                \text{(\Cref{claim:regret-from-projection})}
            }}
\end{align*}

Thus for any $p>0$, a choice of $p_1 = p_2 = p/4$, along with $\eta = 1/\sqrt{d_\theta T}$ and any $\delta \leq \sqrt{d_\theta/T}$, yields $\calR_T(\calA_{\ref{algorithm:minimize-convex-function}},\PR) = O(\sqrt{d_\theta T\log \frac{1}{p}})$ with probability at least $1-p$. 
Finally, since $\estimatePR$ is queried twice per step, $n = 2T$, which gives us $\calR_n(\calA_{\ref{algorithm:minimize-convex-function}},\PR) = \calR_T(\calA_{\ref{algorithm:minimize-convex-function}},\PR) = O(\sqrt{d_\theta n\log \frac{1}{p}})$,
completing the proof.
\end{proof}

\section{Missing Proofs and Additional Examples for \Cref{subsec: overview}}
\label{sec:more-example}

\paragraph{Derivations for \Cref{example:mixture-dominance-too-strong-condition}}

Since $\phi$ is strictly increasing in $[0, 1]$, the inverse mapping $\phi^{-1}$ is well-defined, and we can reformulate the performative risk $\PR(\theta)$ as a function of $\varphi_\theta$, denoted $\PRdagger(\varphi_\theta)$, as follows:
\begin{align*}
    \PR(\theta;x)
    &= \E_{y \sim \cc{Bern}(\varphi_\theta)}[\ell(\theta;x,y)] \\
    &= \varphi_\theta \ell(\theta;x,1) + (1- \varphi_\theta) \ell(\theta;x,0) \\
    &= \varphi_\theta \ell\left(\phi^{-1}(\varphi_\theta);x,1\right) + (1- \varphi_\theta) \ell\left(\phi^{-1}(\varphi_\theta);x,0\right) \\
    &=: \PRdagger(\varphi_\theta;x)
\end{align*}

Plugging in $\ell$, we have
\begin{align*}
    \PRdagger(\varphi_\theta;x)
    &= -\varphi_\theta \cdot \left(\phi^{-1}(\varphi_\theta)x - 1\right)^2
        - (1- \varphi_\theta) \cdot \left(\phi^{-1}(\varphi_\theta)x\right)^2 \\
    &= -\varphi_\theta \cdot \left(\sqrt{\varphi_\theta}x - 1\right)^2
        - (1- \varphi_\theta)\varphi_\theta x^2
        \tag{$\phi^{-1}(\varphi_\theta) = \sqrt{\varphi_\theta}$}
\end{align*}
Note that for all $x \in [0,1]$, $\PRdagger(\varphi_\theta;x) = \PR(\theta;x)$ is convex in $\varphi_\theta$ over $[0,1]$. In contrast,
\begin{align}
    \PR(\theta;x)
    &= \theta^2 \cdot \ell(\theta;x,1) + (1-\theta^2) \cdot \ell(\theta;x,0) \\
    &= -\theta^2\cdot (\theta x - 1)^2 - (1 - \theta^2) \cdot (\theta x)^2
\end{align}
which is non-convex in $\theta$ over $[0,1]$ for all $x \in [0,1]$.

\begin{figure}[h!]
    \begin{subfigure}
        \centering
        \includegraphics[width=0.25\linewidth]{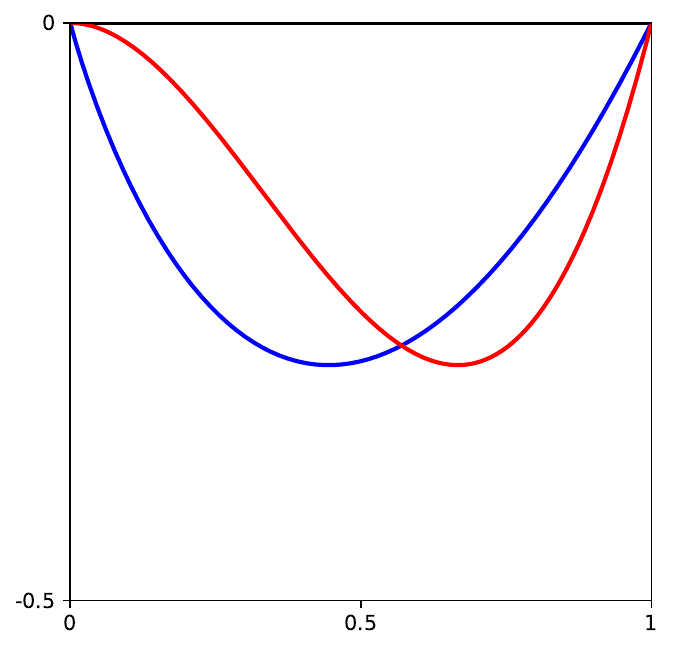}
    \end{subfigure}
    \begin{subfigure}
        \centering
        \includegraphics[width=0.65\linewidth]{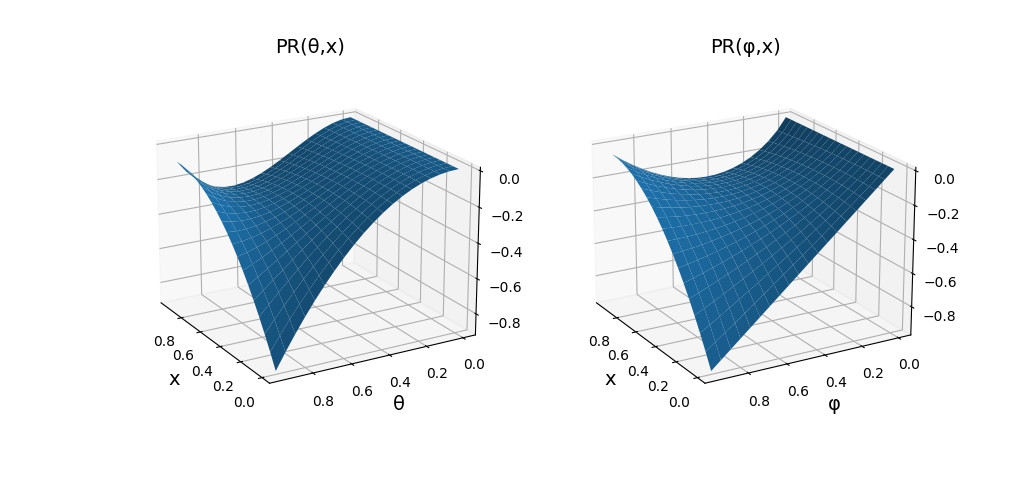}
    \end{subfigure}
    \caption{An example showing that our assumption is weaker than the mixture dominance assumption in \citet{miller2021outside}. In the left figure, the blue curve represents the function $\PRdagger(\varphi_\theta)$ which is convex w.r.t the data distribution parameter $\varphi_\theta$; while the red curve represents the function $\PR(\theta)$, which is not a convex function with respect to $\theta$. In the right two figures, we compare $\PR$ as a function of the model parameter $\theta$ and as a function of the distribution parameter $\phi$.}
\end{figure}

Notice that \Cref{example:mixture-dominance-too-strong-condition} can be generalized to any distribution map $\phi(\theta)$ that satisfies $\phi(\theta) = \theta^\alpha$ for any $\alpha> 1$, and any $\ell_\beta$ loss for even $\beta$ value. Below in \Cref{fig:another-example}, we provide the plot for for $\phi(\theta) = \theta^4$ with $\ell_4$ norm loss ($L_4$ norm is defined as $L_4(x, y)=\left(\sum_{i=1}^d\left|x_i-y_i\right|^4\right)^{\frac{1}{4}}$ where $d$ is the dimension of $x$ and $y$). The original $\PR$ loss $\PR(\theta)$ is in red, which is non-convex), and the reformulated PR loss $\PR^\dagger (\theta)$ is in blue via reparameterization, which is convex).

\begin{figure}[h]
    \centering
    \includegraphics[width=0.35\linewidth]{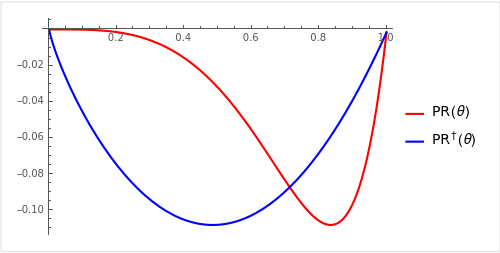}
    \caption{Another example showing $\PR$ is convex in $\phi$ but not $\theta$. The original $\PR$ loss $\PR(\theta)$ is in red, which is non-convex), and the reformulated PR loss $\PR^\dagger (\theta)$ is in blue via reparameterization, which is convex)}
    \label{fig:another-example}
\end{figure}

In addition, \Cref{example:mixture-dominance-too-strong-condition} can also be a generalized monotone polynomial function $\phi(\theta)$. For example, $\phi(\theta) = c \theta^2 + d \theta$, for $d\geq 0, d \geq -2c$.

\paragraph{Derivations for \cref{example:gaussian}}

\label{proof:example-gaussian}
    \begin{align*}
        \PR(\theta) &= \E_{x\sim \calD(\theta)} \ell(x;\theta)\\
        &= \E_{x\sim N(\varphi(\theta), \sigma^2)} (\theta x)^2\\
        &= \theta^2 (\sigma^2 + \varphi(\theta)^2)
    \end{align*}
when $\varphi(\theta) = \sqrt{\theta}$, $\PR(\theta)$ is not convex in $\theta$. To see this:
    \begin{align*}
        \PR(\theta) = \theta^2(\sigma^2 + \theta)
    \end{align*}
On the other hand, since $\phi = \varphi(\theta) = \sqrt{\theta}$, we have
\begin{align*}
    \PR(\phi) = \phi (\theta^2 + \phi)
\end{align*}
which is convex in $\phi$.

\paragraph{Derivations for \cref{example:uniform}}

\label{proof:example-uniform}
    \begin{align*}
        \PR(\theta) &= \E_{y\sim \calD(\theta)} (\ell(\theta; x, y))\\
        &= - \E_{\text{Uniform}[0, \varphi(\theta)]} (\theta x - y)^2\\
        &= \theta^2 x^2 - \theta x \varphi(\theta) + \frac{1}{3} \varphi(\theta)^2
    \end{align*}
when $\varphi(\theta) = \theta^2$, we have
    \begin{align*}
        \PR(\theta) = \theta^2 x^2 - \theta^3 x + \frac{1}{3}\theta^4
    \end{align*}
    which is non-convex in $\theta$.
On the other hand, denote $\phi = \varphi(\theta)$, we have $\theta = \sqrt{\phi}$, plug it into $\PR$, we have:
\begin{align*}
    \PR(\phi) = \phi x^2 - \phi \sqrt{\phi} + \frac{1}{3} \phi^2
\end{align*}
which is convex in $\phi$.

\section{Omitted Proof for \Cref{sec:outer-algorithm}}
\label{proof:outer-algorithm}

We present a series of lemmas and claims that are helpful for proving \Cref{theorem:regret-bound-for-indirectly-convex-functions}.

\begin{claim}[Deviation of $\PR^\dagger$ due to error of $\learnmodel$]
\label{claim:deviation-of-PRdagger-due-to-error-of-learnmodel}
If $\PR^\dagger$ is $L^\dagger$-Lipschitz, then for any $\phi \in \Phi$, the value $\hat{\theta} \in \Theta$ returned by $\learnmodel(\phi,\epsilon_\LM,p_\LM)$ satisfies $|\PR^\dagger(\phi) - \PR(\hat{\theta})| \leq L^\dagger \epsilon_\LM$ with probability at least $1-p_\LM$.
\end{claim}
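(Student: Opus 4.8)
The plan is to chain together the definition of $\PR^\dagger$, the Lipschitz property of $\PR^\dagger$ (Assumption~\ref{ass:PRdagger-convex-lipschitz}), and the correctness guarantee of the oracle $\learnmodel$. First I would recall that by the reparameterization \eqref{eq:reformulate-objective-function}, $\PR^\dagger(\phi) = \PR(\vartheta^*(\phi))$, and also that for the model $\hat\theta$ returned by $\learnmodel(\phi,\epsilon_\LM,p_\LM)$ we have $\PR(\hat\theta) = \PR^\dagger(\varphi(\hat\theta))$ whenever $\hat\theta$ is the $\PR$-minimizing element inducing $\varphi(\hat\theta)$; more robustly, I would simply use $\PR(\hat\theta) \ge \PR^\dagger(\varphi(\hat\theta))$ together with the Lipschitz estimate in both directions, or appeal to the fact that the relevant quantity being controlled is $|\PR^\dagger(\phi) - \PR^\dagger(\varphi(\hat\theta))|$. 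The cleanest route: by definition of the oracle, with probability at least $1-p_\LM$ we have $\|\varphi(\hat\theta) - \phi\| \le \epsilon_\LM$; condition on this event.

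On that event, apply the $L^\dagger$-Lipschitz continuity of $\PR^\dagger$ over $\Phi$:
\begin{align*}
    |\PR^\dagger(\phi) - \PR^\dagger(\varphi(\hat\theta))|
    \le L^\dagger \|\phi - \varphi(\hat\theta)\|
    \le L^\dagger \epsilon_\LM .
\end{align*}
It then remains to identify $\PR^\dagger(\varphi(\hat\theta))$ with $\PR(\hat\theta)$. If $\hat\theta = \vartheta^*(\varphi(\hat\theta))$ this is immediate from \eqref{eq:reformulate-objective-function}; in general one has $\PR(\hat\theta) \ge \PR^\dagger(\varphi(\hat\theta))$ by definition of $\vartheta^*$ as the minimizer, so the claim as stated should be read (and is used downstream) with $\hat\theta$ taken to be — or treated as — the $\PR$-optimal inducer of its own distribution parameter; under that reading $\PR(\hat\theta)=\PR^\dagger(\varphi(\hat\theta))$ and the displayed inequality is exactly $|\PR^\dagger(\phi) - \PR(\hat\theta)| \le L^\dagger\epsilon_\LM$. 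Removing the conditioning, this holds with probability at least $1-p_\LM$, which is the claim.

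The only genuine subtlety — and the step I would flag as the main obstacle — is the identification $\PR(\hat\theta) = \PR^\dagger(\varphi(\hat\theta))$, since $\learnmodel$ is only promised to return \emph{some} model inducing a distribution parameter close to $\phi$, not the $\PR$-optimal one. I would handle this by noting that $\vartheta^*$ and hence $\PR^\dagger$ are defined precisely so that this identity holds for the optimal inducer, and that the inner algorithm's analysis already treats $\vartheta^*(\phi)$ as its comparator (cf.\ the regret definition for $\learnmodel$); alternatively one can absorb the gap into the statement by interpreting $\PR(\hat\theta)$ as shorthand for $\PR^\dagger(\varphi(\hat\theta))$ throughout the outer-loop analysis. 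Everything else is a one-line application of Lipschitzness plus a union-bound-free conditioning argument, so no nontrivial computation is involved.
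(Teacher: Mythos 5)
Your proof matches the paper's essentially verbatim: the paper likewise writes $|\PR^\dagger(\phi) - \PR(\hat{\theta})| = |\PR^\dagger(\phi) - \PR^\dagger(\varphi(\hat{\theta}))|$ and then applies $L^\dagger$-Lipschitzness of $\PR^\dagger$ followed by the $\learnmodel$ guarantee $\|\phi - \varphi(\hat{\theta})\| \le \epsilon_\LM$. The subtlety you flag --- that $\PR(\hat{\theta}) = \PR^\dagger(\varphi(\hat{\theta}))$ holds only when $\hat{\theta}$ is the $\PR$-optimal inducer of its own distribution parameter, and in general one only has $\PR(\hat{\theta}) \ge \PR^\dagger(\varphi(\hat{\theta}))$ --- is genuine, and the paper's proof asserts the equality in its first line without comment, so your explicit treatment of that step is if anything more careful than the original.
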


\begin{proof}
\label{proof:deviation-of-PRdagger-due-to-error-of-learnmodel}
We have
\begin{align*}
    \left|\PR^\dagger(\phi) - \PR(\hat{\theta})\right|
    &= \left|\PR^\dagger(\phi) - \PR^\dagger(\varphi(\hat{\theta}))\right| \\
    &\leq L^\dagger \left\|\phi - \varphi(\hat{\theta})\right\|
        \tag{Lipschitzness of $\PR^\dagger$} \\
    &\leq L^\dagger \epsilon_\LM
        \tag{guarantee of $\learnmodel$}
\end{align*}
where the last inequality holds with probability at least $1-p_\LM$.
\end{proof}

\begin{claim}[Deviation of gradient estimate due to error of $\learnmodel$ and $\estimatePR$]
\label{claim:deviation-of-gradient-estimate-due-to-error-of-learnmodel-and-estimatePR}
Define
\begin{align}
    \tilde{g}_t
        := \frac{d_\Phi}{\delta} \widetilde{\PR}(\hat{\theta}_t^+) u_t
    \qquad \text{and} \qquad
    g_t
        := \frac{d_\Phi}{\delta} \PR^\dagger(\phi_t^+) u_t
\end{align}
For any $t \in [T]$,
\begin{align*}
    g_t - \tilde{g}_t
    \leq \frac{d_\Phi}{\delta} \left[
            \PR(\hat{\theta}_t^+)
            - \widetilde{\PR}(\hat{\theta}_t^+)
            + \PR^\dagger(\phi_t^+)
            - \PR(\hat{\theta}_t^+)
        \right]
        u_t .
\end{align*}
\end{claim}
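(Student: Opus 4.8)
The statement, once the definitions are substituted, is an exact algebraic identity, so the inequality sign holds trivially; the plan is just to carry out the substitution and then split off a telescoping term in the form that the surrounding analysis wants. First I would write, directly from $\tilde g_t := \frac{d_\Phi}{\delta}\PRtilde(\hat{\theta}_t^+)u_t$ and $g_t := \frac{d_\Phi}{\delta}\PRdagger(\phi_t^+)u_t$,
\[
g_t - \tilde g_t
= \frac{d_\Phi}{\delta}\bigl(\PRdagger(\phi_t^+) - \PRtilde(\hat{\theta}_t^+)\bigr)\,u_t .
\]
Then I would insert $\pm\,\PR(\hat{\theta}_t^+)$ inside the parentheses and regroup as $\bigl(\PR(\hat{\theta}_t^+) - \PRtilde(\hat{\theta}_t^+)\bigr) + \bigl(\PRdagger(\phi_t^+) - \PR(\hat{\theta}_t^+)\bigr)$, which reproduces the claimed right-hand side verbatim. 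That is the entire derivation.

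The point of writing it in this particular split — rather than leaving it as the one-line identity — is that the two grouped differences have distinct probabilistic meanings that the regret bound of \Cref{theorem:regret-bound-for-indirectly-convex-functions} handles separately. The term $\PR(\hat{\theta}_t^+) - \PRtilde(\hat{\theta}_t^+)$ is the sampling error of the unbiased estimator $\estimatePR$: conditioned on $\hat{\theta}_t^+$ it is mean-zero and bounded (since $\ell \in [0,F]$), so summed over $t=1,\dots,T$ it contributes a term controllable by Azuma–Hoeffding, exactly as in \Cref{claim:regret-from-estimating-f} and \Cref{claim:deviation-of-smoothed-function}. The term $\PRdagger(\phi_t^+) - \PR(\hat{\theta}_t^+)$ is the reparameterization error coming from the fact that $\learnmodel$ only approximately inverts $\varphi$; by \Cref{claim:deviation-of-PRdagger-due-to-error-of-learnmodel} it is at most $L^\dagger\epsilon_\LM$ in absolute value with probability at least $1-p_\LM$, which is what ultimately produces the $\sqrt{\epsilon_\LM d_\Phi}\cdot T$ summand in the outer-algorithm regret.

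There is essentially no obstacle internal to this claim; the only care needed is bookkeeping. First, $\tilde g_t$ and $g_t$ as defined here are the \emph{single-point} reformulations used for the analysis, not the two-point estimator $\frac{d_\Phi}{2\delta}\bigl(\PRtilde(\hat{\theta}_t^+)-\PRtilde(\hat{\theta}_t^-)\bigr)u_t$ that \Cref{algorithm:minimize-indirectly-convex-function} actually forms — one keeps in mind (for the surrounding argument, not for this claim) that the two-point estimator is the average of single-point estimators in directions $u_t$ and $-u_t$, both uniform on $\S$, so passing to the single-point form loses nothing in expectation. Second, when this decomposition is later dotted against $\phi_t - \bar\phi$ in the online-gradient-descent potential argument one must additionally track the smoothing bias of $g_t$ relative to the gradient of the $\delta$-smoothed objective and the boundedness of $\tilde g_t$ (the analogues of \Cref{claim:gradient-estimate-is-unbiased-and-bounded}); but those are supplied by separate claims and are not part of the present statement.
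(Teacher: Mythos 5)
Your proof is correct and is essentially identical to the paper's: both substitute the definitions, insert $\pm\,\PR(\hat{\theta}_t^+)$, and regroup to obtain the claimed decomposition as an exact identity. The additional commentary on how each term is later controlled matches the role the claim plays in the proof of \Cref{theorem:regret-bound-for-indirectly-convex-functions}.
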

\begin{proof}
\label{proof:deviation-of-gradient-estimate-due-to-error-of-learnmodel-and-estimatePR}
We have
\begin{align*}
    g_t
    &= \frac{d_\Phi}{\delta} \PR^\dagger(\phi_t^+) u_t \\
    &= \frac{d_\Phi}{\delta} \left[
            \widetilde{\PR}(\hat{\theta}_t^+)
            - \widetilde{\PR}(\hat{\theta}_t^+)
            + \PR(\hat{\theta}_t^+)
            - \PR(\hat{\theta}_t^+)
            + \PR^\dagger(\phi_t^+)
        \right] u_t \\
    &= \tilde{g}_t
        + \frac{d_\Phi}{\delta} \left[
                \PR(\hat{\theta}_t^+)
                - \widetilde{\PR}(\hat{\theta}_t^+)
                + \PR^\dagger(\phi_t^+)
                - \PR(\hat{\theta}_t^+)
            \right] u_t
            \tag{definition of $\tilde{g}_t$}
\end{align*}
\end{proof}


\begin{restatable}[Expected suboptimality under smoothing for $\PRdagger$]{lem}{}
\label{lemma:high-probability-expected-suboptimality-under-smoothing}
For any $\phi \in \Phi$, with probability at least $1 - Tp_\LM$ over the calls to $\learnmodel$,
\begin{align*}
    \E_T \left[
            \sum_{t=1}^T \PRhat^\dagger(\phi_t)
        \right]
        - \sum_{t=1}^T \PRhat^\dagger(\phi)
    \leq \frac{D_\Phi^2}{\eta}
        + \eta cd_\Phi L^2 T
        + \frac{D_\Phi L^\dagger \epsilon_\LM d_\Phi T}{\delta}
\end{align*}
\end{restatable}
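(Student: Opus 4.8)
The plan is to mirror the online-gradient-descent analysis of \Cref{lemma:expected-suboptimality-under-smoothing}, carrying along the two extra error sources that distinguish the outer algorithm from the idealized setting: the noise in $\estimatePR$ and the inaccuracy of $\learnmodel$. First, since $\PRdagger$ is convex by \Cref{ass:PRdagger-convex-lipschitz} and convolving with the uniform distribution on the $\delta$-ball preserves convexity, the smoothed function $\PRhat^\dagger$ is convex on $(1-\delta)\Phi$; hence
\begin{align*}
    \E_T\left[\sum_{t=1}^T \PRhat^\dagger(\phi_t)\right] - \sum_{t=1}^T \PRhat^\dagger(\phi)
    = \sum_{t=1}^T \E_t\left[\PRhat^\dagger(\phi_t) - \PRhat^\dagger(\phi)\right]
    \le \sum_{t=1}^T \E_t\left[\nabla \PRhat^\dagger(\phi_t)^\top(\phi_t - \phi)\right].
\end{align*}

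Next I would invoke the standard Stokes-type identity $\nabla \PRhat^\dagger(\phi_t) = \E_t[g_t]$, where $g_t := \tfrac{d_\Phi}{\delta}\PRdagger(\phi_t^+)u_t$ is the idealized single-point estimator of \Cref{claim:deviation-of-gradient-estimate-due-to-error-of-learnmodel-and-estimatePR}, and decompose $g_t = \tilde g_t + (g_t - \tilde g_t)$. For the $\tilde g_t^\top(\phi_t - \phi)$ contribution, the update $\phi_{t+1} = \Pi_{(1-\delta)\Phi}(\phi_t - \eta\tilde g_t)$ together with non-expansiveness of the projection and the bounded-gradient estimate $\|\tilde g_t\|^2 \le c\, d_\Phi L^2$ (the analogue of \Cref{claim:gradient-estimate-is-unbiased-and-bounded} for the two-point estimator actually used in \Cref{algorithm:minimize-indirectly-convex-function}) gives, by the telescoping argument verbatim from \Cref{lemma:expected-suboptimality-under-smoothing},
\begin{align*}
    \sum_{t=1}^T \E_t\left[\tilde g_t^\top(\phi_t - \phi)\right]
    \le \frac{D_\Phi^2}{\eta} + \eta\, c\, d_\Phi L^2 T.
\end{align*}

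It remains to control the correction term $\sum_t \E_t[(g_t - \tilde g_t)^\top(\phi_t - \phi)]$. By \Cref{claim:deviation-of-gradient-estimate-due-to-error-of-learnmodel-and-estimatePR} this splits into an $\estimatePR$-noise part $\tfrac{d_\Phi}{\delta}\E_t[(\PR(\hat{\theta}_t^+) - \PRtilde(\hat{\theta}_t^+))\,u_t^\top(\phi_t - \phi)]$, which vanishes because $\estimatePR$ is conditionally unbiased and its randomness is independent of $u_t$ and $\hat{\theta}_t^+$; and a $\learnmodel$-bias part $\tfrac{d_\Phi}{\delta}\E_t[(\PRdagger(\phi_t^+) - \PR(\hat{\theta}_t^+))\,u_t^\top(\phi_t - \phi)]$. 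On the event that the call $\learnmodel(\phi_t^+,\epsilon_\LM,p_\LM)$ succeeds --- which holds with probability at least $1-p_\LM$ by \Cref{claim:deviation-of-PRdagger-due-to-error-of-learnmodel} --- we have $|\PRdagger(\phi_t^+) - \PR(\hat{\theta}_t^+)| \le L^\dagger \epsilon_\LM$, while $|u_t^\top(\phi_t - \phi)| \le \|u_t\|\,\|\phi_t - \phi\| \le D_\Phi$, so this term is at most $\tfrac{d_\Phi L^\dagger \epsilon_\LM D_\Phi}{\delta}$ per step. A union bound over the $T$ relevant calls to $\learnmodel$ yields the stated failure probability $Tp_\LM$; summing over $t$ and combining with the two bounds above gives the claim.

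The main obstacle is the careful interleaving of the ``in expectation'' and ``with high probability'' arguments: one must condition on the event that every $\learnmodel$ call along the trajectory is $\epsilon_\LM$-accurate, check that $\estimatePR$ remains conditionally unbiased and that the projected iterates and the gradient-norm bound $\|\tilde g_t\| \le \sqrt{c\, d_\Phi}\,L$ are unaffected by this conditioning (here the two-point structure of $\tilde g_t$ together with Lipschitzness of $\PRdagger$ is what keeps the $1/\delta$ factor from inflating the norm bound), and only then take expectations. A secondary subtlety is that \Cref{claim:deviation-of-gradient-estimate-due-to-error-of-learnmodel-and-estimatePR} is stated for the single-point idealization $\tilde g_t = \tfrac{d_\Phi}{\delta}\PRtilde(\hat{\theta}_t^+)u_t$ while the algorithm uses the symmetric two-point estimator; one should first observe, via the change of variables $u_t \mapsto -u_t$, that these share the same conditional expectation, so the bias decomposition still applies.
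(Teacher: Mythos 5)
Your proposal is correct and follows essentially the same route as the paper's proof: convexity of the smoothed $\PRhat^\dagger$, replacing the gradient by the estimator via \Cref{claim:gradient-estimate-is-unbiased-and-bounded}, the decomposition of \Cref{claim:deviation-of-gradient-estimate-due-to-error-of-learnmodel-and-estimatePR} with the $\estimatePR$-noise term killed by unbiasedness, the $\learnmodel$-bias term bounded by $L^\dagger\epsilon_\LM D_\Phi d_\Phi/\delta$ per step with a union bound over the $T$ calls, and the telescoping argument of \Cref{lemma:expected-suboptimality-under-smoothing} for the main term. Your closing remarks on the conditioning order and the one-point versus two-point estimator are reasonable caveats but do not change the argument.
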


\begin{proof}[Proof of \Cref{lemma:high-probability-expected-suboptimality-under-smoothing}]
For any $\phi \in \Phi$, we have
\begin{align*}
    & \E\left[
            \sum_{t=1}^T \PRhat^\dagger(\phi_t)
        \right]
        - \sum_{t=1}^T \PRhat^\dagger(\phi) \\
    =& \sum_{t=1}^T \E\left[
            \PRhat^\dagger(\phi_t) - \PRhat^\dagger(\phi)
        \right] \\
    \leq& \sum_{t=1}^T \E\left[
            \nabla \PRhat^\dagger(\phi_t)^\top(\phi_t - \phi)
        \right]
        \tag{convexity of $\PRhat^\dagger$}\\
    =& \sum_{t=1}^T \E\left[
            g_t^\top (\phi_t - \phi)
        \right]
        \tag{\Cref{claim:gradient-estimate-is-unbiased-and-bounded}} \\
    =& \sum_{t=1}^T \E\left[
        \left(
            \tilde{g}_t + \frac{d_Y}{\delta} \left[
                \PR(\hat{\theta}_t^+)
                - \PRtilde(\hat{\theta}_t^+)
                + \PRdagger(\phi_t^+)
                - \PR(\hat{\theta}_t^+)
            \right] \cdot u_t
        \right)^\top(\phi_t - \phi)
    \right]
        \tag{\Cref{claim:deviation-of-gradient-estimate-due-to-error-of-learnmodel-and-estimatePR}} \\
    =& \sum_{t=1}^T \E\left[
        \left(
            \tilde{g}_t + \frac{d_Y}{\delta} \left[
                \PRdagger(\phi_t^+)
                - \PR(\hat{\theta}_t^+)
            \right] \cdot u_t
        \right)^\top(\phi_t - \phi)
    \right]
        \tag{$\E[\PRtilde(\cdot)] = \PR(\cdot)$ since $\estimatePR$ is unbiased} \\
    =& \sum_{t=1}^T \E\left[
            \tilde{g}_t^\top (\phi_t - \phi)
        \right]
        + \frac{d_Y}{\delta} \sum_{t=1}^T \E\left[
            \left(
                \PRdagger(\phi_t^+)
                - \PR(\hat{\theta}_t^+)
            \right)
            u_t^\top
            (\phi_t - \phi)
        \right] \\
    \leq& \sum_{t=1}^T \E\left[
            \tilde{g}_t^\top (\phi_t - \phi)
        \right]
        + \frac{d_Y}{\delta} \sum_{t=1}^T \E\left[
            \left|
                \PRdagger(\phi_t^+)
                - \PR(\hat{\theta}_t^+)
            \right|
            \cdot \|u_t\|
            \cdot \|\phi_t - \phi\|
        \right] \\
    \leq& \sum_{t=1}^T \E\left[
            \tilde{g}_t^\top (\phi_t - \phi)
        \right]
        + \frac{d_Y}{\delta} \sum_{t=1}^T \E\left[
            L^\dagger \epsilon_h
            \cdot D_Y
        \right]
        \tag{\Cref{claim:deviation-of-PRdagger-due-to-error-of-learnmodel}, w.p. $1-Tp_h$} \\
    \leq& \frac{D_Y^2}{\eta}
        + \eta cd_Y L^2 T
        + \frac{d_Y}{\delta}L^\dagger \epsilon_h D_Y T
        \tag{same argument as in \Cref{lemma:expected-suboptimality-under-smoothing}}
\end{align*}
\end{proof}

\paragraph{Regret analysis for the outer algorithm in total number of step $T$}
We can now complete our regret bound for $\minimizePR$ (\Cref{algorithm:minimize-indirectly-convex-function}). We recall the theorem statement for  \Cref{theorem:regret-bound-for-indirectly-convex-functions}:

\outeralgorithmregret*

\begin{proof}[Proof of \Cref{theorem:regret-bound-for-indirectly-convex-functions}]
\label{proof:regret-bound-for-indirectly-convex-functions}
We have
\begin{align*}
    & \ \calR_T(\minimizePR,\PR) \\
    &= \sum_{t=1}^T \left[
            \estimatePR(\hat{\theta}_t^+)
            + \estimatePR(\hat{\theta}_t^-)
            - 2 \PR(\theta_\OPT)
        \right] \\
    &= \underbrace{\sum_{t=1}^T \left[
            \PRtilde(\hat{\theta}_t^+)
            + \PRtilde(\hat{\theta}_t^-)
            - \PR(\hat{\theta}_t^+)
            - \PR(\hat{\theta}_t^-)
        \right]}_{\text{(I)}}
        + \underbrace{\sum_{t=1}^T \left[
                \PR(\hat{\theta}_t^+)
                + \PR(\hat{\theta}_t^-)
                - \PRdagger(\phi_t^+)
                - \PRdagger(\phi_t^-)
            \right]}_{\text{(II)}} \\
        &\qquad
        + \underbrace{\sum_{t=1}^T \left[
                \PRdagger(\phi_t^+)
                + \PRdagger(\phi_t^-)
                - 2\PRhat^\dagger(\phi_t)
            \right]}_{\text{(III)}}
        + \underbrace{2 \sum_{t=1}^T \left[
                \PRhat^\dagger(\phi_t)
                - \E_t [\PRhat^\dagger(\phi_t)]
            \right]}_{\text{(IV)}} \\
        &\qquad
        + \underbrace{2 \sum_{t=1}^T \left[
                \E_t [\PRhat^\dagger(\phi_t)]
                - \PRhat^\dagger(\phi^*_\delta)
            \right]}_{\text{(V)}}
        + \underbrace{2 \sum_{t=1}^T \left[
                \PRhat^\dagger(\phi^*_\delta)
                - \PRdagger(\phi^*_\delta)
            \right]}_{\text{(VI)}}
        + \underbrace{2 \sum_{t=1}^T \left[
                \PRdagger(\phi^*_\delta)
                - \PRdagger(\phi_\OPT)
            \right]}_{\text{(VII)}} \\
    &\leq \underbrace{
                2F\sqrt{T \log\frac{1}{p_1}}
            }_{\substack{
                \text{(I), w.p. $1-2p_1$} \\
                \text{(\Cref{claim:regret-from-estimating-f})}
            }}
        \qquad + \underbrace{
                2L^\dagger \epsilon_\LM T
            }_{\substack{
                \text{(II), w.p. $1-2Tp_\LM$} \\
                \text{(\Cref{claim:deviation-of-PRdagger-due-to-error-of-learnmodel})}
            }}
        \qquad + \underbrace{
                4\delta LT
            }_{\substack{
                \text{(III), w.p. $1$} \\
                \text{(\Cref{claim:regret-from-smoothing-over-sphere-or-ball})}
            }}
        \qquad + \underbrace{
                2F\sqrt{T \log\frac{1}{p_2}}
            }_{\substack{
                \text{(IV), w.p. $1-2p_2$} \\
                \text{(\Cref{claim:deviation-of-smoothed-function})}
            }} \\
        &\qquad \qquad \qquad
        + \underbrace{
                \frac{2\distributionDiameter^2}{\eta}
                + 2 \eta c\distributionDiameter L^2 T
                + \frac{2 \distributionDiameter L^\dagger \epsilon_\LM \distributionDiameter T}{\delta}
            }_{\substack{
                \text{(V), w.p. $1-2Tp_\LM$} \\
                \text{(\Cref{lemma:high-probability-expected-suboptimality-under-smoothing})}
            }}
        \qquad + \underbrace{
                2\delta L^\dagger T
            }_{\substack{
                \text{(VI), w.p. $1$} \\
                \text{(\Cref{claim:regret-from-smoothing-over-sphere-or-ball})}
            }}
        \qquad + \underbrace{
                2\delta \distributionDiameter L^\dagger T
            }_{\substack{
                \text{(VII), w.p. $1$} \\
                \text{(\Cref{claim:regret-from-projection})}
            }}
\end{align*}

Recall that in \Cref{algorithm:minimize-indirectly-convex-function}, we set $\delta = \sqrt{\epsilon_\LM \distributionDiameter}$ and $\eta = 1/\sqrt{\distributionDiameter T}$. Thus for any $p'>0$, a choice of $p_1 = p_2 = p'/4$ yields
\begin{align*}
    \calR_T(\calA_{\ref{algorithm:minimize-indirectly-convex-function}},\PR)
    = O\left(
        \sqrt{\distributionDiameter T}
        + \sqrt{\epsilon_\LM \distributionDiameter} \cdot T
        + \sqrt{T \log\frac{1}{p'}}
    \right)
\end{align*}
with probability at least $1 - p' - 2Tp_\LM$ as required.
\end{proof}

\section{Omitted Proof for \Cref{sec:learnmodel}}
\label{sec: omitted-proof-learnmodel}
We first provide a proof for \Cref{lemma:lip-phi-in-KL}. Recall the lemma statement:

\kllipschitzcondition*

\begin{proof}[Proof of \Cref{lemma:lip-phi-in-KL}]
\begin{align*}
  & \left|
        \KL(\phi||\varphi(\theta_1))-\KL(\phi||\varphi(\theta_2))
    \right|\\
    =& \left|
        \int_z p(z|\phi) \log \frac{p(z|\phi)}{p(z|\varphi(\theta_1))}dz - 
        \int_z p(z|\phi) \log \frac{p(z|\phi)}{p(z|\varphi(\theta_2))}dz
    \right|\\
    =& \left|
        \int_z p(z|\phi) (\log p(z|\varphi(\theta_1)) - \log p(z|\varphi(\theta_2)))dz 
    \right|\\
    \leq & 
        \int_z p(z|\phi) \left|
            \log p(z|\varphi(\theta_1)) - \log p(z|\varphi(\theta_2))
        \right|dz \\
    \leq &  \int_z p(z|\phi) L_\KL \|\theta_1 - \theta_2\| dz \tag{$\mathcal{P}_{1}$ and $\mathcal{P}_{2}$ are lipschitzness continuous, Theorem 3 of \cite{Honorio2012lipschitz}}\\
    = & L_\KL \|\theta_1 - \theta_2\| \underbrace{\int_z p(z|\phi) dz}_{=1}\\
    =& L_\KL \|\theta_1 - \theta_2\|
\end{align*}
\end{proof}

Next, we provide the proof for \Cref{lemma:phi-bound-by-KL}. Recall the lemma statement: 
\phiboundbykl*
\begin{proof}[Proof of \Cref{lemma:phi-bound-by-KL}]
    \begin{align*}
        \|\phi_1 - \phi_2\|_2 
    \leq L_\TV d_\TV (\phi_1, \phi_2) 
    \leq L_\TV \sqrt{\frac{1}{2} \KL(\phi_1, \phi_2)}
    \triangleq L_{\phi} \sqrt{\KL(\phi_1, \phi_2)}
    \end{align*}   
The second inequality is due to Pinsker's inequality.
\end{proof}

We then show the example provide by \Cref{example:convex-kl} is convex in $\theta$. Recall the example:
\klconvexexample*

Below we provide proof for it being convex in $\theta$:
\begin{proof}[Proof for \Cref{example:convex-kl} being convex in $\theta$]
Under condition 1, we have $p(z;\phi) =  \frac{1}{\exp(c\varphi(\theta))}$. We can rewrite the $\KL(\phi||\varphi(\theta))$ divergence as:
\begin{align*}
    \KL(\phi||\varphi(\theta)) =& \int_z p(z;\phi)\log \frac{p(z;\phi)}{p(z;\varphi(\theta))} dz\\
    =& \int_z \frac{1}{\exp(c\phi)} \log \frac{\exp(c\varphi(\theta))}{ \exp(c\phi)} dz\\
    =& \frac{\exp(c\varphi(\theta))}{\exp(c\phi)} \log \frac{\exp(c\varphi(\theta)}{\exp(c\phi)}\\
    =& \exp(c(\varphi(\theta) - \phi)) c(\varphi(\theta) - \phi)
\end{align*}

Denote $\KL(\phi||\varphi(\theta)) = f(g(\theta))$ where $f(x) = cx\exp(cx)$ and $g(\theta) = \varphi(\theta) - \phi$.

To show \Cref{eqn:kl-divergence} is convex in $\theta$, it suffices to show f(x) is convex non-decreasing in x, and $g(\theta)$ is convex in $\theta$. 
First, $g(\theta)$ is convex in $\theta$ due to condition 2. \\
For $f(x)$, take the first and second derivative and find conditions to make them both non negative:
\begin{align*}
    \frac{\partial f(x)}{\partial x} &= c\exp(cx)+ cx^2 \exp(cx)\\
    &= c\exp(cx)(1 + cx)\geq 0\\
    \frac{\partial^2 f(x)}{\partial x^2} &= c^2\exp(cx)(2 + cx)\geq 0
\end{align*}

It suffices to set $(2+cx)\geq 0$ and $c(1+cx)\geq 0$ which suffices to set $c\geq \frac{2}{\max |\varphi(\theta) - \phi|}$.

\end{proof}

\paragraph{Regret Analysis and convergence guarantee of $\learnmodel$ in total number of steps $S$} We can now complete our regret bound for $\learnmodel$ (\Cref{algorithm:learnmodel}). Recall the theorem statement:
\learnmodelregret*

\begin{proof} [Proof of \Cref{theorem:regret-bound-for-learnmodel}]
\label{proof:regret-bound-for-learnmodel}
\begin{align*}
    &\calR_S(\learnmodel, \KL) \\
    =& \sum_{s=1}^S 
    \left[
        \widetilde{\KL}(\phi||\varphi(\theta_{s}^+)) + \widetilde{\KL}(\phi||\varphi(\theta_{s}^-)) - 
        2\underbrace{\KL(\phi||\varphi(\vartheta^*(\phi)))}_{=0, \varphi(\vartheta^*(\phi))) = \phi }
    \right]\\
    =& \underbrace{\sum_{s=1}^S 
    \left[
        \widetilde{\KL}(\phi||\varphi(\theta_{s}^+)) - {\KL}(\phi||\varphi(\theta_{s}^+)) + \widetilde{\KL}(\phi||\varphi(\theta_{s}^+)) 
        - \KL(\phi||\varphi(\theta_{s}^-))
    \right]}_{\text{(I)}}\\
     &\qquad + \underbrace{\sum_{s=1}^S 
     \left[
            \KL(\phi||\varphi(\theta_s^+))
                + \KL(\phi||\varphi(\theta_s^-))
                - 2\widehat{\KL}(\phi||\varphi(\theta_s))
            \right]}_{\text{(II)}} \\
        &\qquad
        + \underbrace{2 \sum_{s=1}^S \left[
                \widehat{\KL}(\phi||\varphi(\theta_s))
                - \E_s [\widehat{\KL}(\phi||\varphi(\theta_s))]
            \right]}_{\text{(III)}}
        + \underbrace{2 \sum_{s=1}^S \left[
                \E_s [\widehat{\KL}(\phi||\varphi(\theta_s))]
                - \widehat{\KL}(\phi||\varphi(\theta^*_\delta))
            \right]}_{\text{(IV)}}\\
        &\qquad
        + \underbrace{2 \sum_{s=1}^S \left[         \widehat{\KL}(\phi||\varphi(\theta^*_\delta))
                - {\KL}(\phi||\varphi(\theta^*_\delta))
            \right]}_{\text{(V)}}
        + \underbrace{2 \sum_{s=1}^S \left[
                {\KL}(\phi||\varphi(\theta^*_\delta))
                - {\KL}(\phi||\varphi(\theta^*))
            \right]}_{\text{(VI)}} \\
            &\leq \underbrace{
                2\epsilon_\KL S
            }_{\substack{
                \text{(I), w.p. $1-2 S p_\KL$} \\
                \text{(Assumption \ref{ass:kl-oracle})}
            }}
        \qquad + \underbrace{
                4\delta L_\KL S
            }_{\substack{
                \text{(II), w.p. $1$} \\
                \text{(\Cref{claim:regret-from-smoothing-over-sphere-or-ball})}
            }}
        \qquad + \underbrace{
                2F_\KL \sqrt{S \log\frac{1}{p_2}}
            }_{\substack{
                \text{(III), w.p. $1-2p_2$} \\
                \text{(\Cref{claim:deviation-of-smoothed-function})}
            }} \\
        &\qquad 
        + \underbrace{
                \frac{2D_\Theta^2}{\eta_\LM}
                + 2 \eta_\LM d_\Theta L_\KL^2 S
                + \frac{2 D_\Theta L_\KL \epsilon_\KL d_\Theta S}{\delta_\LM}
            }_{\substack{
                \text{(IV), w.p. $1-2 S p_\KL$} \\
                \text{(Similar argument as \Cref{lemma:high-probability-expected-suboptimality-under-smoothing})}
            }}
        \qquad + \underbrace{
                2\delta_\LM L_\KL S
            }_{\substack{
                \text{(V), w.p. $1$} \\
                \text{(\Cref{claim:regret-from-smoothing-over-sphere-or-ball})}
            }}
        \qquad + \underbrace{
                2\delta_{\LM} D_\Theta L_\KL S
            }_{\substack{
                \text{(VI), w.p. $1$} \\
                \text{(\Cref{claim:regret-from-projection})}
            }}
\end{align*}
Similar to \Cref{algorithm:minimize-indirectly-convex-function}, we set $\delta_\LM = \sqrt{\epsilon_\KL d_\Theta} $, $\eta_\LM = 1/\sqrt{d_\Theta S}$. For any $p_2 = p'/2 >0$, it yields
\begin{align*}
    R_S(\learnmodel, \KL)
    = O\left(
            \sqrt{d_\Theta S}
            + \sqrt{\epsilon_\KL d_\theta}S
            + \sqrt{S \log\frac{1}{p}}
        \right)
\end{align*}
with probability $1 - p' - 2S p_\KL$ > 0.

\end{proof}

\section{Omitted Proof for \Cref{sec:putting-things-together}}
\label{sec:proof-put-things-together}
We start with leveraging \Cref{theorem:regret-bound-for-indirectly-convex-functions}
to show the following convergence guarantee for $\minimizePR$ (\Cref{algorithm:minimize-indirectly-convex-function}).

\begin{claim}[Convergence of $\minimizePR$]
\label{claim:convergence-of-minimizePR}
Given any $\epsilon, p > 0$, $\minimizePR$ outputs an $\epsilon$-suboptimal solution for $\PR(\theta)$ with probability at least $1-p$. Moreover, $\minimizePR$ runs for $T = O(d_\Phi/\epsilon^2)$ steps and performs $O(d_\Phi/\epsilon^2)$ queries to $\estimatePR$, as well as $O(d_\Phi/\epsilon^2)$ queries to $\learnmodel$ with $\epsilon_\LM = O(\epsilon^2)$ and $p_\LM = O(\epsilon^2 p/d_\Phi)$.
\end{claim}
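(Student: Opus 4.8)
The plan is to combine the high-probability regret bound of \Cref{theorem:regret-bound-for-indirectly-convex-functions} with the convexity of $\PRdagger$ in the distribution parameter — essentially \Cref{proposition:sublinear-regret-implies-convergence} carried out in $\Phi$ rather than in $\Theta$ — and then to transport the resulting guarantee back to the model space via \Cref{claim:deviation-of-PRdagger-due-to-error-of-learnmodel}.

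First I would fix the parameters. Recall that \Cref{algorithm:minimize-indirectly-convex-function} sets $\delta = \sqrt{\epsilon_\LM \distributionDimen}$ and $T = \distributionDimen/(\epsilon - \sqrt{\epsilon_\LM \distributionDimen})^2$. Choosing $\epsilon_\LM$ small enough that $\sqrt{\epsilon_\LM \distributionDimen} \le \epsilon/2$ — for example $\epsilon_\LM = \Theta(\epsilon^2/\distributionDimen)$, which is in particular $O(\epsilon^2)$ — forces $T = O(\distributionDimen/\epsilon^2)$. Each of the $T$ outer iterations issues two calls to $\learnmodel$ and two to $\estimatePR$, plus the single final call $\bar\theta \gets \learnmodel(\bar\phi,\epsilon_\LM,p_\LM)$, so there are $2T+1 = O(\distributionDimen/\epsilon^2)$ queries to $\learnmodel$ and $2T = O(\distributionDimen/\epsilon^2)$ to $\estimatePR$. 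For the probability bookkeeping, \Cref{theorem:regret-bound-for-indirectly-convex-functions} already absorbs the $2T$ in-loop $\learnmodel$ calls into its $1-p'-2Tp_\LM$ guarantee; the final call contributes one more $p_\LM$. Taking $p' = p/2$ and $p_\LM = p/(2(2T+1)) = O(\epsilon^2 p/\distributionDimen)$ then makes the overall failure probability at most $p$.

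Next comes the regret-to-suboptimality conversion. From \Cref{theorem:regret-bound-for-indirectly-convex-functions}, $\calR_T(\minimizePR,\PR) = O\big(\sqrt{\distributionDimen T} + \sqrt{\epsilon_\LM \distributionDimen}\,T + \sqrt{T\log(1/p')}\big)$. The point is that the seven-term decomposition in the proof of that theorem writes $\calR_T$ as the averaged optimization regret $2\sum_t\big(\PRhat^\dagger(\phi_t) - \PRhat^\dagger(\phi^*_\delta)\big)$ plus estimation, $\learnmodel$, smoothing, and projection errors that are each controlled at the stated rate. Applying convexity of $\PRhat^\dagger$ (inherited from $\PRdagger$ under \Cref{ass:PRdagger-convex-lipschitz}) and Jensen's inequality to the iterates $\phi_1,\dots,\phi_T$ gives $\PRhat^\dagger(\bar\phi) - \PRhat^\dagger(\phi^*_\delta) \le \tfrac1T\sum_t\big(\PRhat^\dagger(\phi_t) - \PRhat^\dagger(\phi^*_\delta)\big)$; undoing the $\delta$-smoothing ($|\PRhat^\dagger - \PRdagger| \le \delta L^\dagger$) and the projection onto $(1-\delta)\Phi$ (cost $O(\delta L^\dagger)$ up to the $\distributionDiameter$ factor), and using $\PRdagger(\phi_\OPT) = \PR(\vartheta^*(\phi_\OPT)) = \PR(\theta_\OPT)$, yields $\PRdagger(\bar\phi) - \PR(\theta_\OPT) = O\big(\sqrt{\distributionDimen/T} + \sqrt{\epsilon_\LM \distributionDimen} + \sqrt{\log(1/p')/T}\big) \le \epsilon/2$ for the chosen $T$ and $\epsilon_\LM$ (suppressing the $\sqrt{\log(1/p')}$ factor into the constant, consistently with the $O(\cdot)$ in the statement). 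Finally, \Cref{claim:deviation-of-PRdagger-due-to-error-of-learnmodel} applied to $\bar\theta = \learnmodel(\bar\phi,\epsilon_\LM,p_\LM)$ gives $|\PR(\bar\theta) - \PRdagger(\bar\phi)| \le L^\dagger \epsilon_\LM \le \epsilon/2$ with probability $\ge 1-p_\LM$, so on the good event $\PR(\bar\theta) - \PR(\theta_\OPT) \le \epsilon$, as required; the analogous bound for $\bar\phi$ is immediate since $\varphi(\theta_\OPT)$ minimizes $\PRdagger$.

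I expect the conversion step to be the main obstacle: one must re-run (not merely cite) the seven-term decomposition from the proof of \Cref{theorem:regret-bound-for-indirectly-convex-functions} so that dividing by $T$ genuinely isolates an averaged quantity to which Jensen applies, and then carefully re-accumulate the smoothing, projection, and $\learnmodel$ errors \emph{on top of} $\PRdagger(\bar\phi)$ so that the final bound is truly $\le \epsilon$ rather than merely $O(\epsilon)$ — in particular checking that the constant hidden in $T = \distributionDimen/(\epsilon - \sqrt{\epsilon_\LM \distributionDimen})^2$ is compatible with the choice of $\epsilon_\LM$, and that the stray $\sqrt{\log(1/p)}$ term is handled consistently with the $O(\cdot)$ in the statement. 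The remaining pieces — the query counts and the union bound over the $\learnmodel$ calls — are routine.
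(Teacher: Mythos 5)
Your proposal is correct and follows essentially the same route as the paper: choose $\epsilon_\LM = \Theta(\epsilon^2/d_\Phi)$ (the paper writes this as $\epsilon_\LM = 1/T$) and $p_\LM = O(p/T)$, invoke \Cref{theorem:regret-bound-for-indirectly-convex-functions} to get $\calR_T = O(\sqrt{d_\Phi T})$, and convert regret to $\epsilon$-suboptimality via the averaging argument of \Cref{proposition:sublinear-regret-implies-convergence}, giving $T = O(d_\Phi/\epsilon^2)$. You are in fact somewhat more careful than the paper, which simply cites \Cref{proposition:sublinear-regret-implies-convergence} without spelling out the transfer from the averaged distribution parameter $\bar\phi$ back to $\PR(\bar\theta)$ via the final $\learnmodel$ call and \Cref{claim:deviation-of-PRdagger-due-to-error-of-learnmodel}.
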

\begin{proof}[Proof of \Cref{claim:convergence-of-minimizePR}]

Choosing $\epsilon_\LM = 1/T$, $p_\LM = p/2T$, and $p' = p/2$, \Cref{theorem:regret-bound-for-indirectly-convex-functions} shows that $\minimizePR$ satisfies
\begin{align*}
    \calR_T(\minimizePR,\PR)
    = O\left(\sqrt{d_\Phi T}\right)
\end{align*}
with probability $1 - p$, using $2T$ queries to $\estimatePR$ and $2T$ queries to $\learnmodel$. By \Cref{proposition:sublinear-regret-implies-convergence}, $T = O(d_\Phi / \epsilon^2)$ steps suffice to output a model that is $\epsilon$-suboptimal with respect to $\PR$. Plugging in this bound on $T$ into the expressions for $\epsilon_\LM$ and $p_\LM$ above yields the result.
\end{proof}

Similarly, we have the convergence guarantee for $\learnmodel$ as well:

\begin{claim}[Convergence of $\learnmodel$]
    \label{claim:convergence-of-learnmodel-KL}
    Given any $\phi \in \Phi$ and $\epsilon_\LM, p_\LM> 0$, $\learnmodel$ outputs an $\epsilon_\LM$-suboptimal model for \Cref{eqn:kl-divergence}
    with probability at least $1 - p_\LM$. Moreover, $\learnmodel$ runs for $S = O(d_\Theta/\epsilon_\LM^2)$ steps and performs two queries to $\estimatekl$ per step with $N_\KL( \frac{\epsilon^2_\LM}{d_\theta},\frac{\epsilon^2_\LM p_\LM}{4d_\theta})$ samples per query.
\end{claim}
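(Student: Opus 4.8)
The plan is to repeat, with the obvious substitutions, the argument behind \Cref{claim:convergence-of-minimizePR}, now with $\learnmodel$ in place of $\minimizePR$, the objective $\KL(\phi||\varphi(\cdot))$ in place of $\PR$, and $\estimatekl$ in place of $\estimatePR$. First I would invoke \Cref{theorem:regret-bound-for-learnmodel}: running \Cref{algorithm:learnmodel} for $S$ steps while querying $\estimatekl$ with tolerance $\epsilon_\KL$ and failure probability $p_\KL$ gives, for every $p' > 0$,
\begin{align*}
    \calR_S(\learnmodel, \KL)
    = O\!\left(
        \sqrt{d_\Theta S}
        + \sqrt{\epsilon_\KL d_\Theta}\, S
        + \sqrt{S \log\tfrac{1}{p'}}
    \right)
\end{align*}
with probability at least $1 - p' - 2 S p_\KL$. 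I would then choose $\epsilon_\KL = 1/S$, $p_\KL = p_\LM/(4S)$ and $p' = p_\LM/2$; the middle term collapses to $O(\sqrt{d_\Theta S})$, the same order as the first, and a union bound gives $\calR_S(\learnmodel,\KL) = O\!\big(\sqrt{d_\Theta S} + \sqrt{S\log(1/p_\LM)}\big)$ with probability at least $1 - p_\LM$.

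Next I would convert this cumulative bound into a guarantee on the returned model $\bar\theta = \frac{1}{S}\sum_{s=1}^S \theta_s$, exactly as \Cref{claim:convergence-of-minimizePR} uses \Cref{proposition:sublinear-regret-implies-convergence}. By item~2a of \Cref{ass:kl-convex-lipschitz} the function $\theta \mapsto \KL(\phi||\varphi(\theta))$ is convex, and $\vartheta^*(\phi)$ is one of its minimizers with value $\KL(\phi||\varphi(\vartheta^*(\phi))) = 0$ (since $\varphi(\vartheta^*(\phi)) = \phi$, the fact already used inside the proof of \Cref{theorem:regret-bound-for-learnmodel}). Hence the averaging argument gives
\begin{align*}
    \KL(\phi||\varphi(\bar\theta))
    &= \KL(\phi||\varphi(\bar\theta)) - \KL(\phi||\varphi(\vartheta^*(\phi))) \\
    &\leq \frac{\calR_S(\learnmodel,\KL)}{S}
    = O\!\left(\sqrt{\tfrac{d_\Theta}{S}} + \sqrt{\tfrac{\log(1/p_\LM)}{S}}\right).
\end{align*}
Forcing the right-hand side to be at most $\epsilon_\LM$ requires $S = O(d_\Theta/\epsilon_\LM^2)$ (suppressing the $\log(1/p_\LM)$ factor), which matches the step count $S = d_\Theta/(\epsilon_\LM - \sqrt{\epsilon_\KL d_\Theta})^2$ hard-coded in \Cref{algorithm:learnmodel}: with $\epsilon_\KL = 1/S$ one has $\sqrt{\epsilon_\KL d_\Theta} = \sqrt{d_\Theta/S} = \Theta(\epsilon_\LM)$, so the defining equation for $S$ closes consistently at order $d_\Theta/\epsilon_\LM^2$.

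Finally I would back-substitute $S = \Theta(d_\Theta/\epsilon_\LM^2)$ into the oracle parameters: $\epsilon_\KL = 1/S = \Theta(\epsilon_\LM^2/d_\Theta)$ and $p_\KL = p_\LM/(4S) = \Theta(\epsilon_\LM^2 p_\LM/d_\Theta)$, so each of the $2S$ queries to $\estimatekl$ uses $N_\KL\!\big(\tfrac{\epsilon_\LM^2}{d_\Theta},\, \tfrac{\epsilon_\LM^2 p_\LM}{4 d_\Theta}\big)$ samples, which is exactly the claimed count. I expect the only genuinely delicate point to be the probability bookkeeping combined with the mild circular dependence ($\epsilon_\KL$ is set from $S$, while $S$ is set from $\epsilon_\LM$): one must check that the $\Theta(S)$ possible failures of $\estimatekl$ (each of probability $p_\KL$) together with the Azuma/Hoeffding deviation events are all already inside the event of \Cref{theorem:regret-bound-for-learnmodel}, so that a single union bound with $p' = p_\LM/2$ and $2Sp_\KL = p_\LM/2$ delivers total failure probability at most $p_\LM$ with no further loss, and that plugging the resulting $S$ back into $\epsilon_\KL$ does not change the order of $S$. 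Everything else is a verbatim transcription of the $\minimizePR$ convergence proof.
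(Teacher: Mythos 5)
Your proposal is correct and follows essentially the same route as the paper's own (much terser) proof: choose $\epsilon_\KL = 1/S$, $p_\KL = p_\LM/(4S)$, $p' = p_\LM/2$ in \Cref{theorem:regret-bound-for-learnmodel} to get $\calR_S(\learnmodel,\KL) = O(\sqrt{d_\Theta S})$ with probability at least $1-p_\LM$, then apply \Cref{proposition:sublinear-regret-implies-convergence} (the averaging argument, using that $\KL(\phi||\varphi(\cdot))$ is convex with minimum value $0$ at $\vartheta^*(\phi)$) to conclude $S = O(d_\Theta/\epsilon_\LM^2)$ and back-substitute into $\epsilon_\KL$ and $p_\KL$. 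Your write-up actually spells out the averaging step and the self-consistency of setting $\epsilon_\KL$ from $S$ more carefully than the paper does, and it correctly reads $p_\KL = p_\LM/(4S)$ where the paper's final line contains the typo $p_\KL = \tfrac{1}{4Sp_\LM}$.
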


\begin{proof}[Proof of \Cref{claim:convergence-of-learnmodel-KL}]
Choosing $\epsilon_\KL = 1/S$, $p_\KL = p_\LM/4S$ and $p' = p_\LM/2$, \Cref{theorem:regret-bound-for-learnmodel} shows that $\learnmodel$ satisfies
\begin{align*}
     \calR_S(\learnmodel,\KL)
    = O\left(
        \sqrt{d_\Phi S}
    \right)
\end{align*}
By \Cref{proposition:sublinear-regret-implies-convergence}, $S = O(d_\Theta / \epsilon_\LM^2)$ steps suffice to output a model that is $\epsilon_\LM$-suboptimal with respect to $\KL$; thus we have $\epsilon_\KL = \frac{\epsilon_\LM^2}{d_\Theta}$, $p_\KL = \frac{1}{4Sp_\LM}$. In total, $\learnmodel$ makes $2S$ queries to $\estimatekl$ with $N_\KL(\frac{\epsilon_\LM^2}{d_\Theta}, \frac{\epsilon^2_\LM p_\LM}{4d_\theta})$ samples per query.
\end{proof}

Now are are ready to prove \Cref{theorem:total-regret}. Recall the theorem statement:

\totalregret*

\begin{proof}[Proof of \Cref{theorem:total-regret}]
\label{proof:total-regret}
Let $T$ be the number of steps executed by $\minimizePR$, and $S$ the number of steps in $\learnmodel$. Let $N_\KL(\epsilon_\KL, p_\KL)$ (or $N_\KL$ for short) denote the number of samples used by $\estimatekl(\cdot, \cdots, \epsilon_\KL, p_\KL)$. Since $\minimizePR$ calls $\estimatePR$ and $\learnmodel$ $2T$ times, and $\learnmodel$ calls $\estimatekl$ $2S$ times, the overall number of samples is $N = 2(2N_\KL S + 1)T$.

Let $\theta_{t,s}^+, \theta_{t,s}^-$ denote the models deployed by $\estimatekl$ in the $s$-th step of $\learnmodel$ within the $t$-th step of $\minimizePR$, obtaining samples $z_{t,s,1}^+,\ldots,z_{t,s,N_\KL}^+$ and $z_{t,s,1}^-,\ldots,z_{t,s,N_\KL}^-$, respectively. Similarly, let $\hat{\theta}_t^+, \hat{\theta}_t^-$ denote the models deployed by $\estimatePR$ in the $t$-th step of $\minimizePR$, obtaining samples $\hat{z}_t^+, \hat{z}_t^-$.

The total regret can be written as
\begin{align*}
    &\quad \calR_N(\minimizePR, \PR) \\
    &= \sum_{t=1}^T \left(
            \ell(\hat{z}_t^+; \hat{\theta}_t^+)
            + \ell(\hat{z}_t^-; \hat{\theta}_t^-)
            - 2\PR(\theta^*)
            + \sum_{s=1}^S \sum_{i=1}^{N_\KL} \left[
                \ell(z_{t,s,i}^+; \theta_{t,s}^+)
                + \ell(z_{t,s,i}^-; \theta_{t,s}^-)
                - 2\PR(\theta^*)
            \right]
        \right) \\
    &= \underbrace{
            \sum_{t=1}^T \left(
                \ell(\hat{z}_t^+; \hat{\theta}_t^+)
                - \PR(\hat{\theta}_t^+)
                + \ell(\hat{z}_t^-; \hat{\theta}_t^-)
                - \PR(\hat{\theta}_t^-)
                + \sum_{s=1}^S \sum_{i=1}^{N_\KL} \left[
                    \ell(z_{t,s,i}^+; \theta_{t,s}^+)
                    - \PR(\theta_{t,s}^+)
                    + \ell(z_{t,s,i}^-; \theta_{t,s}^-)
                    - \PR(\theta_{t,s}^-)
                \right]
            \right)
        }_{n \text{ difference terms with expectation zero}} \\
        &\qquad \qquad
        + \sum_{t=1}^T \left(
            \PR(\hat{\theta}_t^+)
            + \PR(\hat{\theta}_t^-)
            - 2\PR(\theta^*)
            + \sum_{s=1}^S \sum_{i=1}^{N_\KL} \left[
                \PR(\theta_{t,s}^+)
                + \PR(\theta_{t,s}^-)
                - 2\PR(\theta^*)
            \right]
        \right) \\
    &= O\left(\sqrt{N}\right)
        + \sum_{t=1}^T \left(
            \PR(\hat{\theta}_t^+)
            + \PR(\hat{\theta}_t^-)
            - 2\PR(\theta^*)
            + \sum_{s=1}^S \sum_{i=1}^{N_\KL} \left[
                \PR(\theta_{t,s}^+)
                + \PR(\theta_{t,s}^-)
                - 2\PR(\theta^*)
            \right]
        \right)
        \tag{by Hoeffding's inequality, w.p. $1-p'$} \\
    &= O\left(\sqrt{N}\right)
        + \sum_{t=1}^T \left[
                \PR(\hat{\theta}_t^+)
                + \PR(\hat{\theta}_t^-)
                - 2\PR(\theta^*)
            \right] \\
        &\qquad \qquad
        + N_\KL \cdot \sum_{t=1}^T \sum_{s=1}^S \left[
                (\PR(\theta_{t,s}^+) + \PR(\theta_{t,s}^-))
                - (\PR(\hat{\theta}_t^+) + \PR(\hat{\theta}_t^-))
                + (\PR(\hat{\theta}_t^+) + \PR(\hat{\theta}_t^-))
                - 2\PR(\theta^*)
            \right] \\
    &= O\left(\sqrt{N}\right)
        + (N_\KL S + 1) \sum_{t=1}^T \left[
            \PR(\hat{\theta}_t^+)
            + \PR(\hat{\theta}_t^-)
            - 2\PR(\theta^*)
        \right]\\
       &\qquad \qquad
       + N_\KL \cdot \sum_{t=1}^T \sum_{s=1}^S \left[
            \PR(\theta_{t,s}^+)
            - \PR(\hat{\theta}_t^+)
            + \PR(\theta_{t,s}^-)
            - \PR(\hat{\theta}_t^-)
        \right] \\
    &= O\left(\sqrt{N}\right)
        + (N_\KL S + 1) \cdot \calR_T(\minimizePR, \PR)
        + N_\KL \cdot \sum_{t=1}^T \sum_{s=1}^S \left[
            \PR(\theta_{t,s}^+)
            - \PR(\hat{\theta}_t^+)
            + \PR(\theta_{t,s}^-)
            - \PR(\hat{\theta}_t^-)
        \right] \\
    &= O\left(\sqrt{N}\right)
        + (N_\KL S + 1) \cdot \calR_T(\minimizePR, \PR)
        + N_\KL \cdot \sum_{t=1}^T \sum_{s=1}^S \left[
            \PRdagger(\varphi(\theta_{t,s}^+))
            - \PR(\hat{\theta}_t^+)
            + \PRdagger(\varphi(\theta_{t,s}^-))
            - \PR(\hat{\theta}_t^-)
        \right] \\
    &= O\left(\sqrt{N}\right)
        + (N_\KL S + 1) \cdot \calR_T(\minimizePR, \PR) \\
        &\qquad \qquad
        + N_\KL \cdot \underbrace{
                \sum_{t=1}^T \sum_{s=1}^S \left[
                    \PRdagger(\varphi(\theta_{t,s}^+))
                    - \PRdagger(\phi_t^+)
                \right]
            }_{\text{(I)}}
            + N_\KL \cdot \underbrace{
                \sum_{t=1}^T \sum_{s=1}^S \left[
                    \PRdagger(\phi_t^+)
                    - \PR(\hat{\theta}_t^+)
                \right]
            }_{\text{(II)}} \\
        &\qquad \qquad
        + N_\KL \cdot \underbrace{
                \sum_{t=1}^T \sum_{s=1}^S \left[
                    \PRdagger(\varphi(\theta_{t,s}^-))
                    - \PRdagger(\phi_t^-)
                \right]
            }_{\text{(III)}}
            + N_\KL \cdot \underbrace{
                \sum_{t=1}^T \sum_{s=1}^S \left[
                    \PRdagger(\phi_t^-)
                    - \PR(\hat{\theta}_t^-)
                \right]
            }_{\text{(IV)}}
\end{align*}

Term (I) is:
\begin{align*}
    \sum_{t=1}^T \sum_{s=1}^S \left[
            \PRdagger(\varphi(\theta_{t,s}^+))
            - \PRdagger(\phi_t^+)
        \right]
    &\leq L^\dagger \cdot
        \sum_{t=1}^T \sum_{s=1}^S
            \left\|\varphi(\theta_{t,s}^+) - \phi_t^+\right\|
        \tag{Lipschitzness of $\PRdagger$} \\
    &\leq  L^\dagger \cdot
        \sum_{t=1}^T \sqrt{ S
            \sum_{s=1}^S
            \left(\left\|\varphi(\theta_{t,s}^+) - \phi_t^+\right\|^2 \right)} \tag{Cauchy-Schwarz}
            \\
    &= L^\dagger T \sqrt{S \sum_{s=1}^S L_{\theta}^2 {\KL(\phi_t^+ || \varphi(\theta_{t,s}^+))}} \tag{\Cref{lemma:lip-phi-in-KL}}
     \\
    &\leq L^\dagger L_\theta T \cdot \sqrt{S \cdot \calR_S(\learnmodel, \KL) }
\end{align*}

and term (III) is analogous. Term (II) is
\begin{align*}
    \sum_{t=1}^T \sum_{s=1}^S \left[
            \PRdagger(\phi_t^+)
            - \PR(\hat{\theta}_t^+)
        \right]
    &= S \cdot \sum_{t=1}^T \left[
            \PRdagger(\phi_t^+)
            - \PRdagger(\varphi(\hat{\theta}_t^+))
        \right] \\
    &\leq L^\dagger S \cdot \sum_{t=1}^T
        \left\|\phi_t^+ - \varphi(\hat{\theta}_t^+)\right\| \tag{Lipschitzness of $\PRdagger$} \\
    &\leq L^\dagger S \cdot \sum_{t=1}^T L_\theta \sqrt{\KL(\phi_t^+ ||\varphi(\hat{\theta}_t^+))}\tag{\Cref{lemma:lip-phi-in-KL}}\\
    &\leq L^\dagger L_\theta \cdot  S \cdot \sum_{t=1}^T 
            \sqrt{
            \frac{1}{S} \sum_{s=1}^S
            \KL(\phi_t^+||\varphi(\theta_{t,s}^+))
            }
        \tag{$\hat{\theta}_t^+ := \frac{1}{S}\sum_{s=1}^S \theta_{t,s}^+$, convexity of $\KL(\phi^+_t||\varphi(\theta))$} \\
    &\leq L^\dagger  L_\phi T S \sqrt{\frac{1}{S}\calR_S(\learnmodel,\KL) }\\
    &= L^\dagger L_\phi T\sqrt{S\cdot\calR_S(\learnmodel,\KL)}
\end{align*}

and term (IV) is analogous. In total we have
\begin{align*}
    & \calR_N(\minimizePR, \PR) \\
    &= O\left(
            \sqrt{N}
            + N_\KL T \cdot \sqrt{S \cdot \calR_S(\learnmodel, \KL)}
            + (N_\KL S + 1) \cdot \calR_T(\minimizePR, \PR)
        \right) \\
    &= N \cdot O\left(
            \frac{1}{\sqrt{N}}
            + \sqrt{\frac{\calR_S(\learnmodel, \KL)}{S}}
            + \frac{\calR_T(\minimizePR, \PR)}{T}
        \right)
        \tag{$n = 2(N_\KL 2S + 1)T$} \\
    &= N \cdot O\left(
            \frac{1}{\sqrt{N}}
            + \sqrt{
                    \sqrt{\frac{d_\Theta \log\frac{1}{p'}}{S}}
                    + \sqrt{\epsilon_\KL d_\Theta}
                }
            + \sqrt{\frac{d_\Phi \log\frac{1}{p''}}{T}}
            + \sqrt{\epsilon_\LM d_\Phi}
        \right)
        \tag{by \Cref{theorem:regret-bound-for-indirectly-convex-functions},\Cref{theorem:regret-bound-for-learnmodel}, w.p. to be analyzed later} \\
    &= N \cdot O\left(
            \left(\frac{d_\Theta}{S}\log\frac{1}{p'}\right)^{1/4}
            + (\epsilon_\KL d_\Theta)^{1/4}
            + \left(\frac{d_\Phi}{T}\log\frac{1}{p''}\right)^{1/2}
            + (\epsilon_\LM d_\Phi)^{1/2}
        \right)
        \tag{for $a,b \geq 0$, $\sqrt{a+b} \leq \sqrt{a} + \sqrt{b}$; $\frac{1}{\sqrt{n}} \leq \sqrt{\frac{d_\Phi}{T}}$} \\
    &\leq N \cdot \left(1 + \left(\log\frac{1}{p'}\right)^{1/4} + \left(\log\frac{1}{p''}\right)^{1/2}\right)
        \cdot O\left(
            \left(\frac{d_\Theta}{S}\right)^{1/4}
            + (\epsilon_\KL d_\Theta)^{1/4}
            + \left(\frac{d_\Phi}{T}\right)^{1/2}
            + (\epsilon_\LM d_\Phi)^{1/2}
        \right) \\
    &= N \cdot \left(1 + \left(\log\frac{1}{p'}\right)^{1/4} + \left(\log\frac{1}{p''}\right)^{1/2}\right)
        \cdot O\left(
            \left(\frac{d_\Theta}{S}\right)^{1/4}
            + (\epsilon_\KL d_\Theta)^{1/4}
            + \left(\frac{d_\Phi N_\KL S}{N}\right)^{1/2}
            + (\epsilon_\LM d_\Phi)^{1/2}
        \right)
        \tag{$T = \frac{N}{N_\KL S + 1}$}
\end{align*}

Choose $\epsilon_\LM = \left(\frac{N_\KL}{N}\right)^{1/3}$ and $\epsilon_\KL = \frac{1}{4d_\Theta}\left(\frac{N_\KL}{N}\right)^{2/3}$. 

To balance the terms, set the number of steps for the outer algorithm to be $T = \frac{d_\Phi}{(\epsilon - \sqrt{\epsilon_\LM d_\Phi})^2}$, and the number of steps in $\learnmodel$ to be
\begin{align*}
    S
    = \frac{d_\Theta}
        {\left(
            \epsilon_\LM
            - \sqrt{\epsilon_\KL d_\Theta}
        \right)^2}
    = 4d_\Theta \left(\frac{N}{N_\KL}\right)^{2/3}
\end{align*}

Plugging these expressions for $\epsilon_\KL$, $\epsilon_\LM$, and $S$ in above, we have
\begin{align*}
    \calR_n(\minimizePR, \PR)
    &= N \cdot \left(1 + \left(\log\frac{1}{p'}\right)^{1/4} + \left(\log\frac{1}{p''}\right)^{1/2}\right)
        \cdot O\left(
            (d_\Theta d_\Phi)^{1/2}
            \left(\frac{N_\KL}{N}\right)^{1/6}
        \right) \\
    &= O\left(
            \left(1 + \left(\log\frac{1}{p'}\right)^{1/4} + \left(\log\frac{1}{p''}\right)^{1/2}\right)
            (d_\Theta + d_\Phi)
            N_\KL^{1/6}
            N^{5/6}
        \right)
\end{align*}

We would like to ensure that this bound holds with probability $p>0$. To that end, observe that the probabilistic terms are the high-probability bounds on $\calR_S(\learnmodel,\KL)$ and $\calR_T(\minimizePR,\PR)$. By recalling  \Cref{theorem:regret-bound-for-indirectly-convex-functions} and \Cref{theorem:regret-bound-for-learnmodel}, the probability that any of these bounds fails is at most
\begin{align*}
    p' + Tp_\LM
    = p' + T(p'' + S p_\KL)
    = p' + Tp'' + STp_\KL
\end{align*}
for any $p', p'' > 0$. For a choice of $p' = p/3$, $p'' = p/3T$, and $p_\KL = \frac{p N_\KL}{3n}$, this is at most $p$ as required. Finally, plugging these choices into the above regret bound yields
\begin{align*}
    \calR_n(\minimizePR, \PR)
    &= O\left(
            \left(1 + \left(\log\frac{1}{p'}\right)^{1/4} + \left(\log\frac{1}{p''}\right)^{1/2}\right)
            (d_\Theta + d_\Phi)
            N_\KL^{1/6}
            N^{5/6}
        \right) \\
    &= O\left(
            \left(1 + \left(\log\frac{1}{p}\right)^{1/4} + \left(\log\frac{T}{p}\right)^{1/2}\right)
            (d_\Theta + d_\Phi)
            N_\KL^{1/6}
            N^{5/6}
        \right) \\
    &= O\left(
            \left(1 + \sqrt{\log\frac{1}{p}}\right)
            (d_\Theta + d_\Phi)
            N_\KL^{1/6}
            N^{5/6}
            \sqrt{\log N}
        \right)
        \tag{$T \leq N$}
\end{align*}
with probability at most $p$ as required.
\end{proof}

\section{Additional Related Work}
\label{sec:additional-related-work}
In this section, we provide additional related work in performative prediction and a detailed comparison of our work and some closely related work.

Performative prediction is a new type of supervised learning problem in which the underlying data distribution shifts in response to the deployed model \cite{perdomo2020performative,brown2020performative, drusvyatskiy2020stochastic,izzo2021learn,li2022state,maheshwari2022zeroth,ray2022decision, mofakhami2023performative}. It is also called the \emph{decision-dependent risk minimization} problem \cite{maheshwari2022zeroth, li2022multi, yuan2023learning}. In particular, \citet{perdomo2020performative} first propose the notion of the \emph{performative risk} defined as 
$\PR(\theta):= \E_{z \sim \calD(\theta)}[\ell(\theta;z)]$
where $\theta$ is the model parameter, and $\calD(\theta)$ is the induced distribution due to the deployment of $\theta$. 

One of the major focuses of performative prediction is to find the optimal model $\theta_{\textsf{OPT}}$ which achieves the minimum performative prediction risk:
$\theta_{\textsf{OPT}}:= \argmin_{\theta\in \Theta} \PR(\theta)$,
or performative stable model $\theta_{\textsf{ST}}$, which is optimal under its own induced distribution:
$\theta_{\textsf{ST}}:= \argmin_{\theta\in \Theta}\E_{z \sim \calD(\theta_{\textsf{ST}})}[\ell(\theta;z)]$
. In particular, one way to find a performative stable model $\theta_{\textsf{ST}}$ is to perform repeated retraining \cite{perdomo2020performative}. 

In order to get meaningful theoretical guarantees on any proposed algorithms, works in this field generally require particular assumptions on the mapping between the model parameter and its induced distribution (e.g., the smoothness of the mapping), 
or require multiple rounds of deployments and observing the corresponding induced distributions, which can be costly in practice \cite{jagadeesan2022regret, mendler2020stochastic}. A few recent works are on finding performative optimal solutions without explicitly making the convexity assumption. For example, \citet{dong2021approximate} does not explicitly convexity assumption, but they focus on optimization heuristics that are not guaranteed to minimize performative regret. below, we will provide the discussions for three of them. 

 In addition, minimizing the performative risk often requires knowing a specific model for the distribution map $\calD(\cdot)$ that can be fit. To ensure performative risk minimization is tractable, one also requires imposing structural assumptions on the distribution map. For example,  \citeauthor{izzo2021learn} makes parametric assumptions on $\calD(\theta)$ and assumes that $\calD(\theta)$ has a continuously differentiable density $p(z; \varphi(\theta))$, where $\varphi(\cdot): \Theta \rightarrow \Phi$ represents the mapping from the model parameter space $\Theta$ to the data distribution parameter space $\Theta$. \citet{miller2021outside} assume the underlying data distribution follows a location family distribution, and then impose a \emph{mixture dominance} assumption on the distribution map $\calD(\cdot)$ from which it follows that $\PR(\theta)$ is convex; this again leads to a gradient-based optimization algorithm. Similar work include \cite{mendler2020stochastic, izzo2021learn, drusvyatskiy2020stochastic,cutler2021stochastic}, to name a few. 

\paragraph{Comparison with \citet{miller2021outside}}

\citet{miller2021outside} identifies \emph{mixture dominance condition} for any particular model parameter pairs under which the performative risk is convexity. In particular, they posit a simple distribution map in which $\varphi(\theta) = \phi_0 + M\theta$, where $M \in \R^{d_\Phi \times d_\Theta}$ and $\phi_0 \in \Phi$ is some ``base'' distribution parameter; in other words, they assume that the data population reacts to a model by shifting each of their features according to some linear transformation of the model parameter. Their algorithm for this special case works in two stages: first estimating $\varphi_0$ and $M$ by deploying random models; then, once this distribution map has been accurately estimated, the performative loss is convex in $\theta$, and can be optimized offline. The distribution map estimation takes $O(d_\Theta/\epsilon)$ samples to obtain an $\epsilon$-suboptimal model.

\paragraph{Comparison with \citet{jagadeesan2022regret}}
For example, closely related is a recent paper that proposes using the Lipschitz bandit approach to solve the performative prediction problem \cite{jagadeesan2022regret}. The major differences between this work and their work are: first, we define the regret w.r.t $N$ rather than w.r.t $T$, which is a more realistic measure in the performative prediction setting; second, their regret has exponential dependency on the ``zooming dimension'' $d$ (which is roughly the model parameter $\modelDimen$), while our dependency on the model and distribution dimensions are both linear. 

\paragraph{Comparison with \citet{maheshwari2022zeroth}} Another closely related work is \citet{maheshwari2022zeroth} uses zeroth-order methods for the convex-concave minimax problem. Specifically, they proposed to formulate the performative prediction problem as the \emph{Wasserstein distributionally robust learning with decision-dependent data} problem, and further reduce it to a constrained finite-dimensional smooth convex-concave min-max problem, and propose a zeroth-order random reshuffling-based algorithm to solve the problem without assuming any other structure on the curvature of the min-max loss. Similar to ours, they also use the zeroth-order method to perform their optimization procedure; different from ours, they approach the performative prediction problem through the angle of robustness, which accounts for model misspecification in their analysis.

\paragraph{Other Aspects of performative prediction}  

Also related are the recently developed lines of work on the \emph{multiplayer} version of the performative prediction problem \cite{piliouras2022multi, narang2022multiplayer, li2022multi, foster2023complexity}. While existing strategic classification and performative prediction problems focus primarily on the interplay between a single learner and the population that reacts to the learner's actions, this line of work takes into account competition from multi-learners, and develop performatively stable equilibria and Nash equilibria of the game. Similarly, \cite{yuan2023learning} confront multiple interactive models in some dynamic environments. Another line of work is the economic aspects of performative prediction \cite{hardt2022performative, mendleranticipating}. From the optimization aspect, \citet{wood2022online} focuses on the optimization aspect of finding the performative optimal point and offers an online stochastic primal-dual algorithm for tracking equilibrium trajectories. 
Also related is the recent development of the concept called induced domain adaptation \cite{chen2023model}, whose primary focus is to study the \emph{transferability} of a particular model trained primarily on the source distribution and provide theoretical bounds on its performance on its induced distribution, which is helpful in estimating the effect of a given classifier when repeated retraining is unavailable.

\paragraph{Theoretical comparisons to some existing methods in the convex case}

When the problem reduces to a convex, differentiable Lipschitz case, our algorithm will be reduced to the convex case provided in \cref{sec: warmup} (the warm-up setting), which achieves a $\tilde{O}(\sqrt{d N})$ regret bound. This implies that our algorithm achieves a $\Delta$-suboptimal model with $O(d/\Delta^2)$ samples (see \cref{lemma:regret-bound-for-model-convex-performative-risk}).

Here, we compare the three papers that the reviewer mentioned:
\begin{itemize}
    \item \citet{izzo2021learn} focus on a single-distribution Gaussian distribution with a fixed variance setting while we cover a boarder range of settings. Their theoretical guarantee shows that the proposed method converges to a performative optimal point as the number of iterations $T\approx \sigma^{-4/5}$ where $\sigma$ bounds the output of PR from the PR of the optimal performative point. The sample required at each iteration is $O(1/\sigma^2\log T)$.
    \item \citet{miller2021outside} show that when the distribution maps $\mathcal D(\cdot)$ form a location-scale family and when the model dimension is $O(d)$, computing a $\Delta$-suboptimal classifier requires $O(d/\delta)$ samples. We do not require these assumptions.
    \item  \citet{perdomo2020performative} focus on achieving a performative stable point while we focus on attaining a performative optimal point.
\end{itemize}

\section{Plots For Empirical Results}
\label{sec:empirical}

\begin{figure}[h!]
    \centering
    \begin{subfigure}
        \centering
        \includegraphics[width=0.6\textwidth]{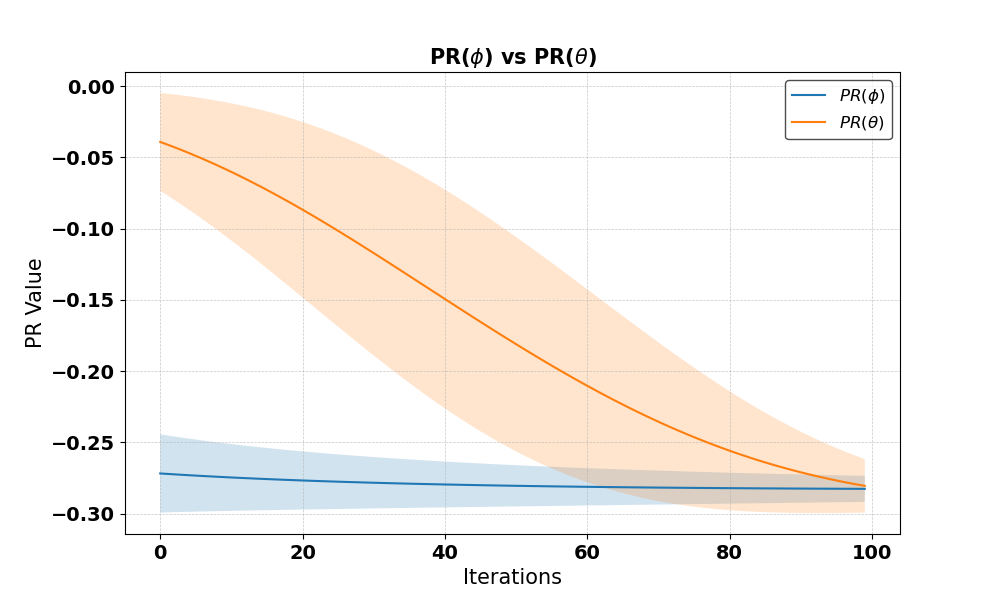}
        \caption{$\theta\in [0.2, 0.4]$}
    \end{subfigure}
    \begin{subfigure}
        \centering
         \includegraphics[width=0.6\textwidth]{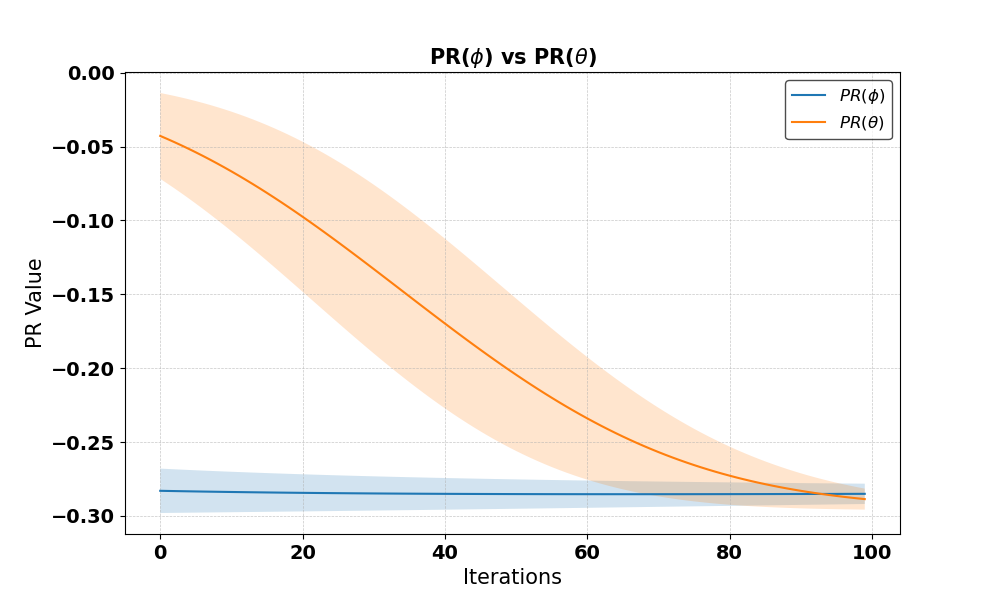}
        \caption{ $\theta\in [0.4, 0.8]$}
    \end{subfigure}
    \begin{subfigure}
        \centering
        \includegraphics[width=0.6\textwidth]{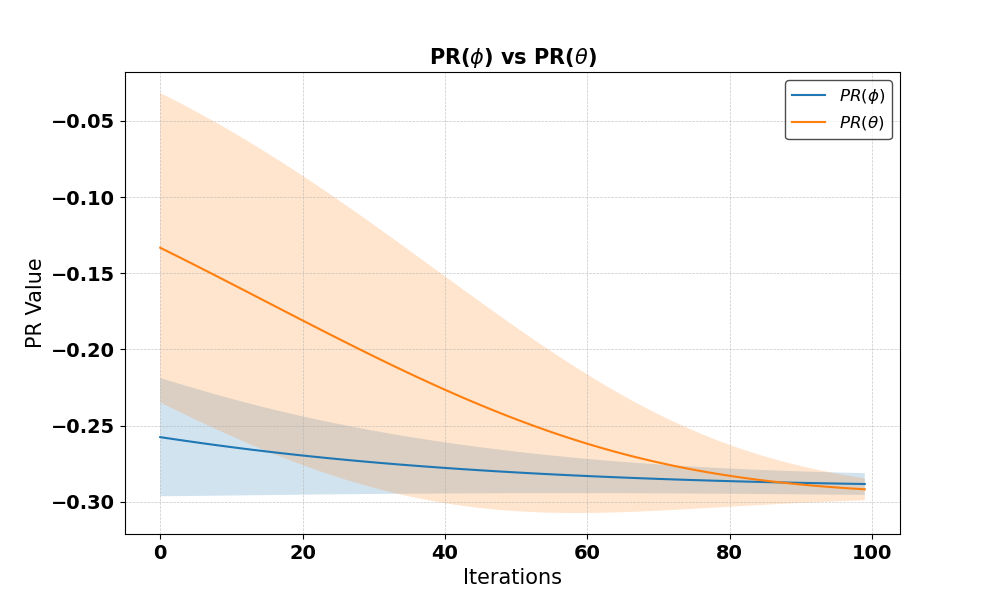}
        \caption{ $\theta\in [0.4, 0.8]$}
    \end{subfigure}
    \caption{Empirical results comparing baseline method (zeroth-order optimization without reparametrization, orange curve) vs. our
method (zeroth order optimization after reparametrization, (blue curve) based on \cref{example:mixture-dominance-too-strong-condition}.}
    \label{fig:lr-social-cost-diff}
\end{figure}


\end{document}